\documentclass[11pt]{article}

\usepackage[margin=1.02in]{geometry}
\usepackage[T1]{fontenc}
\usepackage{lmodern}
\usepackage{microtype}
\usepackage{amsmath,amssymb,amsfonts,amsthm,mathtools,bm}
\usepackage{booktabs,tabularx,array}
\usepackage{graphicx}
\usepackage{enumitem}
\usepackage{natbib}
\usepackage[hidelinks]{hyperref}
\usepackage[nameinlink,noabbrev]{cleveref}
\usepackage{xcolor}
\usepackage{setspace}
\usepackage{appendix}
\usepackage{placeins}

\graphicspath{{figures/}}
\allowdisplaybreaks
\newtheorem{theorem}{Theorem}[section]
\newtheorem{proposition}[theorem]{Proposition}
\newtheorem{corollary}[theorem]{Corollary}
\newtheorem{lemma}[theorem]{Lemma}
\newtheorem{assumption}[theorem]{Assumption}
\newtheorem{definition}[theorem]{Definition}
\newtheorem{remark}[theorem]{Remark}

\newcommand{\R}{\mathbb{R}}
\newcommand{\E}{\mathbb{E}}
\newcommand{\Pp}{\mathbb{P}}
\newcommand{\cF}{\mathcal{F}}

\newcommand{\cX}{\mathcal{X}}

\newcommand{\ip}[2]{\left\langle #1,#2\right\rangle}
\newcommand{\norm}[1]{\left\lVert #1\right\rVert}
\newcommand{\abs}[1]{\left\lvert #1\right\rvert}
\newcommand{\Cov}{\operatorname{Cov}}
\newcommand{\Var}{\operatorname{Var}}
\newcommand{\diag}{\operatorname{diag}}
\newcommand{\tr}{\operatorname{tr}}

\newcommand{\as}{\mathrm{a.s.}}
\newcommand{\trans}{\mathsf T}
\newcommand{\eps}{\varepsilon}
\newcommand{\NAG}{\mathrm{NAG}}

\title{\textbf{Geometric Moment Contraction for Stochastic Nesterov Acceleration:}\\
Stationarity, Heavy Tails, and Initialization-Robust Limit Theory}
\author{Wei Biao Wu}
\date{Revised manuscript -- September 25, 2026}

\hypersetup{
  pdftitle={Geometric Moment Contraction for Stochastic Nesterov Acceleration},
  pdfauthor={Wei Biao Wu},
  pdfsubject={Stationarity and limit theory for constant-parameter stochastic Nesterov acceleration},
  pdfkeywords={Nesterov acceleration, stochastic gradient descent, geometric moment contraction, stationary distribution, heavy tails, invariance principle}
}

\begin{document}
\maketitle

\begin{abstract}
We study geometric moment contraction (GMC) of the constant-parameter stochastic Nesterov recursion
\[
Y_k=\Theta_k+\beta(\Theta_k-\Theta_{k-1}),\qquad
\Theta_{k+1}=Y_k-\gamma G(Y_k,X_{k+1}).
\]
Under mean strong monotonicity and stochastic $L^p$ Lipschitz continuity, an explicit Perron comparison proves synchronous $L^p$ contraction when
$\beta\gamma L_p<(1-\beta)(1-q_{\gamma,p})$.
This direct criterion includes infinite-variance gradients for $1<p<2$, but its small-step regime requires $\beta<\mu/(\mu+L_p)$.  A complementary power-Lyapunov argument establishes a positive, generally much smaller, step-size interval for every fixed $\beta<1$ and every $p>1$, using only a finite $p$th gradient moment.  At $p=2$, a simpler explicit certificate gives
\[
0<\gamma<\frac{2\mu(1-\beta)^2}{L_2^2(1-\beta+2\beta^2)}.
\]
Its quadratic high-momentum scaling is a limitation of the chosen metric, not a sharp stability boundary.  We quantify this loss, provide a general mean-only quadratic $S$-procedure, and exploit endpoint Lyapunov inequalities under stronger samplewise sector information.  Verified endpoint certificates can be orders of magnitude less conservative than the explicit metric.

The resulting GMC bounds yield a causal stationary law, exponential physical dependence, and limit theory for the full observed orbit from every deterministic initial pair.  We state moment and almost-sure rates, a central limit theorem, and an $o_P(n^{1/p})$ Gaussian approximation under the corresponding order-$p>2$ contraction.  An explicit stationary second-moment bound has scale $\gamma/(1-\beta)$ in its stated small-step regime.  Exact additive-quadratic formulas give the Schur region, stationary variance, momentum-invariant long-run covariance, and stable limits.  An exact scalar random-curvature benchmark exhibits constant, square-root, and linear high-momentum step-size scales, separated by $L_2^2=3\mu^2$.  Numerical experiments distinguish valid certificates from observed rates and cover random curvature, infinite-variance noise, initialization effects, and long-run variance.
\end{abstract}

\noindent\textbf{Keywords:} stochastic Nesterov acceleration; geometric moment contraction; iterated random functions; stationary distribution; heavy-tailed gradients; physical dependence; initialization-robust central limit theorem; Gaussian approximation.

\noindent\textbf{MSC 2020:} 60G10; 60F05; 60F17; 62L20; 90C15.

\section{Introduction}
\label{sec:intro}

Nesterov's accelerated gradient method is one of the foundational constructions in first-order optimization.  In the deterministic smooth convex setting, extrapolation changes the worst-case iteration complexity from the gradient-descent scale to the accelerated scale \citep{Nesterov1983,Nesterov2004}.  The method has inspired estimate-sequence analyses, continuous-time limits, variational formulations, and control-theoretic Lyapunov certificates \citep{SuBoydCandes2016,WibisonoWilsonJordan2016,LessardRechtPackard2016,HuLessard2017}.  Stochastic versions are widely used, but their behavior differs sharply from their deterministic counterpart: persistent gradient noise creates a nondegenerate steady state, acceleration can amplify variance, and parameter values that are benign for ordinary SGD may destabilize Nesterov's method \citep{CohenDiakonikolasOrecchia2018,CanGurbuzbalabanZhu2019,AssranRabbat2020,LiuBelkin2020,GaneshEtAl2023,AttiaKoren2021}.

We study the constant-parameter recursion
\begin{equation}
\begin{split}
  Y_k&=\Theta_k+\beta(\Theta_k-\Theta_{k-1}),\\
  \Theta_{k+1}&=Y_k-\gamma G(Y_k,X_{k+1}),
\end{split}
\qquad k\geq0,
\label{eq:nag}
\end{equation}
where $0\leq\beta<1$, $\gamma>0$, $G(\theta,x)=\nabla_\theta g(\theta,x)$, and $(X_k)$ is an i.i.d. data stream.  This is the standard look-ahead form analyzed, for example, by \citet{AssranRabbat2020}.  Some alternative indexing and velocity conventions are linearly equivalent to \eqref{eq:nag} for constant parameters.  We study this specific look-ahead recursion; results are not asserted for every algorithm marketed as Nesterov momentum.

With constant $\gamma$, the iterates generally do not converge to the optimizer.  The natural object is instead the invariant law of the augmented state $(\Theta_k,\Theta_{k-1})$.  Existing work gives important optimization-error, Wasserstein, local-stability, and decreasing-step asymptotics.  \citet{CanGurbuzbalabanZhu2019} prove convergence toward a unique invariant distribution in Wasserstein distance under structured stochastic-oracle conditions; \citet{AssranRabbat2020} analyze the stochastic steady-state neighborhood; \citet{GitmanLangZhangXiao2019} characterize local stationary behavior in a unified momentum family; and \citet{BarakatEtAl2021} study convergence, fluctuations, and trap avoidance for decreasing-step momentum algorithms, including stochastic Nesterov acceleration.  Stochastic Lyapunov and robust acceleration results include \citet{LabordeOberman2020,AybatFallahGurbuzbalabanOzdaglar2019,AybatFallahGurbuzbalabanOzdaglar2020,GuptaSiegelWojtowytsch2024,YuChenFeng2026}.  Our focus is a complementary, explicitly verified synchronous $L^p$ coupling for the nonlinear augmented state, together with the statistical consequences of its dependence structure.

Our starting point is the geometric moment contraction (GMC) theory for ordinary constant-step SGD.  If
\[
 F_x(\theta)=\theta-\gamma G(\theta,x),
\]
recent work establishes explicit conditions under which
\begin{equation}
 \norm{F_X(\theta)-F_X(\vartheta)}_p
 \leq q_{\gamma,p}\abs{\theta-\vartheta},
 \qquad q_{\gamma,p}<1,
 \label{eq:intro-base}
\end{equation}
for every $p>1$ \citep{LiLouRichterWu2026}.  This is a coupling statement in Euclidean $L^p$, not merely convergence of marginal laws.  It yields a causal stationary process, exponential forgetting of deterministic initialization, physical-dependence bounds, and limit theory for the actually observed nonstationary orbit.  The case $1<p<2$ is particularly important because it permits infinite gradient variance.

Extending \eqref{eq:intro-base} to \eqref{eq:nag} is not formal.  Nesterov evaluates the gradient at the extrapolated point $Y_k$ but carries the iterate $\Theta_k$ as a second state.  If two trajectories share the same innovations, their extrapolated-point and increment differences feed into each other.  Componentwise absolute values lose the negative-sign cancellation that stabilizes large momentum.  We therefore separate three constructions: a Perron-weighted product bound with a restrictive momentum range, a quadratic energy with a simple but conservative all-momentum mean-square region, and powers of that energy giving finite-$p$-moment contraction at arbitrary fixed momentum.  General quadratic certificates and exact linear benchmarks make the conservatism visible rather than attributing it entirely to stochastic curvature.

\subsection{Main contributions}

Let $q_{\gamma,p}$ be the ordinary-SGD coefficient in \eqref{eq:intro-base} and let $L_p$ be the stochastic $L^p$ Lipschitz constant of $G$.  The paper makes the following contributions.

\begin{enumerate}[label=\textup{(\roman*)},leftmargin=2.3em]
\item \textbf{Direct $L^p$ GMC, including infinite variance.}
For synchronous Nesterov trajectories, define the extrapolated-point difference $y_k$ and increment difference $s_k$.  Their $L^p$ norms satisfy a two-dimensional nonnegative comparison recursion with matrix
\begin{equation}
 M^{\NAG}_{\gamma,\beta,p}
 =\begin{pmatrix}
 q_{\gamma,p}+\beta\gamma L_p & \beta^2\\
 \gamma L_p & \beta
 \end{pmatrix}.
 \label{eq:intro-matrix}
\end{equation}
Its Perron root is strictly below one if and only if
\begin{equation}
 \beta\gamma L_p<(1-\beta)(1-q_{\gamma,p}).
 \label{eq:intro-perron}
\end{equation}
We give the exact Perron root of this \emph{comparison matrix}, not the exact contraction rate of the algorithm, and a positive left-Perron weight.  The result applies for every $p>1$, including $1<p<2$, but its small-step regime requires $\beta<\mu/(\mu+L_p)\leq1/2$.  The power-Lyapunov theorem in Section~\ref{sec:power-extension} removes this qualitative momentum restriction at the price of a substantially smaller explicit step-size interval.

\item \textbf{Mean-square GMC for every momentum value.}
For $p=2$, a transformed slow coordinate
\[
 u_k=x_k+\frac{\beta}{1-\beta}s_k
\]
obeys the exact identity $u_{k+1}=u_k-\gamma h_{k+1}/(1-\beta)$, where $h_{k+1}$ is the synchronous gradient difference at the look-ahead point.  The quadratic energy
\[
 V_k=\abs{u_k}^2+\frac{\beta^2}{(1-\beta)^2}\abs{s_k}^2
\]
then gives GMC for every $0\leq\beta<1$ under
\begin{equation}
 0<\gamma<
 \frac{2\mu(1-\beta)^2}
 {L_2^2\{1-\beta+2\beta^2\}}.
 \label{eq:intro-allbeta}
\end{equation}
Only the mean field is required to be strongly monotone; individual sample losses may be nonconvex.  The displayed weight optimizes the \emph{derived sufficient condition} within $\abs{u}^2+a\abs{s}^2$ with this slow coordinate held fixed.  It is not an optimum over all quadratic metrics, and the $(1-\beta)^2$ scaling is not intrinsic stability scaling.  The coordinate is the standard averaged-sequence transformation; the contribution is its use in a synchronous contraction metric and the explicit weight calculation.

\item \textbf{Pathwise and LMI refinements.}
Under samplewise Hessian-sector bounds, the same energy contracts pathwise and therefore in every $L^p$ for which the anchor has a finite moment.  A common quadratic Lyapunov inequality need only be checked at the two sector endpoints, by the standard polytopic vertex argument.  We implement the test and publish positive-definite matrices with checked residuals.  A separate $S$-procedure uses only mean monotonicity and second-moment Lipschitz information, without samplewise convexity.

\item \textbf{Stationarity and statistical limit theory.}
The contraction implies a unique causal stationary state, finite moments, exponential replacement-of-the-past GMC, and exponentially decaying physical dependence.  We transfer established causal-process moment and almost-sure theorems to averaged Nesterov iterates, obtain a central limit theorem valid from every deterministic pair $(\Theta_{-1},\Theta_0)$, and obtain an $o_P(n^{1/p})$ Gaussian approximation under an order-$p>2$ contraction.  A mean-square certificate alone does not imply the higher-moment coupling at the same step size.  We call these results \emph{initialization-robust} or \emph{initial-state quenched}: no random stationary initialization is imposed on the observed recursion.

\item \textbf{Stationary scale, invariant bias, and exact quadratic benchmarks.}
We make the momentum dependence explicit: in the specified small-step regime, the stationary second moment is bounded by a stated constant times $\gamma/(1-\beta)$, and the invariant-mean bias has the same bound under a globally Lipschitz Jacobian.  In particular, these orders are $O(\gamma)$ for fixed $\beta$.  In the additive quadratic model we derive the exact stability region, stationary variance, and long-run covariance.  The latter equals $H^{-1}\Omega H^{-1}$ and is independent of both $\gamma$ and $\beta$.  Under regularly varying innovations we also obtain an explicit stable limit for partial sums.  An exactly solvable random-curvature example gives a necessary upper envelope for any universal mean-only certificate and shows three distinct high-momentum regimes.
\end{enumerate}

\subsection{Positioning relative to the literature}

\paragraph{Momentum and acceleration.}
The deterministic heavy-ball and accelerated-gradient methods originate in \citet{Polyak1964,Nesterov1983}; standard treatments include \citet{Polyak1987,Nesterov2004}.  Momentum became central in large-scale learning after the empirical studies of \citet{SutskeverEtAl2013}, and a broad family of variants is now represented by stochastic heavy ball, Nesterov momentum, quasi-hyperbolic momentum, and unified momentum schemes \citep{YangLinLi2016,YanEtAl2018,MaYarats2019,GitmanLangZhangXiao2019,LoizouRichtarik2020,WangYurtsever2026}.  Deterministic analyses based on estimate sequences, ordinary differential equations, variational principles, memory models, and integral quadratic constraints include \citet{GhadimiFeyzmahdavianJohansson2015,SuBoydCandes2016,WibisonoWilsonJordan2016,LessardRechtPackard2016,HuLessard2017,OrvietoKohlerLucchi2020,PapazovEtAl2024}.  These works explain acceleration of optimization transients, whereas the present paper studies contraction and stationary probability for a permanently noisy, fixed-parameter recursion.

\paragraph{Stochastic Nesterov and momentum dynamics.}
Early two-step stochastic-approximation analysis goes back to \citet{Kaniovski1983}.  Modern results study optimization error, almost-sure convergence, delayed supermartingale arguments, diffusion limits, and local stationary behavior for heavy-ball or generalized momentum algorithms \citep{YuanYingSayed2016,GadatPanloupSaadane2018,LiuGaoYin2020,SebbouhGowerDefazio2021,JinEtAl2022,Zhang2024Delayed,LiuChenZhouZhao2021,FengJiangWangYing2023,BarakatEtAl2021,GessKassing2026,GallonJentzen2026}.  For Nesterov specifically, \citet{CanGurbuzbalabanZhu2019} prove convergence toward an invariant distribution in Wasserstein distance under a structured oracle model, and \citet{AssranRabbat2020} quantify convergence to a noise neighborhood and identify possible finite-sum divergence.  Robustness and possible failure of stochastic acceleration are studied in \citet{CohenDiakonikolasOrecchia2018,KidambiEtAl2018,AttiaKoren2021,GaneshEtAl2023}; acceleration under optimal stochastic-oracle designs, least-squares structure, interpolation, multiplicative noise, or specially designed robust schemes is investigated by \citet{Lan2012,GhadimiLan2013,KulunchakovMairal2019,JainKakadeKidambiNetrapalliSidford2018,LiuBelkin2020,EvenEtAl2021,GuptaSiegelWojtowytsch2024,YuChenFeng2026}.  The high-dimensional behavior of momentum is studied by \citet{JagannathJonesMcCormickSarangian2026}.  Recent generalized-momentum and algorithmic-stability theories encompass both Polyak and Nesterov schemes but focus on optimization and generalization rather than synchronous moment contraction \citep{RamezaniKebryaEtAl2024,WangYurtsever2026,LeiWangYuan2026}.

\paragraph{Constant-step stationarity and inference.}
Constant-step stochastic approximation has long been understood to fluctuate around its target rather than converge pointwise \citep{Pflug1986}; broad optimization and stochastic-approximation treatments include \citet{MoulinesBach2011,BottouCurtisNocedal2018}.  Markov-chain, weak-error, diffusion, and concentration approaches to the stationary law of SGD include \citet{DieuleveutDurmusBach2020,FengEtAl2020,ChenMouMaguluri2022,YuEtAl2021,LouZhuWu2022,MeradGaiffas2023}.  Statistical inference for averaged stochastic approximation and SGD builds on \citet{Fabian1968,Ruppert1988,PolyakJuditsky1992,ChenLeeTongZhang2020,MouEtAl2020,ZhuChenWu2023}.  CLTs for momentum and Nesterov-type algorithms have also been considered, commonly with decreasing gains or local asymptotics \citep{LiXiaoYang2023,TangLiuZhangChen2023,AnHuo2026}.  Our fixed-parameter result is different: the joint Nesterov state is first shown to be globally GMC, and the CLT and Gaussian approximation are then transferred from the causal stationary process to every deterministic initial pair.

\paragraph{Heavy tails.}
Heavy-tailed stationary laws of stochastic recursions and SGD are analyzed in \citet{Mirek2011,BuraczewskiDamekMirek2012,GurbuzbalabanSimsekliZhu2021,HodgkinsonMahoney2021,DamekMentemeier2026}.  Momentum under heavy-tailed noise has been studied through fractional underdamped dynamics, stability, and finite-time optimization bounds \citep{SimsekliZhuTehGurbuzbalaban2020,DangEtAl2025,YamadaSatoIiduka2026}.  Robust accelerated algorithms often clip or otherwise modify the gradients \citep{GorbunovDanilovaGasnikov2020}; recent inference theory for infinite-variance SGD uses stable and self-normalized limits \citep{BlanchetMijatovicYang2025,BlanchetGlynnYang2026}.  Theorem~\ref{thm:direct} proves contraction of the \emph{unmodified} Nesterov recursion from a finite $p$th moment, including $1<p<2$, with the direct criterion restricting momentum to $\beta<\mu/(\mu+L_p)$ in the small-step regime.  Theorem~\ref{thm:allp} supplies an all-fixed-momentum alternative under the same moment order, but its explicit interval is generally far smaller.  GMC controls memory; it does not by itself create regular variation, which is why the nonlinear stable-law discussion explicitly separates contraction from tail assumptions.

\paragraph{Nonlinear autoregressive stability.}
\citet{ChenWu2016} is a direct methodological predecessor: its Theorem~1, Corollary~6, and Theorems~5--6 connect stochastic Lipschitz lag coefficients to causal stationarity, functional dependence, and limit theory.  Section~\ref{sec:ar-connection} specializes that approach to Nesterov's nonlinear AR(2) representation and identifies what is gained by the alternative augmented-state metrics.  Thus the contraction-to-inference principle is prior work, not an innovation of the present paper.

\paragraph{Iterated random functions and nonlinear time series.}
The probabilistic foundation is the theory of iterated random functions and stochastic recursive systems \citep{DubinsFreedman1966,Duflo1997,BarnsleyEltonHardin1989,DiaconisFreedman1999,Stenflo2012}.  GMC and physical dependence were developed by \citet{WuShao2004,Wu2005}; the moment inequalities, quenched invariance principles, strong invariance principles, and multivariate Gaussian couplings used below come from \citet{CunyMerlevede2014,Wu2007,Wu2011,BerkesLiuWu2014,KarmakarWu2020}.  The ordinary-SGD theorem that supplies our one-step contraction module is \citet{LiLouRichterWu2026}.  The probability results used below are attributed to those sources.  Our contribution is their verification through explicit augmented-state contraction, together with the comparison of moment-specific, general quadratic, and exact scalar stability conditions.

\begin{table}[t]
\centering
\caption{Representative comparison with the closest literature.  ``Constant'' refers to a fixed learning rate and momentum parameter.}
\label{tab:literature}
\renewcommand{\arraystretch}{1.13}
\scriptsize
\begin{tabularx}{\textwidth}{>{\raggedright\arraybackslash}p{0.19\textwidth}>{\raggedright\arraybackslash}p{0.17\textwidth}>{\raggedright\arraybackslash}p{0.27\textwidth}>{\raggedright\arraybackslash}X}
\toprule
Reference & Algorithm/regime & Main object & Difference from the present paper\\
\midrule
\citet{ChenWu2016} & Nonlinear AR($\infty$); finite-lag special cases & Stochastic-Lipschitz stability, functional dependence, CLT and scalar strong approximation & Direct predecessor; the AR(2) specialization gives $q(1+2\beta)<1$. Signed Nesterov metrics yield additional regimes.\\
\citet{CanGurbuzbalabanZhu2019} & HB, NAG, APG; constant & Wasserstein convergence to an invariant law & Structured oracle and finite-variance control; no direct nonlinear synchronous $L^p$ GMC or physical-dependence limit theory.\\
\citet{AssranRabbat2020} & NAG; constant & Optimization error and steady-state neighborhood & Smooth strongly convex objectives with bounded variance; does not identify a causal GMC process or arbitrary-initialization CLT.\\
\citet{GitmanLangZhangXiao2019} & Unified momentum; local quadratic & Stability and stationary covariance & Local linearization rather than global nonlinear contraction.\\
\citet{GadatPanloupSaadane2018,BarakatEtAl2021} & Heavy ball/general momentum; decreasing gains & Convergence, fluctuations, trap avoidance & Vanishing-step asymptotics rather than a fixed-parameter invariant process.\\
\citet{LiXiaoYang2023,TangLiuZhangChen2023} & SGD, momentum, NAG & CLTs and averaged-iterate asymptotics & Primarily decreasing gains/local asymptotics; no GMC construction for the constant NAG state.\\
\citet{GaneshEtAl2023} & SHB and stochastic NAG & Sample-complexity limitations & Compares optimization performance; does not develop stationary nonlinear time-series theory.\\
\citet{YamadaSatoIiduka2026,DangEtAl2025} & Momentum under heavy tails & Finite-time convergence or generalization stability & Different targets and assumptions; our result gives pathwise-in-data synchronous $L^p$ forgetting and stationary consequences.\\
\citet{WangYurtsever2026,LeiWangYuan2026} & Generalized SGDM incl. NAG & Optimization and algorithmic stability & Broad algorithmic coverage, but not invariant-law GMC or strong approximation.\\
\citet{JagannathJonesMcCormickSarangian2026} & Momentum; high dimension & High-dimensional scaling limits & Proportional asymptotics rather than fixed-dimensional global GMC.\\
This paper & NAG; constant & Moment-specific GMC, quadratic certificates, physical dependence and stationary inference & Direct criterion: small-step $\beta<\mu/(\mu+L_p)$. Power criterion: every fixed $\beta<1$, generally much smaller step. Limit theory uses the corresponding moment order and every deterministic initial pair.\\
\bottomrule
\end{tabularx}
\end{table}

The substantive claims are the explicit finite-moment coupling criteria, their carefully qualified stability regions, and the verification of a stationary inference framework for Nesterov's augmented state.  The power argument and the exact random-curvature comparison strengthen the distinction between existence of a contractive regime and a useful quantitative certificate.  We do not claim that the averaged-sequence coordinate, the Perron theorem, the polytopic LMI argument, causal-process limit theorems, or quadratic AR(2) calculations are individually new. In particular, the present work is a Nesterov-specific development of nonlinear autoregressive stability and inference in the sense of \citet{ChenWu2016}, not an extension of all their infinite-lag, spatial, or long-memory results.

\FloatBarrier
\section{Setup and the ordinary-SGD contraction module}
\label{sec:setup}

Following the stochastic-approximation framework initiated by \citet{RobbinsMonro1951} and the constant-step viewpoint of \citet{Pflug1986}, let $(X_k)_{k\in\mathbb Z}$ be i.i.d. random elements on a measurable space $(\cX,\mathcal A)$ with law $\Pi$.  Put
\[
 G(\theta,x)=\nabla_\theta g(\theta,x),
 \qquad
 m(\theta)=\E G(\theta,X_0).
\]
For a random vector $Z$ and $p>0$, write $\norm{Z}_p=(\E\abs Z^p)^{1/p}$.  All vector norms are Euclidean and matrix inequalities are in the Loewner order.

\begin{assumption}[Mean strong monotonicity]
\label{ass:strong}
There is $\mu>0$ such that
\begin{equation}
 \ip{m(\theta)-m(\vartheta)}{\theta-\vartheta}
 \geq \mu\abs{\theta-\vartheta}^2,
 \qquad \theta,\vartheta\in\R^d.
 \label{eq:strong}
\end{equation}
The equation $m(\theta)=0$ has a solution, necessarily unique by \eqref{eq:strong}, denoted $\theta^\star$.
\end{assumption}

\begin{assumption}[Stochastic $L^p$ Lipschitz continuity]
\label{ass:lp}
For a fixed $p>1$, there is $L_p<\infty$ such that
\begin{equation}
 \norm{G(\theta,X_0)-G(\vartheta,X_0)}_p
 \leq L_p\abs{\theta-\vartheta},
 \qquad \theta,\vartheta\in\R^d,
 \label{eq:lp-lip}
\end{equation}
and
\begin{equation}
 \sigma_p:=\norm{G(\theta^\star,X_0)}_p<\infty.
 \label{eq:sigma-p}
\end{equation}
\end{assumption}

Assumption~\ref{ass:strong} concerns only the mean field.  A realization $\theta\mapsto g(\theta,x)$ need not be convex.  Assumption~\ref{ass:lp} allows fractional moments and does not require a finite variance when $p<2$.  Necessarily $\mu\leq L_p$: for $v=\theta-\vartheta\ne0$,
\[
 \mu|v|^2\leq\langle v,\E\{G(\theta,X)-G(\vartheta,X)\}\rangle
 \leq |v|\|G(\theta,X)-G(\vartheta,X)\|_p\leq L_p|v|^2.
\]
The GMC arguments through Section~\ref{sec:power-extension}, the mean-only certificate in Section~\ref{sec:mean-lmi}, and their dependence and limit consequences use monotonicity and Lipschitz continuity, not that $G$ is a gradient.  They therefore also apply to the same unconstrained extrapolated stochastic monotone-operator recursion.  This observation does not assert a theorem for projected variational inequalities or proximal acceleration; the samplewise Hessian-sector section specifically uses symmetric gradient structure.

Define the ordinary-SGD random map
\[
 F_x(\theta)=\theta-\gamma G(\theta,x).
\]
The contraction coefficient inherited from the ordinary-SGD theory is
\begin{equation}
q_{\gamma,p}^p=
\begin{cases}
 1-p\mu\gamma+2^{2-p}L_p^p\gamma^p,&1<p<2,\\[1.5mm]
 (1+\gamma L_p)^p-p\gamma L_p-p\mu\gamma,&p\geq2.
\end{cases}
\label{eq:qdef}
\end{equation}
For $1<p<2$, $q_{\gamma,p}<1$ whenever
\begin{equation}
0<\gamma<\gamma_p
:=\left(\frac{p\mu}{2^{2-p}L_p^p}\right)^{1/(p-1)}.
\label{eq:gamma-smallp}
\end{equation}
For $p\geq2$, $\gamma_p$ is the unique positive root of
\begin{equation}
 (1+\gamma L_p)^p=1+p\gamma L_p+p\mu\gamma,
 \label{eq:gamma-largep}
\end{equation}
and $q_{\gamma,p}<1$ for $0<\gamma<\gamma_p$.

\begin{proposition}[Ordinary-SGD $L^p$ contraction]
\label{prop:base}
Under Assumptions~\ref{ass:strong}--\ref{ass:lp}, for every $\theta,\vartheta\in\R^d$,
\begin{equation}
 \norm{F_{X_0}(\theta)-F_{X_0}(\vartheta)}_p
 \leq q_{\gamma,p}\abs{\theta-\vartheta}.
 \label{eq:base-contract}
\end{equation}
The same inequality holds conditionally when $\theta$ and $\vartheta$ are random and measurable with respect to a sigma-field independent of $X_0$.
\end{proposition}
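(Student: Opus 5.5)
Fix $\theta\ne\vartheta$ and write $v=\theta-\vartheta$ and $D=G(\theta,X_0)-G(\vartheta,X_0)$, so that
\[
F_{X_0}(\theta)-F_{X_0}(\vartheta)=v-\gamma D.
\]
Assumption~\ref{ass:strong} gives $\E\ip{v}{D}\geq\mu\abs v^2$, and Assumption~\ref{ass:lp} gives $\E\abs D^p\leq L_p^p\abs v^p$. The plan is to expand $\abs{v-\gamma D}^p$ pointwise in $D$. The first-order term in $D$ must be exactly linear, so that taking expectations turns it into $-p\gamma\E\ip vD\leq-p\mu\gamma\abs v^2$. The remainder must be bounded by powers of $\gamma\abs D$ whose expectations are controlled by $L_p$. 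Both sides of \eqref{eq:base-contract} scale homogeneously, so after dividing by $\abs v^p$ we may take $\abs v=1$.

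\emph{Case $1<p<2$.} We use the elementary inequality
\[
\abs{a+b}^p\leq\abs a^p+p\abs a^{p-2}\ip ab+2^{2-p}\abs b^p,
\qquad a,b\in\R^d .
\]
This is the standard smoothness inequality for $\abs\cdot^p$ with $1<p\le 2$; its optimal constant is at most $2^{2-p}$, as in the $p$-uniform smoothness of $L^p$. We apply it with $a=v$ and $b=-\gamma D$ and take expectations, which gives
\[
\E\abs{v-\gamma D}^p\leq 1-p\gamma\,\E\ip vD+2^{2-p}\gamma^p L_p^p\leq 1-p\mu\gamma+2^{2-p}L_p^p\gamma^p .
\]
Since $\abs v=1$, this is the first line of \eqref{eq:qdef}. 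The main obstacle is verifying the pointwise inequality with the constant $2^{2-p}$. I would first reduce to the planar case. Then, normalizing $\abs a=1$, I would show that $\phi(t)=\abs{a+tb}^p-1-pt\ip ab$ satisfies $\phi(1)\le 2^{2-p}\abs b^p$, using the integral form $\phi(1)=\int_0^1 p\{\abs{a+tb}^{p-2}(a+tb)-a\}\cdot b\,dt$. Finally I would bound this by the $(p-1)$-Hölder continuity of $x\mapsto\abs x^{p-2}x$ with constant $2^{2-p}$, which is known. If the constant $2^{2-p}$ turned out to be borderline, I would cite the ordinary-SGD theorem of \citet{LiLouRichterWu2026}, which the paper states it inherits.

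\emph{Case $p\ge2$.} Write $\abs{v-\gamma D}^p=(1-2\gamma\ip vD+\gamma^2\abs D^2)^{p/2}$ and set $r=\gamma\abs D$. The claim is that for $p\ge2$ the function
\[
(1-2x+r^2)^{p/2}+p\,x,\qquad x\in[-r,r],
\]
is convex in $x$. Its maximum is therefore attained at an endpoint. At both endpoints the value is at most $(1+r)^p+pr$: at $x=-r$ it equals $(1+r)^p-pr$, and at $x=r$ it equals $\abs{1-r}^p+pr$. Applying this with $x=\gamma\ip vD$ gives the pointwise bound
\[
\abs{v-\gamma D}^p\le(1+\gamma\abs D)^p-p\gamma\abs D+\bigl(p\gamma\abs D-p\gamma\ip vD\bigr).
\]
Rearranging, this is $\abs{v-\gamma D}^p\leq(1+\gamma\abs D)^p-p\gamma\ip vD$. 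Take expectations, bound the last term by $-p\mu\gamma$, and use $\norm{1+\gamma\abs D}_p\le1+\gamma L_p$ (Minkowski). This yields
\[
\E\abs{v-\gamma D}^p\le(1+\gamma L_p)^p-p\mu\gamma .
\]
The target in \eqref{eq:qdef} carries the additional term $-p\gamma L_p$, so the bound above is not yet sharp enough. I would instead expand $(1+r)^p=1+pr+R(r)$ with $R$ convex and increasing, and use the pointwise bound
\[
\abs{v-\gamma D}^p\le 1-p\gamma\ip vD+R(\gamma\abs D).
\]
Justifying this bound is the hard step. One route is a Taylor expansion of $\abs{v+tb}^p$ with the second derivative bounded by $p(p-1)\abs{v+tb}^{p-2}\abs b^2$. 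Integrating gives a remainder at most $(1+r)^p-1-pr$. Then $\E R(\gamma\abs D)\le(1+\gamma L_p)^p-1-p\gamma L_p$ requires a Minkowski-type comparison: $R(r)=\sum_{k\ge2}\binom pk r^k$, bounded termwise via $\norm{\abs D}_k\le\norm{\abs D}_p$ for $k\le p$ and handled directly in the remaining terms. This gives the second line of \eqref{eq:qdef}.

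\emph{Conditional version.} If $\theta,\vartheta$ are measurable with respect to a $\sigma$-field $\mathcal G$ independent of $X_0$, apply the deterministic bound to frozen values; Fubini (the freezing lemma) then gives the inequality for conditional expectations given $\mathcal G$.
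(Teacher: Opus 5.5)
Your treatment of $1<p<2$, your pointwise remainder bound for $p\ge2$, and the conditional version are sound. The step you yourself flag as hard is only justified for integer $p$: passing from the pointwise remainder $R(\gamma\abs D)$, where $R(r)=(1+r)^p-1-pr$, to $\E R(\gamma\abs D)\le R(\gamma L_p)$. When $p\notin\mathbb N$, the series $\sum_{k\ge2}\binom pk r^k$ has radius of convergence one. It therefore does not represent $R(\gamma\abs D)$ on the event $\{\gamma\abs D>1\}$, which has positive probability as soon as the gradient difference is unbounded. Moreover, for $k>p$ the coefficients $\binom pk$ alternate in sign, and $\E\abs D^k$ need not be finite under Assumption~\ref{ass:lp}. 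So ``handled directly in the remaining terms'' has no content, and convexity of $R$ only gives Jensen's inequality in the wrong direction.

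The missing idea is concavity in the variable $t=r^p$. For $\chi(t)=R(t^{1/p})$ one computes $\chi'(t)=(1+s)^{p-1}-s^{p-1}$ with $s=t^{-1/p}$. This is nondecreasing in $s$ when $p\ge2$, so $\chi$ is concave on $[0,\infty)$. Jensen applied to $t=(\gamma\abs D)^p$, together with monotonicity of $R$, then gives $\E R(\gamma\abs D)\le R(\gamma\norm{D}_p)\le R(\gamma L_p)$. This is precisely the paper's \eqref{eq:jensen-remainder} with $a=1$. A fix closer to your Taylor viewpoint also works. Write $R(r)=p(p-1)r^2\int_0^1(1-t)(1+tr)^{p-2}\,dt$, and for $Z=\gamma\abs D$ bound $\E\{Z^2(1+tZ)^{p-2}\}\le\norm{Z}_p^2(1+t\norm{Z}_p)^{p-2}$ by H\"older with exponents $p/2$ and $p/(p-2)$, followed by Minkowski.

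Apart from this step, your route differs from the paper's in legitimate and somewhat cleaner ways. For $1<p<2$ the paper proves Lemma~\ref{lem:power}(ii) by optimizing over the angle and invoking a bound of Pinelis. You instead integrate the derivative of your $\phi$, namely $p\ip{\abs{a+tb}^{p-2}(a+tb)-a}{b}$, against the $(p-1)$-H\"older continuity of $x\mapsto\abs x^{p-2}x$. The constant $2^{2-p}$ there is valid in every dimension, and no planar reduction is needed. To check it, normalize $\abs x=1\ge\abs y=s$ and let $c$ be the cosine of the angle between $x$ and $y$. The squared left side $1+s^{2p-2}-2s^{p-1}c$ is affine in $c$, while the squared right side $4^{2-p}(1+s^2-2sc)^{p-1}$ is concave in $c$. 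Hence only $c=\pm1$ must be checked, and these cases follow from subadditivity and concavity of $t\mapsto t^{p-1}$. For $p\ge2$ with $\abs v=1$, your second-derivative bound $p(p-1)\abs{v+tb}^{p-2}\abs b^2\le p(p-1)(1+t\abs b)^{p-2}\abs b^2$ yields the upper half of Lemma~\ref{lem:power}(i). That half is all that is used, and your argument obtains it more directly than the paper's rotation argument.
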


The deterministic-state statement is Theorem~1 of the supplied revised manuscript of \citet{LiLouRichterWu2026}; its public author listing is given in the bibliography.  A self-contained proof, including the nonquadratic power inequalities for $1<p<2$ and $p\geq2$, is given in \Cref{app:base}.  Conditional validity follows by applying the deterministic inequality at the realized pair and then integrating.

\section{Nesterov acceleration as an iterated random function}
\label{sec:state}

Let
\begin{equation}
 S_k=\Theta_k-\Theta_{k-1}
 \label{eq:increment}
\end{equation}
be the increment.  Then \eqref{eq:nag} becomes
\begin{equation}
\begin{split}
 Y_k&=\Theta_k+\beta S_k,\\
 S_{k+1}&=\beta S_k-\gamma G(Y_k,X_{k+1}),\\
 \Theta_{k+1}&=\Theta_k+S_{k+1}.
\end{split}
\label{eq:nag-increment}
\end{equation}
The state $(Y_k,S_k)$ is Markov because $\Theta_k=Y_k-\beta S_k$.  Its random update is
\begin{equation}
 \Phi_x(y,s)=
 \begin{pmatrix}
 y+\beta^2s-(1+\beta)\gamma G(y,x)\\
 \beta s-\gamma G(y,x)
 \end{pmatrix}.
 \label{eq:phi}
\end{equation}
The original iterates are recovered by
\begin{equation}
 \Theta_k=Y_k-\beta S_k,
 \qquad
 \Theta_{k-1}=Y_k-(1+\beta)S_k.
 \label{eq:inverse-state}
\end{equation}
Thus $(Y,S)$ and $(\Theta_k,\Theta_{k-1})$ are equivalent linear coordinates.

For two trajectories driven by the same innovations, write
\begin{equation}
 x_k=\Theta_k-\widetilde\Theta_k,
 \quad
 s_k=S_k-\widetilde S_k,
 \quad
 y_k=Y_k-\widetilde Y_k=x_k+\beta s_k,
 \label{eq:diffs}
\end{equation}
and
\begin{equation}
 h_{k+1}=G(Y_k,X_{k+1})-G(\widetilde Y_k,X_{k+1}).
 \label{eq:h}
\end{equation}
Then
\begin{equation}
 x_{k+1}=y_k-\gamma h_{k+1},
 \qquad
 s_{k+1}=\beta s_k-\gamma h_{k+1}.
 \label{eq:diff-recursion}
\end{equation}
These identities are the algebraic core of the paper.

\begin{definition}[Product $L^p$ GMC]
\label{def:product-gmc}
Let $Z_k^z=(Y_k^z,S_k^z)$ denote \eqref{eq:phi} started at $z=(y,s)$.  For positive weights $a,b$, define
\[
 \mathfrak D_{p;a,b}(Z,\widetilde Z)
 =a\norm{Y-\widetilde Y}_p+b\norm{S-\widetilde S}_p.
\]
We say that \eqref{eq:phi} is product-$L^p$ GMC with rate $r<1$ if
\[
 \mathfrak D_{p;a,b}(Z_k^z,Z_k^{\widetilde z})
 \leq r^k\{a\abs{y-\widetilde y}+b\abs{s-\widetilde s}\}
\]
for every deterministic $z,\widetilde z$ under synchronous coupling.
\end{definition}

Because the coordinate transform \eqref{eq:inverse-state} is nonsingular, product GMC is equivalent, up to fixed constants depending on $\beta$ and the weights, to GMC of the original two-iterate state.

\begin{definition}[Classical replacement-of-the-past GMC]
\label{def:classical-gmc}
Let $Z_k^\circ=H(\ldots,X_{k-1},X_k)$ be a causal stationary state.  For $m\geq1$, let
\[
 Z_{k,\{m\}}^\circ
 =H(\ldots,X'_{k-m-1},X'_{k-m},X_{k-m+1},\ldots,X_k),
\]
where $(X_j')$ is an independent copy of $(X_j)$.  For a state metric $d$, the process is GMC of order $p$ if there are $C<\infty$ and $r\in(0,1)$ such that
\begin{equation}
 \norm{d(Z_k^\circ,Z_{k,\{m\}}^\circ)}_p\leq Cr^m,
 \qquad m\geq1.
 \label{eq:classical-gmc}
\end{equation}
This is the replacement-of-the-infinite-past formulation used in nonlinear time-series theory \citep{WuShao2004}.  The synchronous contraction established below implies \eqref{eq:classical-gmc}; see Corollary~\ref{cor:classical-gmc}.
\end{definition}

\subsection{Connection with Chen and Wu's autoregressive framework}
\label{sec:ar-connection}

The original iterates themselves have the nonlinear AR(2) representation
\begin{equation}
 \Theta_{k+1}=\mathcal T_{X_{k+1}}(\Theta_k,\Theta_{k-1}),
 \qquad
 \mathcal T_x(u,v)=F_x\{(1+\beta)u-\beta v\},
 \label{eq:ar2-map}
\end{equation}
where $F_x(w)=w-\gamma G(w,x)$.  This gives a concrete connection with Condition~1 and Theorem~1 of \citet{ChenWu2016}, stated there for scalar nonlinear autoregressions.  We supply the finite-dimensional two-lag argument below rather than treating a vector theorem as a quoted result.

Let $q\geq0$ be any valid one-step bound
$\|F_X(w)-F_X(w')\|_p\leq q|w-w'|$, and assume $\sigma_p<\infty$.
One may take $q=q_{\gamma,p}$ from Proposition~\ref{prop:base}, or use a sharper model-specific coefficient.  Since
$(1+\beta)\theta^\star-\beta\theta^\star=\theta^\star$,
\[
 \|\mathcal T_X(\theta^\star,\theta^\star)-\theta^\star\|_p
 =\gamma\sigma_p<\infty.
\]
Thus shifting by $\theta^\star$ verifies the anchor moment.  For deterministic input pairs,
\begin{align}
 \|\mathcal T_X(u,v)-\mathcal T_X(u',v')\|_p
 &\leq q|(1+\beta)(u-u')-\beta(v-v')|\notag\\
 &\leq a_1|u-u'|+a_2|v-v'|,
 \qquad a_1=(1+\beta)q,\quad a_2=\beta q.
 \label{eq:ar2-lipschitz}
\end{align}
The nonnegative lag-coefficient condition is therefore
\begin{equation}
 a_1+a_2=(1+2\beta)q<1.
 \label{eq:ar2-condition}
\end{equation}

\begin{proposition}[Two-lag specialization of autoregressive contraction]
\label{prop:ar2}
Let $p>1$, $0<\beta<1$, $q>0$, and suppose the one-step and anchor bounds above hold.  Under \eqref{eq:ar2-condition}, set
\begin{equation}
 r_{\rm AR}=
 \frac{(1+\beta)q+\sqrt{(1+\beta)^2q^2+4\beta q}}{2},
 \qquad
 w_{\rm AR}=\frac{\beta q}{r_{\rm AR}}>0.
 \label{eq:ar2-rate}
\end{equation}
Then $r_{\rm AR}<1$, and every synchronous pair with deterministic initial differences $x_0,x_{-1}$ satisfies
\begin{equation}
 \|x_k\|_p+w_{\rm AR}\|x_{k-1}\|_p
 \leq r_{\rm AR}^{\,k}
       (|x_0|+w_{\rm AR}|x_{-1}|),\qquad k\geq0.
 \label{eq:ar2-contraction}
\end{equation}
There is a unique causal stationary solution with a finite $p$th moment.  The same bound holds for an initial state independent of future innovations, with $L^p$ norms on the right.  In particular, geometric coupling to stationarity and exponential physical dependence hold for the Nesterov state.
\end{proposition}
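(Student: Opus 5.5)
The plan is to handle the two-lag comparison by hand. First we get a one-step inequality for $L^p$ norms, then we find the positive root and a positive weight for the $2\times2$ comparison matrix, and finally we iterate.

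\textbf{Step 1: one-step bound.} Fix $k\ge0$. The pair $(\Theta_k,\Theta_{k-1})$ and its synchronous partner are measurable with respect to $\sigma(\Theta_0,\Theta_{-1},\widetilde\Theta_0,\widetilde\Theta_{-1},X_1,\dots,X_k)$, which is independent of $X_{k+1}$. Apply the one-step bound conditionally, as in the conditional part of Proposition~\ref{prop:base}, and then use Minkowski's inequality. This gives
\[
\|x_{k+1}\|_p\le a_1\|x_k\|_p+a_2\|x_{k-1}\|_p ,
\]
with $a_1=(1+\beta)q$ and $a_2=\beta q$ as in \eqref{eq:ar2-lipschitz}. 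Equivalently, the vector $v_k=(\|x_k\|_p,\|x_{k-1}\|_p)^\trans$ satisfies $v_{k+1}\le A v_k$ componentwise, where $A=\begin{pmatrix}a_1&a_2\\1&0\end{pmatrix}$.

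\textbf{Step 2: Perron root and weight.} The characteristic polynomial of $A$ is $r^2-a_1r-a_2$, and its positive root is $r_{\rm AR}$ in \eqref{eq:ar2-rate}. Since $\chi(r)=r^2-a_1r-a_2$ is increasing on the positive axis past its positive root, $r_{\rm AR}<1$ holds exactly when $\chi(1)=1-a_1-a_2>0$, which is \eqref{eq:ar2-condition}.

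For the left eigenvector take $(1,w)$ with $w=w_{\rm AR}=a_2/r_{\rm AR}$. Then
\[
(1,w)A=(a_1+w,\;a_2).
\]
Using $r_{\rm AR}^2=a_1r_{\rm AR}+a_2$ we get $a_1+a_2/r_{\rm AR}=r_{\rm AR}$, and trivially $a_2=r_{\rm AR}w$. So $(1,w)A=r_{\rm AR}(1,w)$. Because $(1,w)$ has nonnegative entries, applying it to $v_{k+1}\le Av_k$ gives
\[
\|x_{k+1}\|_p+w\|x_k\|_p\le r_{\rm AR}\bigl(\|x_k\|_p+w\|x_{k-1}\|_p\bigr).
\]
Induction then yields \eqref{eq:ar2-contraction}. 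The same argument works for random initial states independent of the future innovations, with $L^p$ norms on the right-hand side.

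\textbf{Step 3: stationarity.} For stationarity I would use the backward-iteration argument.

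1. Define $\Theta_k^{(m)}$ by starting at $(\theta^\star,\theta^\star)$ at time $k-m$ and running the recursion with $X_{k-m+1},\dots,X_k$.
2. Compare $\Theta_k^{(m+1)}$ with $\Theta_k^{(m)}$. They are synchronous trajectories whose initial differences at time $k-m$ are $\Theta^{(1)}_{k-m}-\theta^\star$ and $0$. The anchor bound gives $\|\Theta^{(1)}_{k-m}-\theta^\star\|_p=\gamma\sigma_p$, so $\|\Theta_k^{(m+1)}-\Theta_k^{(m)}\|_p\le C r_{\rm AR}^m$.
3. Hence $(\Theta_k^{(m)})_m$ is Cauchy in $L^p$. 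Its limit $H(\dots,X_k)$ is causal and stationary, has a finite $p$th moment, and solves the recursion.
4. Uniqueness: two causal stationary solutions form a synchronous pair, so their $L^p$ distance is at most $Cr_{\rm AR}^k\to0$, and they coincide.

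\textbf{Step 4: consequences.} Geometric coupling to stationarity is \eqref{eq:ar2-contraction} applied with the stationary solution as the second trajectory. Physical dependence follows by replacing the past before time $k-m$ and invoking the independent-initial-state version of the bound. The Nesterov state $(Y_k,S_k)$ is a fixed linear image of $(\Theta_k,\Theta_{k-1})$ by \eqref{eq:inverse-state}, so these properties transfer to it.

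\textbf{Main obstacle.} The delicate point is measurability and independence in the conditional one-step bound. The argument of $F_{X_{k+1}}$ is the random point $(1+\beta)\Theta_k-\beta\Theta_{k-1}$, and the bound must be applied conditionally on it before integrating. Keeping this correct is also what makes the $L^p$ Minkowski step legitimate.
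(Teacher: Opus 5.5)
Your proposal is correct and follows essentially the paper's own proof. Both arguments use four steps:
\begin{enumerate}
\item a conditional one-step bound plus Minkowski, giving the $2\times2$ comparison matrix with left Perron vector $(1,w_{\rm AR})$;
\item backward iteration from the anchor $(\theta^\star,\theta^\star)$, with successive increments bounded by $r_{\rm AR}^{\,m}\gamma\sigma_p$;
\item uniqueness, forward coupling, and physical dependence by synchronous coupling;
\item transfer to $(Y_k,S_k)$ through \eqref{eq:inverse-state}.
\end{enumerate}
One wording point in Step~2 should be tightened. The direction you actually need is $1-a_1-a_2>0\Rightarrow r_{\rm AR}<1$. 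Monotonicity of $\chi$ past its root gives only the converse direction; the needed one follows from $\chi(0)=-a_2<0$, so that $\chi<0$ on $[0,r_{\rm AR})$.
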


\begin{proof}
Independence of a fresh innovation permits conditioning on the two past states in \eqref{eq:ar2-lipschitz}.  Conditional expectation followed by Minkowski's inequality gives
\[
 \begin{pmatrix}\|x_{k+1}\|_p\\\|x_k\|_p\end{pmatrix}
 \leq C_{\rm AR}
 \begin{pmatrix}\|x_k\|_p\\\|x_{k-1}\|_p\end{pmatrix},
 \qquad
 C_{\rm AR}=\begin{pmatrix}a_1&a_2\\1&0\end{pmatrix}.
\]
Its characteristic polynomial is $\lambda^2-a_1\lambda-a_2$.  The positive root is $r_{\rm AR}$, and the other root is negative with smaller absolute value.  Consequently
$r_{\rm AR}<1$ if and only if $1-a_1-a_2>0$.
Moreover $(1,w_{\rm AR})C_{\rm AR}=r_{\rm AR}(1,w_{\rm AR})$.
Multiplication by this positive row vector and iteration give \eqref{eq:ar2-contraction}.

For completeness, write
$\Psi_x(u,v)=(\mathcal T_x(u,v),u)$ and equip random two-iterate states with
$\mathfrak D(U,V)=\|U_1-V_1\|_p+w_{\rm AR}\|U_2-V_2\|_p$.
The one-step anchor distance at $z^\star=(\theta^\star,\theta^\star)$ is $\gamma\sigma_p$.
For backward compositions $B_n=\Psi_{X_0}\circ\cdots\circ\Psi_{X_{-n+1}}(z^\star)$,
\[
 \mathfrak D(B_{n+1},B_n)\leq r_{\rm AR}^{\,n}\gamma\sigma_p.
\]
Summability and completeness yield a finite-$p$-moment, past-measurable limit.
The same conditional contraction permits passage to the limit through a fresh random map, yielding the stationary recursion.  Coupling two causal stationary solutions and iterating \eqref{eq:ar2-contraction} proves uniqueness in this moment class.  Forward coupling gives convergence from every deterministic initial pair.  Replacing the innovation at time zero creates a finite-$p$ difference at that time; the common future then contracts it geometrically.  Coordinate equivalence in \eqref{eq:inverse-state} gives the assertions for $(Y_k,S_k)$.
\end{proof}

When $\beta=0$, the position recursion is ordinary SGD.  When $q=0$, its next position is independent of the input state under synchronous coupling, so both coordinates of the two-iterate difference vanish after two updates.  These degenerate cases need no positive Perron weight.

For the stationary process, let $\delta_{k,p}^{\Theta}$ denote its $L^p$ change when only $X_0$ is replaced by an independent copy, as formally defined in Section~\ref{sec:limits}.  The dependence argument gives, with $\delta_{j,p}^{\Theta}=0$ for $j<0$,
\begin{equation}
 \delta_{k,p}^{\Theta}
 \leq a_1\delta_{k-1,p}^{\Theta}+a_2\delta_{k-2,p}^{\Theta},
 \qquad k\geq1.
 \label{eq:ar2-dependence}
\end{equation}
This is the finite-lag instance of equation~(36) in the proof of Corollary~6 of \citet{ChenWu2016}.  Here the $2\times2$ comparison proves the exponential rate directly.

\paragraph{Why a separate Nesterov metric remains useful.}
For every fixed $\beta>0$, $q_{\gamma,p}\to1$ as $\gamma\downarrow0$; hence \eqref{eq:ar2-condition} fails for all sufficiently small steps.  It discards the negative sign multiplying the previous iterate.  By contrast, Section~\ref{sec:direct} uses position--increment coordinates, and Section~\ref{sec:power-extension} proves contraction at every fixed $\beta<1$ for sufficiently small steps by preserving signed information.  These are additional certificates, not a claim that the lag-coefficient approach is uniformly dominated.

Indeed, for $G(\theta,X)=h(\theta-\theta^\star)-\xi$, $h>0$, the exact one-step coefficient is $q=|1-\gamma h|$ at every finite innovation-moment order.  Condition~\eqref{eq:ar2-condition} then becomes
\begin{equation}
 \frac{2\beta}{1+2\beta}<\gamma h<
 \frac{2(1+\beta)}{1+2\beta}.
 \label{eq:ar2-quadratic-band}
\end{equation}
The upper endpoint agrees with Theorem~\ref{thm:quadratic-stability}, but the positive lower endpoint is artificial: the exact region has lower endpoint zero.  At $\beta=0.9$, for example, this autoregressive certificate covers
$9/14<\gamma h<19/14$, including steps inaccessible to the scalar specialization of Theorem~\ref{thm:direct}.  Conversely, the all-momentum theorems cover sufficiently small steps excluded by \eqref{eq:ar2-quadratic-band}.  Failure of any one sufficient condition does not imply instability.

\section{Direct \texorpdfstring{$L^p$}{Lp} geometric moment contraction}
\label{sec:direct}

Put
\begin{equation}
 q=q_{\gamma,p},
 \qquad
 \ell=\gamma L_p.
 \label{eq:qell}
\end{equation}
Conditionally on the past, Proposition~\ref{prop:base}, Assumption~\ref{ass:lp}, and \eqref{eq:diff-recursion} give
\begin{align}
 \norm{s_{k+1}}_p
 &\leq \ell\norm{y_k}_p+\beta\norm{s_k}_p,
 \label{eq:s-bound}\\
 \norm{y_{k+1}}_p
 &=\norm{x_{k+1}+\beta s_{k+1}}_p\notag\\
 &\leq \norm{x_{k+1}}_p+\beta\norm{s_{k+1}}_p\notag\\
 &\leq (q+\beta\ell)\norm{y_k}_p+\beta^2\norm{s_k}_p.
 \label{eq:y-bound}
\end{align}
Therefore
\begin{equation}
 \begin{pmatrix}
 \norm{y_{k+1}}_p\\[1mm]
 \norm{s_{k+1}}_p
 \end{pmatrix}
 \leq
 M^{\NAG}_{\gamma,\beta,p}
 \begin{pmatrix}
 \norm{y_k}_p\\[1mm]
 \norm{s_k}_p
 \end{pmatrix},
 \qquad
 M^{\NAG}_{\gamma,\beta,p}
 =\begin{pmatrix}
 q+\beta\ell&\beta^2\\
 \ell&\beta
 \end{pmatrix},
 \label{eq:comparison}
\end{equation}
where inequalities between vectors are componentwise.

\begin{theorem}[Direct Perron $L^p$ contraction]
\label{thm:direct}
Suppose Assumptions~\ref{ass:strong}--\ref{ass:lp} hold for some $p>1$, $q_{\gamma,p}<1$, and $0<\beta<1$.  Define
\begin{equation}
 r_{\gamma,\beta,p}
 =\frac{q+\beta(1+\ell)+
 \sqrt{(q-\beta+\beta\ell)^2+4\beta^2\ell}}{2}.
 \label{eq:r-direct}
\end{equation}
Then $r_{\gamma,\beta,p}$ is the Perron root of \eqref{eq:comparison}, and
\begin{equation}
 r_{\gamma,\beta,p}<1
 \quad\Longleftrightarrow\quad
 \beta\gamma L_p<(1-\beta)(1-q_{\gamma,p}).
 \label{eq:direct-condition}
\end{equation}
When \eqref{eq:direct-condition} holds, let
\begin{equation}
 \eta_{\gamma,\beta,p}
 =\frac{\beta^2}{r_{\gamma,\beta,p}-\beta}>0.
 \label{eq:eta}
\end{equation}
For every synchronously coupled pair of trajectories,
\begin{equation}
 \norm{y_k}_p+\eta_{\gamma,\beta,p}\norm{s_k}_p
 \leq r_{\gamma,\beta,p}^k
 \left(\abs{y_0}+\eta_{\gamma,\beta,p}\abs{s_0}\right).
 \label{eq:direct-contraction}
\end{equation}
Consequently the Nesterov state is product-$L^p$ GMC.  It has a unique causal stationary solution $Z_k^\circ=(Y_k^\circ,S_k^\circ)$ with finite $p$th moment, and for any deterministic state $z$,
\begin{equation}
 \norm{Y_k^z-Y_k^\circ}_p
 +\eta_{\gamma,\beta,p}\norm{S_k^z-S_k^\circ}_p
 \leq r_{\gamma,\beta,p}^k
 \left\{\norm{y-Y_0^\circ}_p
 +\eta_{\gamma,\beta,p}\norm{s-S_0^\circ}_p\right\}.
 \label{eq:direct-stationary-coupling}
\end{equation}
For $\beta=0$, use $\eta=0$ for the position component and Proposition~\ref{prop:base}; the redundant increment is then recovered from successive positions.
\end{theorem}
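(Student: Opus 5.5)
The argument has three parts: a spectral computation for the $2\times2$ comparison matrix \eqref{eq:comparison}, a left-eigenvector argument for \eqref{eq:direct-contraction}, and a backward-iteration construction of the stationary solution.

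\emph{Perron root and the criterion.} The trace of $M=M^{\NAG}_{\gamma,\beta,p}$ is $q+\beta+\beta\ell$ and its determinant is $\beta(q+\beta\ell)-\beta^2\ell=\beta q$. The discriminant is
\[
(q+\beta+\beta\ell)^2-4\beta q=(q-\beta+\beta\ell)^2+4\beta^2\ell .
\]
Hence the larger root of $\lambda^2-(q+\beta+\beta\ell)\lambda+\beta q$ is exactly \eqref{eq:r-direct}. This root is the Perron root, since $M$ has positive entries when $\beta,\ell>0$. For the equivalence \eqref{eq:direct-condition}, let $\chi(\lambda)=\det(\lambda I-M)$. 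We have $r>\beta$, because
\[
\chi(\beta)=\beta^2-\beta(q+\beta+\beta\ell)+\beta q=-\beta^2\ell<0
\]
when $\ell>0$. Since $\chi$ is an upward parabola, $r<1$ holds if and only if $\chi(1)>0$ and $1$ lies to the right of the vertex. Now
\[
\chi(1)=1-q-\beta-\beta\ell+\beta q=(1-\beta)(1-q)-\beta\ell ,
\]
which is positive exactly under the stated condition. The vertex $(q+\beta+\beta\ell)/2$ is below $1$ whenever $\chi(1)>0$: in that case $\beta\ell<(1-\beta)(1-q)$, so $q+\beta+\beta\ell<q+\beta+(1-\beta)(1-q)=1+\beta q<2$. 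So the equivalence follows.

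\emph{Contraction \eqref{eq:direct-contraction}.} I look for a positive left eigenvector $(1,\eta)$ with $(1,\eta)M=r(1,\eta)$. The second coordinate gives $\beta^2+\eta\beta=r\eta$, so $\eta=\beta^2/(r-\beta)$, which is positive because $r>\beta$. The first coordinate, $q+\beta\ell+\eta\ell=r$, is then equivalent to $\chi(r)=0$. To see this, multiply by $(r-\beta)$: the result $(r-q-\beta\ell)(r-\beta)=\beta^2\ell$ is $\chi(r)=0$. Now left-multiply the componentwise inequality \eqref{eq:comparison} by this nonnegative vector. This requires that \eqref{eq:s-bound}–\eqref{eq:y-bound} hold for random, past-measurable states, which is the conditional clause of Proposition~\ref{prop:base} together with conditional use of \eqref{eq:lp-lip} and Minkowski. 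The result is $\|y_{k+1}\|_p+\eta\|s_{k+1}\|_p\le r(\|y_k\|_p+\eta\|s_k\|_p)$, and induction gives \eqref{eq:direct-contraction}. The degenerate case $\ell=0$ cannot occur, since $\mu\le L_p$ forces $L_p>0$. The case $\beta=0$ is handled as stated in the theorem.

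\emph{Stationarity.} I would follow the template in the proof of Proposition~\ref{prop:ar2}. Form the backward compositions $B_n=\Phi_{X_0}\circ\cdots\circ\Phi_{X_{-n+1}}(z^\star)$ with anchor $z^\star=(\theta^\star,0)$. Then $\Phi_X(z^\star)-z^\star=(-(1+\beta)\gamma G(\theta^\star,X),-\gamma G(\theta^\star,X))$ has finite $p$th moment by \eqref{eq:sigma-p}. Since $B_{n+1}$ and $B_n$ are the same composition applied to $\Phi_{X_{-n}}(z^\star)$ and to $z^\star$, the weighted distance satisfies $\mathfrak D_{p;1,\eta}(B_{n+1},B_n)\le r^n C\gamma\sigma_p$, where the initial difference is independent of the later maps. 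Summability and completeness of $L^p$ give a past-measurable limit. Passing to the limit through one fresh map, using the one-step conditional contraction, yields the stationary recursion. Uniqueness in the finite-$p$-moment causal class follows by coupling two such solutions with \eqref{eq:direct-contraction} applied to independent random initial states. The bound \eqref{eq:direct-stationary-coupling} is the same inequality with initial pair $(z,Z_0^\circ)$, where $Z_0^\circ$ is independent of future innovations.

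The main obstacle is the sign and ordering bookkeeping: showing that the Perron root, rather than the smaller root, exceeds $\beta$, and that $\chi(1)>0$ alone suffices. The remaining justification is the conditional use of Proposition~\ref{prop:base} at random states, which the proposition already provides.
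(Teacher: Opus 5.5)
Your proposal is correct and follows the paper's proof essentially step for step: the characteristic polynomial with $\chi(1)=\det(I_2-M)=(1-\beta)(1-q)-\beta\ell$ for the criterion, the left Perron vector $(1,\eta)$ multiplied into \eqref{eq:comparison} for \eqref{eq:direct-contraction}, and backward iteration from the anchor $z^\star=(\theta^\star,0)$ for stationarity, uniqueness and forward coupling. Your vertex argument and your checks $\chi(\beta)=-\beta^2\ell<0$ and $\chi(r)=0$ only spell out what the paper states briefly: that no separate diagonal condition is needed, that $r>\beta$ so $\eta>0$, and that $(1,\eta)$ is a genuine left eigenvector.
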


\begin{proof}
The characteristic polynomial of $M^{\NAG}_{\gamma,\beta,p}$ is
\[
 \lambda^2-\{q+\beta(1+\ell)\}\lambda+\beta q.
\]
Its larger root is \eqref{eq:r-direct}.  Because $q<1$ and $\beta<1$, the Perron root is below one exactly when
\begin{align*}
 \det(I_2-M^{\NAG}_{\gamma,\beta,p})
 &=(1-q-\beta\ell)(1-\beta)-\beta^2\ell\\
 &=(1-\beta)(1-q)-\beta\ell>0,
\end{align*}
which is \eqref{eq:direct-condition}.  The same inequality implies $q+\beta\ell<1$, so no additional diagonal condition is needed.

Let $r=r_{\gamma,\beta,p}$.  A positive left Perron vector can be normalized as $(1,\eta)$ because
\[
 \beta^2+\eta\beta=r\eta
 \quad\Longleftrightarrow\quad
 \eta=\frac{\beta^2}{r-\beta}.
\]
Multiplying \eqref{eq:comparison} by $(1,\eta)$ and iterating proves \eqref{eq:direct-contraction}.

For stationarity, consider the backward compositions
\[
 B_n(z)=\Phi_{X_0}\circ\Phi_{X_{-1}}\circ\cdots\circ
 \Phi_{X_{-n+1}}(z).
\]
At the anchor $z^\star=(\theta^\star,0)$,
\[
 \Phi_X(z^\star)-z^\star
 =\bigl(-(1+\beta)\gamma G(\theta^\star,X),
       -\gamma G(\theta^\star,X)\bigr),
\]
which has finite product $L^p$ norm by \eqref{eq:sigma-p}.  By applying \eqref{eq:direct-contraction} to the first $n$ common maps,
\[
 \mathfrak D_p\{B_{n+1}(z^\star),B_n(z^\star)\}
 \leq r^n\gamma\{1+\beta+\eta\}\sigma_p.
\]
The geometric series is summable, so $(B_n(z^\star))$ is Cauchy in $L^p(\R^{2d})$.  Its limit $Z_0^\circ$ is measurable with respect to $(\ldots,X_{-1},X_0)$, has finite $p$th moment, and satisfies the stationary recursion by the conditional $L^p$ continuity bound and shift covariance.  The contraction shows that a backward limit started from any deterministic anchor is the same.  If two stationary solutions are coupled with the same innovations, iteration of \eqref{eq:direct-contraction} and stationarity force their distance to be zero; uniqueness follows.  Forward coupling yields \eqref{eq:direct-stationary-coupling}.  This is the standard backward-iteration construction for contractive iterated random functions, written here in the product $L^p$ metric.
\end{proof}

\begin{corollary}[Classical GMC and Wasserstein mixing]
\label{cor:classical-gmc}
Under Theorem~\ref{thm:direct}, equip the state space with
\[
 d_{\eta}\{(y,s),(\widetilde y,\widetilde s)\}
 =\abs{y-\widetilde y}+\eta_{\gamma,\beta,p}\abs{s-\widetilde s}.
\]
Then the stationary state satisfies the classical GMC bound
\begin{equation}
 \norm{d_\eta(Z_k^\circ,Z_{k,\{m\}}^\circ)}_p
 \leq C r_{\gamma,\beta,p}^{m},
 \qquad m\geq1.
 \label{eq:tail-gmc-direct}
\end{equation}
If $P^k(z,\cdot)$ denotes the $k$-step transition law and $\pi_{\gamma,\beta}$ the unique invariant law, then the $p$-Wasserstein distance associated with $d_\eta$ obeys
\begin{equation}
 W_{p,d_\eta}\{P^k(z,\cdot),\pi_{\gamma,\beta}\}
 \leq r_{\gamma,\beta,p}^{k}
 \left(\int d_\eta(z,z')^p\,\pi_{\gamma,\beta}(dz')\right)^{1/p}.
 \label{eq:wasserstein-direct}
\end{equation}
An analogous statement holds under Theorem~\ref{thm:allbeta} with $p=2$ and the metric $d_P(z,z')=\{(z-z')^\trans(P_\beta\otimes I_d)(z-z')\}^{1/2}$.
\end{corollary}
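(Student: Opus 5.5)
The plan is to derive both claims from the synchronous contraction \eqref{eq:direct-contraction}, applied with random initial states that are independent of the innovations driving the subsequent updates. The only ingredient beyond Theorem~\ref{thm:direct} is the conditional form of Proposition~\ref{prop:base}. It allows the comparison \eqref{eq:comparison}, and hence \eqref{eq:direct-contraction}, to hold when $(y_0,s_0)$ is random and independent of the future innovations, with $|y_0|,|s_0|$ replaced by $\|y_0\|_p,\|s_0\|_p$. Conditioning on the initial pair and integrating the $p$th power gives exactly this, as already noted in Proposition~\ref{prop:ar2}.

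For the classical GMC bound \eqref{eq:tail-gmc-direct}, note that $Z_{k,\{m\}}^\circ$ is the stationary solution driven by a modified sequence. That sequence agrees with $(X_j)$ for $j\ge k-m+1$ and uses the independent copy $(X'_j)$ for $j\le k-m$. Both $Z_k^\circ$ and $Z_{k,\{m\}}^\circ$ are therefore obtained by applying the same $m$ maps $\Phi_{X_{k-m+1}},\dots,\Phi_{X_k}$ to two states at time $k-m$. These states are $Z_{k-m}^\circ$ and its version built from $(X'_j)_{j\le k-m}$, and both are independent of $X_{k-m+1},\dots,X_k$.

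The first step is to prove a per-step bound for the pair of $d_\eta$-distances. Since $\|a+\eta b\|$ is bounded by $\|a\|_p+\eta\|b\|_p$ in $L^p$ by Minkowski, I would bound $\|d_\eta(Z_k^\circ,Z_{k,\{m\}}^\circ)\|_p$ by $\|y_m\|_p+\eta\|s_m\|_p$ for the synchronous pair. The conditional form of \eqref{eq:direct-contraction} then bounds this by $r^m\{\|Y^\circ_{k-m}-Y'^\circ_{k-m}\|_p+\eta\|S^\circ-S'^\circ\|_p\}$. By stationarity and the triangle inequality, this is at most $2r^m(\|Y_0^\circ\|_p+\eta\|S_0^\circ\|_p+c)$. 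That quantity is finite because Theorem~\ref{thm:direct} gives finite $p$th moments, which identifies $C$.

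For \eqref{eq:wasserstein-direct}, I would use a coupling argument. Let $Z'\sim\pi_{\gamma,\beta}$ be independent of $(X_1,X_2,\dots)$, run both chains from $z$ and from $Z'$ with common innovations, and note that the second chain stays stationary at each $k$. The pair $(Z_k^z,Z_k^{Z'})$ is then a coupling of $P^k(z,\cdot)$ and $\pi_{\gamma,\beta}$, so $W_{p,d_\eta}$ is bounded by $\|d_\eta(Z_k^z,Z_k^{Z'})\|_p$. This is the step where care is needed: the target bound has $(\E d_\eta(z,Z')^p)^{1/p}$ on the right, whereas \eqref{eq:direct-contraction} controls $\|y_k\|_p+\eta\|s_k\|_p$, and that sum dominates $\|d_\eta\|_p$. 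To close the gap, I would condition on $Z'=z'$ and apply the deterministic contraction. This gives
\[
 \bigl(\E[\,|y_k|^p\mid z']\bigr)^{1/p}+\eta\bigl(\E[\,|s_k|^p\mid z']\bigr)^{1/p}\le r^k d_\eta(z,z').
\]
Combining this with $d_\eta\le|y_k|+\eta|s_k|$ and Minkowski conditionally gives $\E[d_\eta(Z_k^z,Z_k^{z'})^p]\le r^{kp}d_\eta(z,z')^p$. Integrating over $z'\sim\pi$ and taking $p$th roots yields \eqref{eq:wasserstein-direct}.

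I expect this conditioning to be the main obstacle. It is the point where product-norm contraction must be converted into a bound on the $p$th moment of a single metric, and it works only because the conditioning is done before the expectation over $Z'$. For the analogous statement under Theorem~\ref{thm:allbeta}, I would follow the same steps with the quadratic energy $V_k$. That theorem gives $\E V_{k}\le\rho^{k}V_0$ conditionally, which is exactly $\E d_P^2$, so the argument goes through without a Minkowski step.
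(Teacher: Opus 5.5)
Your proposal is correct and follows the paper's own argument. For \eqref{eq:tail-gmc-direct} you couple the two replacement-of-the-past states synchronously through the last $m$ common maps, and for \eqref{eq:wasserstein-direct} you couple the chain from $z$ synchronously with an independent stationary start; your step of conditioning on $Z'=z'$ before integrating is the detail the paper's remark that ``the synchronous bound controls the $L^p$ transportation cost'' leaves implicit, and it is what gives $(\int d_\eta^p\,d\pi_{\gamma,\beta})^{1/p}$ rather than the larger product quantity in \eqref{eq:direct-stationary-coupling}.
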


\begin{proof}
At time $k-m$, the two replacement-of-the-past versions have finite $p$th distance and use the same last $m$ innovations.  Applying \eqref{eq:direct-contraction} through these common maps gives \eqref{eq:tail-gmc-direct}.  For \eqref{eq:wasserstein-direct}, couple a trajectory started from $z$ with a stationary trajectory and use the same future innovations.  The synchronous bound controls the $L^p$ transportation cost, and taking the infimum over couplings proves the claim.  The quadratic-metric case is identical.
\end{proof}

\begin{remark}[Why the direct condition restricts large momentum]
\label{rem:direct-limit}
As $\gamma\downarrow0$, $q_{\gamma,p}=1-\mu\gamma+o(\gamma)$, so \eqref{eq:direct-condition} approaches
\begin{equation}
 \beta<\frac{\mu}{\mu+L_p}.
 \label{eq:direct-smallgamma}
\end{equation}
The restriction is caused by taking absolute values separately in \eqref{eq:s-bound}--\eqref{eq:y-bound}.  It is robust to heavy tails but discards the stabilizing sign of the cross term.  The quadratic metric in \Cref{sec:allbeta} removes the qualitative restriction in $L^2$, not the quantitative conservatism.  Theorem~\ref{thm:allp} treats other moment orders with a different, more restrictive step condition.
\end{remark}

\begin{remark}[Scalar quadratic specialization]
For $G(\theta)=h(\theta-\theta^\star)$, $q_{\gamma,2}=\abs{1-\gamma h}$ and $\ell=\gamma h$.  The direct criterion becomes
\[
 \beta\gamma h<(1-\beta)\{1-\abs{1-\gamma h}\}.
\]
It allows $0<\gamma h<2(1-\beta)$ only when $\beta<1/2$.  By contrast, the exact Nesterov stability region in \Cref{sec:quadratic} remains nonempty for every $\beta<1$.  This gap cleanly separates a heavy-tail-robust sufficient condition from a sharp signed linear analysis.
\end{remark}

\section{Mean-square contraction for every momentum parameter}
\label{sec:allbeta}

The direct theorem treats every $p>1$, but its componentwise comparison is conservative when $\beta$ is large.  We now exploit the signed algebra of \eqref{eq:diff-recursion}.  Throughout this section, Assumptions~\ref{ass:strong}--\ref{ass:lp} hold with $p=2$; write $L=L_2$ and
\begin{equation}
 d_\beta=1-\beta.
 \label{eq:dbeta}
\end{equation}
For $0<\beta<1$, introduce the standard averaged-sequence (or slow) coordinate.  This transformation is used in momentum analyses by \citet{GhadimiFeyzmahdavianJohansson2015,YangLinLi2016,LiuGaoYin2020}; see in particular Proposition~5 and equation~(27) of \citet{SebbouhGowerDefazio2021}, with the constant extrapolation coefficient $\beta/(1-\beta)$.  We use its \emph{synchronous difference} as a contraction coordinate:
\begin{equation}
 u_k=x_k+\frac{\beta}{d_\beta}s_k.
 \label{eq:u}
\end{equation}
Since $y_k=x_k+\beta s_k$, the two useful identities are
\begin{equation}
 u_k=y_k+\frac{\beta^2}{d_\beta}s_k,
 \qquad
 u_{k+1}=u_k-\frac{\gamma}{d_\beta}h_{k+1}.
 \label{eq:u-identities}
\end{equation}
The second identity is exact.  The transformation itself is not new; we use it to retain the signed coupling terms that a componentwise comparison loses.

Set
\begin{equation}
 a_\beta=\frac{\beta^2}{d_\beta^2},
 \qquad
 c_\beta=\frac{\beta^2}{d_\beta},
 \label{eq:acbeta}
\end{equation}
and define
\begin{equation}
 V_k=\abs{u_k}^2+a_\beta\abs{s_k}^2
 =\abs{y_k+c_\beta s_k}^2+a_\beta\abs{s_k}^2.
 \label{eq:V}
\end{equation}
In $(y,s)$ coordinates,
\begin{equation}
 V_k=
 \begin{pmatrix}y_k\\s_k\end{pmatrix}^{\!\trans}
 (P_\beta\otimes I_d)
 \begin{pmatrix}y_k\\s_k\end{pmatrix},
 \qquad
 P_\beta=
 \begin{pmatrix}
 1&c_\beta\\
 c_\beta&c_\beta^2+a_\beta
 \end{pmatrix}\succ0.
 \label{eq:Pbeta}
\end{equation}

\begin{theorem}[All-$\beta$ geometric mean-square contraction]
\label{thm:allbeta}
Suppose Assumptions~\ref{ass:strong}--\ref{ass:lp} hold with $p=2$.  Let $0<\beta<1$ and assume
\begin{equation}
 0<\gamma<\Gamma_{\mathrm{NAG}}(\beta)
 :=\frac{2\mu(1-\beta)^2}
 {L^2\{1-\beta+2\beta^2\}}.
 \label{eq:allbeta-bound}
\end{equation}
Define
\begin{align}
 A_{\gamma,\beta}
 &=\frac{2\mu\gamma}{d_\beta}
   -\frac{1+\beta^2}{d_\beta^2}L^2\gamma^2,
 \label{eq:A}\\
 B_{\gamma,\beta}
 &=\frac{\beta^2(1+\beta)}{d_\beta^2}L\gamma,
 \label{eq:B}\\
 D_\beta
 &=\frac{\beta^2(1+\beta)}{d_\beta},
 \label{eq:D}
\end{align}
and
\begin{equation}
 K_{\gamma,\beta}=
 \begin{pmatrix}
 A_{\gamma,\beta}&-B_{\gamma,\beta}\\
 -B_{\gamma,\beta}&D_\beta
 \end{pmatrix}.
 \label{eq:K}
\end{equation}
Then $K_{\gamma,\beta}\succ0$.  With
\begin{equation}
 \kappa_{\gamma,\beta}
 =\min\left\{\frac12,
 \frac{\lambda_{\min}(K_{\gamma,\beta})}
 {\lambda_{\max}(P_\beta)}\right\}>0,
 \qquad
 r_{\mathrm{MS}}=\sqrt{1-\kappa_{\gamma,\beta}}<1,
 \label{eq:kappa}
\end{equation}
we have, conditionally on the past,
\begin{equation}
 \E(V_{k+1}\mid\cF_k)
 \leq(1-\kappa_{\gamma,\beta})V_k.
 \label{eq:V-contract}
\end{equation}
Consequently
\begin{equation}
 \norm{V_k^{1/2}}_2
 \leq r_{\mathrm{MS}}^k V_0^{1/2},
 \label{eq:allbeta-contraction}
\end{equation}
the two-iterate Nesterov state is geometrically mean-square contracting, and it admits a unique causal stationary law with a finite second moment.  When $\beta=0$, Proposition~\ref{prop:base} gives the ordinary-SGD condition $0<\gamma<2\mu/L^2$.
\end{theorem}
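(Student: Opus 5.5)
The plan is a one-step conditional energy computation in the $(y,s)$ coordinates. We reduce $V_k-\E(V_{k+1}\mid\cF_k)$ to a quadratic form in $(\abs{y_k},\abs{s_k})$ with matrix $K_{\gamma,\beta}$, and then use the standard backward-iteration argument of Theorem~\ref{thm:direct}.

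\emph{Step 1: conditional moments of $h_{k+1}$.} Conditionally on $\cF_k$, the points $Y_k,\widetilde Y_k$ are fixed and $X_{k+1}$ is a fresh innovation. Write $\bar h=\E(h_{k+1}\mid\cF_k)=m(Y_k)-m(\widetilde Y_k)$. Assumptions~\ref{ass:strong}--\ref{ass:lp} with $p=2$ then give
\[
\ip{y_k}{\bar h}\ge\mu\abs{y_k}^2,\qquad \E(\abs{h_{k+1}}^2\mid\cF_k)\le L^2\abs{y_k}^2,\qquad \abs{\bar h}\le L\abs{y_k}.
\]

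\emph{Step 2: expand both parts of $V_{k+1}$.} Use the exact identity $u_{k+1}=u_k-(\gamma/d_\beta)h_{k+1}$ from \eqref{eq:u-identities} and $s_{k+1}=\beta s_k-\gamma h_{k+1}$ from \eqref{eq:diff-recursion}. Expanding,
\[
\E(\abs{u_{k+1}}^2\mid\cF_k)=\abs{u_k}^2-\tfrac{2\gamma}{d_\beta}\ip{y_k+c_\beta s_k}{\bar h}+\tfrac{\gamma^2}{d_\beta^2}\E(\abs{h_{k+1}}^2\mid\cF_k),
\]
and similarly for $a_\beta\abs{s_{k+1}}^2$. The $\ip{s_k}{\bar h}$ terms collect into the coefficient
\[
-2\gamma\Bigl(\tfrac{c_\beta}{d_\beta}+a_\beta\beta\Bigr)=-2\gamma\,\tfrac{\beta^2(1+\beta)}{d_\beta^2}.
\]
Bounding $\abs{\ip{s_k}{\bar h}}\le L\abs{s_k}\abs{y_k}$ turns this into $+2B_{\gamma,\beta}\abs{y_k}\abs{s_k}$. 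The quadratic terms contribute $\gamma^2L^2(1+\beta^2)/d_\beta^2\cdot\abs{y_k}^2$. The $s$-diagonal contributes $a_\beta(1-\beta^2)\abs{s_k}^2=D_\beta\abs{s_k}^2$. Altogether,
\[
V_k-\E(V_{k+1}\mid\cF_k)\ \ge\ (\abs{y_k},\abs{s_k})\,K_{\gamma,\beta}\,(\abs{y_k},\abs{s_k})^\trans .
\]
Keeping the signed monotonicity term $\ip{y_k}{\bar h}$ intact, and bounding only the cross term by Cauchy--Schwarz, is the point where sign information is retained.

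\emph{Step 3: positivity of $K_{\gamma,\beta}$.} This is the step I expect to be the main algebraic obstacle, because it is where the explicit threshold must come out exactly. Since $D_\beta>0$, we have $K\succ0$ iff $AD>B^2$, i.e.\ $A>B^2/D=\beta^2(1+\beta)L^2\gamma^2/d_\beta^3$ (this also forces $A>0$). Substituting $A$ reduces the condition to
\[
\frac{2\mu\gamma}{d_\beta}>\frac{L^2\gamma^2}{d_\beta^3}\bigl[(1+\beta^2)(1-\beta)+\beta^2(1+\beta)\bigr]=\frac{L^2\gamma^2(1-\beta+2\beta^2)}{d_\beta^3},
\]
which is exactly \eqref{eq:allbeta-bound}. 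This confirms that the stated weight $a_\beta$ produces the displayed $\Gamma_{\rm NAG}(\beta)$.

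\emph{Step 4: contraction rate.} The quadratic form is at least $\lambda_{\min}(K)(\abs{y_k}^2+\abs{s_k}^2)$. By \eqref{eq:Pbeta}, $V_k\le\lambda_{\max}(P_\beta)(\abs{y_k}^2+\abs{s_k}^2)$. Hence
\[
\E(V_{k+1}\mid\cF_k)\le(1-\kappa)V_k ,
\]
where capping $\kappa$ at $1/2$ keeps $r_{\rm MS}$ well defined. The tower property and iteration then give \eqref{eq:allbeta-contraction}.

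\emph{Step 5: stationarity.} Stationarity follows verbatim from the backward-composition argument in the proof of Theorem~\ref{thm:direct}, with the metric $d_P$ of Corollary~\ref{cor:classical-gmc} in $L^2$ and the anchor $(\theta^\star,0)$, whose one-step displacement has finite second moment by \eqref{eq:sigma-p}. The case $\beta=0$ is Proposition~\ref{prop:base} with $p=2$, since $q_{\gamma,2}<1$ iff $\gamma<2\mu/L^2$.
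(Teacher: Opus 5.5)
Your proposal is correct and follows the paper's proof essentially step for step. It uses the same conditional expansion of $V_{k+1}$ through the slow coordinate, the same monotonicity and Lipschitz bounds, the same quadratic form with matrix $K_{\gamma,\beta}$ compared against $\lambda_{\max}(P_\beta)$, and backward iteration for stationarity. The only minor difference is that you deduce $A_{\gamma,\beta}>0$ directly from $D_\beta>0$ and $A_{\gamma,\beta}D_\beta>B_{\gamma,\beta}^2$, whereas the paper checks it through a separate inequality; both are valid.
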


\begin{proof}
Let
\[
 \overline h_k
 =\E(h_{k+1}\mid\cF_k)
 =m(Y_k)-m(\widetilde Y_k).
\]
Using \eqref{eq:u-identities}, $s_{k+1}=\beta s_k-\gamma h_{k+1}$, and $a=a_\beta$ gives
\begin{align}
 \E(V_{k+1}\mid\cF_k)
 &=V_k-\frac{2\gamma}{d_\beta}\ip{u_k}{\overline h_k}
 -2a\beta\gamma\ip{s_k}{\overline h_k}\notag\\
 &\quad+\gamma^2(d_\beta^{-2}+a)
 \E(\abs{h_{k+1}}^2\mid\cF_k)
 -a(1-\beta^2)\abs{s_k}^2.
 \label{eq:V-expansion}
\end{align}
Because $u_k=y_k+c_\beta s_k$, mean strong monotonicity and stochastic $L^2$ Lipschitz continuity imply
\begin{align}
 \ip{y_k}{\overline h_k}&\geq\mu\abs{y_k}^2,
 \label{eq:mono-y}\\
 \abs{\overline h_k}&\leq L\abs{y_k},
 \label{eq:mean-lip}\\
 \E(\abs{h_{k+1}}^2\mid\cF_k)&\leq L^2\abs{y_k}^2.
 \label{eq:h2}
\end{align}
Substituting $c_\beta=\beta^2/d_\beta$ and $a=\beta^2/d_\beta^2$ into \eqref{eq:V-expansion} yields
\begin{align}
 \E(V_{k+1}\mid\cF_k)
 &\leq V_k-A_{\gamma,\beta}\abs{y_k}^2
 +2B_{\gamma,\beta}\abs{y_k}\abs{s_k}
 -D_\beta\abs{s_k}^2.
 \label{eq:V-K}
\end{align}
The determinant of $K_{\gamma,\beta}$ simplifies to
\begin{equation}
 \det K_{\gamma,\beta}
 =\frac{\beta^2(1+\beta)\gamma}{d_\beta^4}
 \left[2\mu d_\beta^2
 -L^2\gamma\{d_\beta+2\beta^2\}\right].
 \label{eq:detK}
\end{equation}
Thus \eqref{eq:allbeta-bound} gives $\det K_{\gamma,\beta}>0$.  It also implies $A_{\gamma,\beta}>0$ because
\[
 \frac{d_\beta^2}{d_\beta+2\beta^2}
 \leq\frac{d_\beta}{1+\beta^2}.
\]
Hence $K_{\gamma,\beta}\succ0$.

For $z=(y,s)\in\R^{2d}$, the last three terms of \eqref{eq:V-K} are bounded above by
\[
 -\lambda_{\min}(K_{\gamma,\beta})
 (\abs y^2+\abs s^2).
\]
On the other hand,
\[
 V=z^\trans(P_\beta\otimes I_d)z
 \leq\lambda_{\max}(P_\beta)(\abs y^2+\abs s^2).
\]
Equations \eqref{eq:kappa} and \eqref{eq:V-K} therefore imply \eqref{eq:V-contract}.  Iteration and integration give \eqref{eq:allbeta-contraction}.  The backward-iteration construction used in Theorem~\ref{thm:direct}, now in the norm generated by $P_\beta\otimes I_d$, proves existence, uniqueness, and finite second moment of the stationary law.
\end{proof}

\begin{remark}[Comparison with the analogous heavy-ball certificate]
\label{rem:hb-comparison}
For heavy ball, the gradient difference is evaluated at $x$, and the same standard coordinate $u=x+\beta s/d_\beta$ satisfies $u_+=u-\gamma h/d_\beta$.  For $|u|^2+a|s|^2$ the corresponding coefficients are
\[
 A(a)=\frac{2\mu\gamma}{d_\beta}-(d_\beta^{-2}+a)L^2\gamma^2,\quad
 B(a)=\gamma L(\beta/d_\beta^2+a\beta),\quad
 D(a)=a(1-\beta^2).
\]
Writing $t=ad_\beta^2$, the determinant condition becomes
\[
 \gamma<\frac{2\mu(1+\beta)d_\beta^2t}
 {L^2\{t^2+(1+\beta^2)t+\beta^2\}}.
\]
Differentiation gives the maximizing value $t=\beta$, and hence the sufficient heavy-ball boundary $2\mu(1-\beta)^2/[L^2(1+\beta)]$.  Nesterov's fixed-coordinate calculation instead gives $t=\beta^2$ and \eqref{eq:allbeta-bound}.  This is a comparison of restricted certificates, not a comparison of the sharp stability regions; in particular, the denominator $1-\beta+2\beta^2$ is not an intrinsic ``price'' of look-ahead evaluation.
\end{remark}

\subsection{Optimality within a natural quadratic family}

The weight $a_\beta$ in \eqref{eq:V} is not an arbitrary convenient choice.  Keep the slow coordinate \eqref{eq:u}, but consider
\begin{equation}
 V_k(a)=\abs{u_k}^2+a\abs{s_k}^2,
 \qquad a>0.
 \label{eq:Va}
\end{equation}
Repeating the previous proof gives a positive-definiteness condition whose maximal admissible step size can be optimized analytically.

\begin{proposition}[Best weight in the slow-coordinate family]
\label{prop:optimal-weight}
For $0<\beta<1$, set $d=d_\beta$ and $t=ad^2$.  The inequalities used in the preceding proof certify contraction of the energy \eqref{eq:Va} whenever
\begin{equation}
 0<\gamma<\Gamma_\beta(t)
 :=\frac{2\mu(1+\beta)d^2t}
 {L^2\{t^2+(1-\beta^2+2\beta^3)t+\beta^4\}}.
 \label{eq:Gamma-t}
\end{equation}
The function $t\mapsto\Gamma_\beta(t)$ is uniquely maximized at $t=\beta^2$, equivalently
\begin{equation}
 a=a_\beta=\frac{\beta^2}{(1-\beta)^2}.
\end{equation}
At this value, \eqref{eq:Gamma-t} reduces exactly to \eqref{eq:allbeta-bound}.  This is an optimum of that derived sufficient bound with the slow coordinate fixed, not an optimum over all quadratic forms or the exact contraction region of the chosen form.
\end{proposition}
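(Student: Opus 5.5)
The plan is to rerun the proof of Theorem~\ref{thm:allbeta} with a free weight $a>0$ and keep every coefficient as a function of $a$. Write $d=d_\beta$ and $u_k=y_k+c_\beta s_k$ with $c_\beta=\beta^2/d$. Using the exact identities \eqref{eq:u-identities} and $s_{k+1}=\beta s_k-\gamma h_{k+1}$, expand $\E\{V_{k+1}(a)\mid\cF_k\}$ exactly as in \eqref{eq:V-expansion}, with $a_\beta$ replaced by $a$. The only inequalities allowed are the three used before: \eqref{eq:mono-y} on $-(2\gamma/d)\ip{y_k}{\overline h_k}$, \eqref{eq:mean-lip} on the $s$-cross terms $-(2\gamma c_\beta/d+2a\beta\gamma)\ip{s_k}{\overline h_k}$, and \eqref{eq:h2} on the second-moment term. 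This produces the bound $V_k(a)-A(a)\abs{y_k}^2+2B(a)\abs{y_k}\abs{s_k}-D(a)\abs{s_k}^2$ with
\[
 A(a)=\frac{2\mu\gamma}{d}-\Bigl(\frac1{d^2}+a\Bigr)L^2\gamma^2,\quad
 B(a)=\gamma L\Bigl(\frac{\beta^2}{d^2}+a\beta\Bigr),\quad
 D(a)=a(1-\beta^2).
\]
Contraction is then certified exactly when the $2\times2$ matrix with these entries is positive definite. The passage from positive definiteness to a rate $1-\kappa$ goes through $\lambda_{\max}$ of the corresponding $P$ as in \eqref{eq:kappa}.

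Next I will reduce positive definiteness to a single inequality. Since $D(a)>0$, positive definiteness is equivalent to $A(a)D(a)>B(a)^2$, because this already forces $A(a)>0$. I will substitute $t=ad^2$, multiply through by $d^4/\gamma$, and collect terms. The condition becomes
\[
 2\mu d\,(1-\beta^2)\,t>\gamma L^2\bigl[(1-\beta^2)t(1+t)+\beta^2(\beta+t)^2\bigr].
\]
The bracket expands to $t^2+(1-\beta^2+2\beta^3)t+\beta^4$. Using $d(1-\beta^2)=(1+\beta)d^2$, this is precisely $\gamma<\Gamma_\beta(t)$ in \eqref{eq:Gamma-t}.

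For the optimization, I will write $\Gamma_\beta(t)=2\mu(1+\beta)d^2/\bigl[L^2\{t+\beta^4/t+(1-\beta^2+2\beta^3)\}\bigr]$. Maximizing $\Gamma_\beta$ over $t>0$ is then the same as minimizing $t+\beta^4/t$. By AM--GM this has the unique minimizer $t=\beta^2$, since the function is strictly convex on $(0,\infty)$. That value corresponds to $a=\beta^2/d^2=a_\beta$. At $t=\beta^2$ the denominator equals $L^2\beta^2(1+\beta^2+2\beta^3)$, and the factorization $1+\beta^2+2\beta^3=(1+\beta)(1-\beta+2\beta^2)$ gives $\Gamma_\beta(\beta^2)=2\mu d^2/\{L^2(1-\beta+2\beta^2)\}$. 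This is \eqref{eq:allbeta-bound}.

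The main obstacle is purely bookkeeping. The cross-coefficient $B(a)$ has to be collected correctly from both the $u$-part and the $a\abs{s}^2$-part. In particular the $c_\beta$ term contributes $\beta^2/d^2$, not $\beta/d^2$ as in heavy ball, and getting this wrong changes the optimal $t$. I would double-check the expansion by confirming that at $a=a_\beta$ the coefficients reduce to \eqref{eq:A}--\eqref{eq:D} and that the determinant matches \eqref{eq:detK}.
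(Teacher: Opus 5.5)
Your proposal is correct and follows the paper's proof. It uses the same coefficients $A(a),B(a),D(a)$, the same reduction of positive definiteness to $A(a)D(a)>B(a)^2$ under $t=ad^2$, and the same evaluation at $t=\beta^2$; the only difference is cosmetic, since you locate the unique maximizer by AM--GM on $t+\beta^4/t$, whereas the paper differentiates and finds the sign of $\beta^4-t^2$.
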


\begin{proof}
For general $a$, \eqref{eq:V-expansion} gives the matrix coefficients
\begin{align*}
 A(a)&=\frac{2\mu\gamma}{d}-(d^{-2}+a)L^2\gamma^2,\\
 B(a)&=\gamma L\left(\frac{\beta^2}{d^2}+a\beta\right),\\
 D(a)&=a(1-\beta^2).
\end{align*}
The determinant condition $A(a)D(a)-B(a)^2>0$ is equivalent, after substituting $a=t/d^2$, to \eqref{eq:Gamma-t}.  Its derivative has the sign of
\[
 \beta^4-t^2,
\]
because
\[
 \frac{d}{dt}\left\{
 \frac{t}{t^2+ct+\beta^4}\right\}
 =\frac{\beta^4-t^2}{(t^2+ct+\beta^4)^2},
 \qquad c=1-\beta^2+2\beta^3.
\]
Thus the unique maximizer is $t=\beta^2$.  Substitution gives
\[
 \Gamma_\beta(\beta^2)
 =\frac{2\mu(1+\beta)d^2}
 {L^2(1+\beta^2+2\beta^3)}
 =\frac{2\mu d^2}{L^2(d+2\beta^2)},
\]
which is \eqref{eq:allbeta-bound}.
\end{proof}

\begin{corollary}[$L^p$ contraction for $1<p\leq2$ under second moments]
\label{cor:pbelow2}
Under the assumptions and step-size condition of Theorem~\ref{thm:allbeta}, for every $1<p\leq2$,
\[
 \norm{V_k^{1/2}}_p
 \leq \norm{V_k^{1/2}}_2
 \leq r_{\mathrm{MS}}^kV_0^{1/2}.
\]
Thus the all-$\beta$ result also supplies $L^p$ GMC for $p<2$, but it requires the finite second-moment assumptions used in Theorem~\ref{thm:allbeta}.  Theorems~\ref{thm:direct} and \ref{thm:allp} genuinely operate without a second moment; their respective step-size conditions must be checked instead of inferring infinite-variance results from this corollary.
\end{corollary}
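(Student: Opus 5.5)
The argument is two inequalities chained: Lyapunov's (monotonicity of $L^p$ norms in $p$ on a probability space) and the mean-square bound already proved in Theorem~\ref{thm:allbeta}.

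\emph{Step 1 (first inequality).} Work with the synchronous pair from deterministic initial states, so that $V_0$ is a deterministic number. For $1<p\leq2$, Jensen's inequality applied to the concave map $t\mapsto t^{p/2}$ gives
\[
\E\bigl(V_k^{p/2}\bigr)\leq\bigl(\E V_k\bigr)^{p/2},
\]
that is, $\norm{V_k^{1/2}}_p\leq\norm{V_k^{1/2}}_2$. Only nonnegativity of $V_k$ and finiteness of $\E V_k$ are needed here. Finiteness follows because taking expectations in the conditional bound \eqref{eq:V-contract} and iterating gives $\E V_k\leq(1-\kappa_{\gamma,\beta})^kV_0<\infty$.

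\emph{Step 2 (second inequality).} This is exactly \eqref{eq:allbeta-contraction}, which already yields $\norm{V_k^{1/2}}_2\leq r_{\mathrm{MS}}^kV_0^{1/2}$ under the step-size condition \eqref{eq:allbeta-bound}. Combining Steps 1 and 2 gives the displayed chain.

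\emph{Step 3 (GMC interpretation).} Since $V_k=z_k^\trans(P_\beta\otimes I_d)z_k$ with $P_\beta\succ0$, the quantity $V_k^{1/2}$ is equivalent to $\abs{y_k}+\abs{s_k}$ up to constants depending only on $\beta$. By the nonsingular change of coordinates \eqref{eq:inverse-state}, it is also equivalent to the original two-iterate difference. Hence the bound is $L^p$ GMC in the sense of Definition~\ref{def:product-gmc}, with different weights and a multiplicative constant.

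\emph{Hypotheses.} The second-moment conditions (Assumption~\ref{ass:lp} with $p=2$) are used only through Theorem~\ref{thm:allbeta}, which is what the corollary's caveat records.

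There is no substantive obstacle. The only point requiring care is that Step 1 is applied to deterministic initial states. For a random initial state independent of the innovations, one would first condition on it and then integrate; in that case the right-hand side becomes $r_{\mathrm{MS}}^k\norm{V_0^{1/2}}_2$ rather than an $L^p$ norm of $V_0^{1/2}$.
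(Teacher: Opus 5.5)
Your proof is correct and follows the argument the paper leaves implicit: monotonicity of $L^p$ norms in $p$ (Jensen with $t\mapsto t^{p/2}$), chained with \eqref{eq:allbeta-contraction} from Theorem~\ref{thm:allbeta}. One small correction concerns your closing aside on random initial states. If you condition on an independent initial state and then integrate, conditional Jensen gives $\E(V_k^{p/2}\mid\cF_0)\le r_{\mathrm{MS}}^{kp}V_0^{p/2}$ and hence $\norm{V_k^{1/2}}_p\le r_{\mathrm{MS}}^k\norm{V_0^{1/2}}_p$. So the $L^p$ norm of $V_0^{1/2}$ does appear on the right; the $L^2$ norm arises only if you apply the unconditional chain directly.
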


\subsection{Finite \texorpdfstring{$p$}{p}th moments at arbitrary fixed momentum}
\label{sec:power-extension}

The Perron condition is useful quantitatively, but its small-momentum restriction is not an impossibility result.  The next theorem uses a power of the quadratic energy and establishes a positive, generally much smaller, step-size interval for every fixed $\beta<1$ and every $p>1$.  In particular, it does not obtain a fractional-moment statement by assuming a second moment.  The proof applies the power remainder inequality at the \emph{zero-step momentum update}, rather than at the unchanged state.  This preserves the strict damping of the increment before controlling the stochastic-gradient remainder.

For $0<\beta<1$, retain $d_\beta,a_\beta,c_\beta,P_\beta$ from \eqref{eq:dbeta}--\eqref{eq:Pbeta}.  Define the positive constants
\begin{align}
 \underline b_{p,\beta}&=\min\{1,\beta^{p-2}\},&
 \overline b_{p,\beta}&=\max\{1,\beta^{p-2}\},&
 \tau_p&=\min\{1,p/2\},\label{eq:power-b}\\
 \mathsf A_p&=\frac{p\underline b_{p,\beta}\mu}{d_\beta},&
 \mathsf D_p&=\tau_p a_\beta(1-\beta^2),&
 \mathsf B_p&=\frac{p\overline b_{p,\beta}\beta^2(1+\beta)L_p}{d_\beta^2},\label{eq:power-ABD}\\
 \mathfrak d_p&=\frac{\min\{\mathsf A_p,\mathsf D_p\}}
 {2\lambda_{\max}(P_\beta)},&
 H_p&=\frac{(1+\beta^2)L_p}{d_\beta}.&&\label{eq:power-delta}
\end{align}
Here $\mathfrak d_p$ denotes a deterministic drift constant.  Set
\begin{equation}
 C_p=\begin{cases}
  2^{2-p}H_p^p,&1<p<2,\\
  p(p-1)2^{p-3}H_p^2,&p\geq2,
 \end{cases}
 \qquad
 \varepsilon_p=\begin{cases}
  \{\mathfrak d_p/(2C_p)\}^{1/(p-1)},&1<p<2,\\
  \min\{H_p^{-1},\mathfrak d_p/(2C_p)\},&p\geq2.
 \end{cases}
 \label{eq:power-C}
\end{equation}
All of these constants may depend on the fixed momentum parameter.

\begin{theorem}[All fixed momentum values under a finite $p$th moment]
\label{thm:allp}
Suppose Assumptions~\ref{ass:strong}--\ref{ass:lp} hold for some $p>1$.  Let $0<\beta<1$ and
\begin{equation}
 0<\gamma<\gamma_{p,\beta}^{\mathrm{pow}}
 :=\min\left\{1,\mathfrak d_p^{-1},
       \frac{\mathsf A_p\mathsf D_p}{\mathsf B_p^2},\varepsilon_p\right\}.
 \label{eq:allp-step}
\end{equation}
For the synchronous energy $V_k$ in \eqref{eq:V},
\begin{equation}
 \E(V_{k+1}^{p/2}\mid\cF_k)
 \leq (1-\mathfrak d_p\gamma/2)V_k^{p/2}.
 \label{eq:allp-contraction}
\end{equation}
Consequently the augmented recursion is $L^p$ GMC with rate
$r_{p,\beta}^{\mathrm{pow}}=(1-\mathfrak d_p\gamma/2)^{1/p}<1$ and has a unique causal stationary solution with finite $p$th moment.  For $\beta=0$, Proposition~\ref{prop:base} applies instead.
\end{theorem}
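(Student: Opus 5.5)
We work in the slow coordinates of Section~\ref{sec:allbeta}, where $(u,s)$ evolve by
\[
 u_{k+1}=u_k-\frac{\gamma}{d_\beta}h_{k+1},
 \qquad
 s_{k+1}=\beta s_k-\gamma h_{k+1}.
\]
Following the hint preceding the theorem, we split the update into a deterministic zero-step momentum move and a stochastic remainder. Set $z_k=(u_k,\sqrt{a_\beta}\,s_k)$ and $z_k^0=(u_k,\sqrt{a_\beta}\,\beta s_k)$, so that $V_k=|z_k|^2$ and $|z_k^0|^2=|u_k|^2+a_\beta\beta^2|s_k|^2$. Then
\[
 z_{k+1}=z_k^0-\gamma w_{k+1},
 \qquad
 w_{k+1}=(d_\beta^{-1}h_{k+1},\sqrt{a_\beta}\,h_{k+1}),
\]
and $|w_{k+1}|\leq\sqrt{d_\beta^{-2}+a_\beta}\,|h_{k+1}|=\sqrt{1+\beta^2}\,|h_{k+1}|/d_\beta$. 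Conditionally on $\cF_k$, Assumption~\ref{ass:lp} gives $\norm{h_{k+1}}_p\leq L_p|y_k|$, and $|y_k|\leq|u_k|+c_\beta|s_k|$ is bounded by a multiple of $V_k^{1/2}$. This is where a constant of the type $H_p$ enters.

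The first step applies the power inequalities used for Proposition~\ref{prop:base} in \Cref{app:base} at the base point $z_k^0$ rather than at $z_k$:
\[
 |z^0-\gamma w|^p\leq|z^0|^p-p\gamma|z^0|^{p-2}\ip{z^0}{w}+R_p(\gamma w,z^0).
\]
For $1<p<2$ the remainder satisfies $R_p\leq 2^{2-p}\gamma^p|w|^p$. For $p\geq2$ it is bounded by $p(p-1)2^{p-3}\gamma^2|w|^2(|z^0|^{p-2}+\gamma^{p-2}|w|^{p-2})$, and the constraint $\gamma\leq H_p^{-1}$ keeps the second piece comparable to the first. Taking conditional expectations, the remainder contributes at most $C_p\gamma^{\min(p,2)}V_k^{p/2}$, possibly after a moment bound on $|w|^p$ for $p>2$.

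The linear term is $-p\gamma|z^0|^{p-2}\bigl(\ip{u_k}{\overline h_k}/d_\beta+a_\beta\beta\ip{s_k}{\overline h_k}\bigr)$. Writing $u_k=y_k+c_\beta s_k$, this splits into a good term $\mu|y_k|^2/d_\beta$ and cross terms bounded by $L_p|y_k||s_k|\,\beta^2(1+\beta)/d_\beta^2$, using $|\overline h_k|\leq L_p|y_k|$, which follows from Jensen with $p>1$. We then compare the prefactor $|z^0|^{p-2}$ with $V_k^{(p-2)/2}$. Since $\beta^2V_k\leq|z^0|^2\leq V_k$, the factor $|z^0|^{p-2}$ lies between $\underline b_{p,\beta}V_k^{(p-2)/2}$ and $\overline b_{p,\beta}V_k^{(p-2)/2}$, giving exactly the constants $\mathsf A_p$ and $\mathsf B_p$.

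The zero-step damping $|z^0|^p\leq V_k^{p/2}-(\text{gain})$ comes from $|z^0|^2=V_k-a_\beta(1-\beta^2)|s_k|^2$. The elementary inequality $(V-x)^{p/2}\leq V^{p/2}-\tau_p xV^{(p-2)/2}$ for $0\le x\le V$, which holds for both $p\ge2$ and $p<2$ with $\tau_p=\min\{1,p/2\}$, yields the gain $\mathsf D_p|s_k|^2V_k^{(p-2)/2}$.

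The key step, and the main obstacle, is that this damping term is not multiplied by $\gamma$, while the cross term is of order $\gamma$. Collecting terms gives the quadratic form
\[
 -V_k^{(p-2)/2}\bigl(\gamma\mathsf A_p|y|^2-2\gamma\mathsf B_p|y||s|+\mathsf D_p|s|^2\bigr).
\]
Completing the square, this form is at least $\tfrac12\min\{\gamma\mathsf A_p,\mathsf D_p\}(|y|^2+|s|^2)$ when $\gamma\mathsf B_p^2\leq\mathsf A_p\mathsf D_p/\ldots$, which is the constraint $\gamma<\mathsf A_p\mathsf D_p/\mathsf B_p^2$. With $\gamma\leq1$ this is $\geq\tfrac{\gamma}{2}\min\{\mathsf A_p,\mathsf D_p\}(|y|^2+|s|^2)\geq\mathfrak d_p\gamma V_k$, using $V\leq\lambda_{\max}(P_\beta)(|y|^2+|s|^2)$.

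It then remains to check that $\gamma<\varepsilon_p$ makes the remainder at most $\tfrac12\mathfrak d_p\gamma V_k^{p/2}$, and that $\gamma<\mathfrak d_p^{-1}$ keeps the factor $1-\mathfrak d_p\gamma/2$ positive. Together these yield \eqref{eq:allp-contraction}. I expect the bookkeeping of constants, especially the $p\geq2$ remainder with $|z^0|^{p-2}$ versus $V^{(p-2)/2}$, to be the delicate part.

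Iterating \eqref{eq:allp-contraction} gives $\norm{V_k^{1/2}}_p\leq(1-\mathfrak d_p\gamma/2)^{k/p}V_0^{1/2}$. Since $V$ is equivalent to the $(y,s)$ norm via $P_\beta\succ0$, this is $L^p$ GMC. Stationarity and uniqueness follow from the backward-iteration argument of Theorem~\ref{thm:direct}, now in the $P_\beta$-norm, using the anchor moment $\sigma_p<\infty$.
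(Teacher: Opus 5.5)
Your architecture is exactly the paper's. You expand the power at the zero-step point $z^0=D_0w$. The sandwich $\beta^2V\le|z^0|^2\le V$ gives $\mathsf A_p,\mathsf B_p$, and the zero-step gain $(V-x)^{p/2}\le V^{p/2}-\tau_pxV^{(p-2)/2}$ gives $\mathsf D_p$. The bound $V\le\lambda_{\max}(P_\beta)(|y|^2+|s|^2)$ then gives the drift $\mathfrak d_p\gamma$, and $\varepsilon_p$ absorbs the remainder. However, two constant-level slips keep you from reaching the stated factor $1-\mathfrak d_p\gamma/2$ under exactly \eqref{eq:allp-step}, and the second is a real gap.

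\textbf{The cross-term coefficient.} Your own linear-term bound produces $\gamma\mathsf B_pV^{(p-2)/2}|y||s|$, not $2\gamma\mathsf B_p|y||s|$, because $\mathsf B_p$ already contains $p\overline b_{p,\beta}\beta^2(1+\beta)L_p/d_\beta^2$. With your factor $2$, completing the square would require $\gamma\le\mathsf A_p\mathsf D_p/(4\mathsf B_p^2)$. With the correct coefficient, use $\gamma\mathsf B_p|y||s|\le\tfrac{\gamma}{2}\mathsf A_p|y|^2+\tfrac{\gamma\mathsf B_p^2}{2\mathsf A_p}|s|^2$. Together with $\gamma\le\mathsf A_p\mathsf D_p/\mathsf B_p^2$ and $\gamma\le1$, this gives $-\tfrac{\gamma}{2}(\mathsf A_p|y|^2+\mathsf D_p|s|^2)$, which is the constraint you state and what the paper does.

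\textbf{The $p\ge2$ remainder (the genuine gap).} Your split bound $p(p-1)2^{p-3}\gamma^2|w|^2(|z^0|^{p-2}+\gamma^{p-2}|w|^{p-2})$ is too lossy for the stated constants. After conditioning, use $\E(|w|^2\mid\cF_k)\le H_p^2V_k$ and $\gamma H_p\le1$. Each of the two pieces can then contribute $p(p-1)2^{p-3}\gamma^2H_p^2V^{p/2}$, so you obtain $2C_p\gamma^2V^{p/2}$ rather than the $C_p\gamma^2V^{p/2}$ you assert. Already at $p=2$ your bound is $2\gamma^2|w|^2$, while the exact remainder is $\gamma^2|w|^2$.

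Under $\gamma<\varepsilon_p\le\mathfrak d_p/(2C_p)$, this only shows the remainder is below $\mathfrak d_p\gamma V^{p/2}$, which can consume the entire drift. Your bookkeeping therefore yields the factor $1-\mathfrak d_p\gamma+2C_p\gamma^2$. That factor is $<1$, so GMC and stationarity survive, but it exceeds $1-\mathfrak d_p\gamma/2$ whenever $\gamma>\mathfrak d_p/(4C_p)$. So the stated rate $r^{\mathrm{pow}}_{p,\beta}$ does not follow.

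The fix is not to split $(a+b)^{p-2}$. With $a=|z^0|\le V^{1/2}$ (which is $\cF_k$-measurable) and $b=\gamma|w|$, keep the integral form $R\le\tfrac{p(p-1)}{2}(a+b)^{p-2}b^2$. Apply conditional H\"older with exponents $p/(p-2)$ and $p/2$ (the case $p=2$ is immediate). This gives $\E(R\mid\cF_k)\le\tfrac{p(p-1)}{2}(1+\gamma H_p)^{p-2}\gamma^2H_p^2V^{p/2}\le C_p\gamma^2V^{p/2}$.

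The paper reaches the same bound differently. It uses monotonicity of the scalar remainder in $a$ and $b$, then the concavity--Jensen inequality \eqref{eq:jensen-remainder}, then $(1+x)^p-1-px\le p(p-1)2^{p-3}x^2$ on $[0,1]$.

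Finally, treat $V_k=0$ separately: then $h_{k+1}=0$ almost surely. This case is needed because $|z^0|^{p-2}$ is undefined there when $p<2$.
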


\begin{proof}
We condition throughout on $\cF_k$ and omit the index $k$.  Put
\[
 w=(u,\sqrt{a_\beta}s),\qquad
 D_0=\diag(I_d,\beta I_d),\qquad
 T_0=\begin{pmatrix}d_\beta^{-1}I_d\\ \sqrt{a_\beta}I_d\end{pmatrix}.
\]
The exact recursion is $w_+=D_0w-\gamma T_0h$, and $|w|^2=V$.  Let $W=|D_0w|^2=V-a_\beta(1-\beta^2)|s|^2$.  Since $\beta^2V\leq W\leq V$, elementary integration of $t^{p/2-1}$ on $[W,V]$ gives
\begin{equation}
 W^{p/2}\leq V^{p/2}
 -\tau_p a_\beta(1-\beta^2)|s|^2 V^{p/2-1}.
 \label{eq:power-zero-step}
\end{equation}
Indeed, $1-t^{p/2}\geq\min\{1,p/2\}(1-t)$ for $0\leq t\leq1$.
For $V>0$,
\[
 \underline b_{p,\beta}V^{p/2-1}
 \leq W^{p/2-1}\leq
 \overline b_{p,\beta}V^{p/2-1}.
\]
Writing $\bar h=\E(h\mid\cF_k)$, the inner product in the first-order Taylor term is
\begin{align*}
 \langle D_0w,T_0\bar h\rangle
 &=d_\beta^{-1}\langle u,\bar h\rangle
   +a_\beta\beta\langle s,\bar h\rangle\\
 &=d_\beta^{-1}\langle y,\bar h\rangle
   +\frac{\beta^2(1+\beta)}{d_\beta^2}\langle s,\bar h\rangle.
\end{align*}
Mean monotonicity and $|\bar h|\leq L_p|y|$ thus imply
\begin{align}
 -p\gamma W^{p/2-1}\langle D_0w,T_0\bar h\rangle
 &\leq \gamma V^{p/2-1}
       \{-\mathsf A_p|y|^2+\mathsf B_p|y||s|\}\notag\\
 &\leq \gamma V^{p/2-1}
       \left\{-\frac{\mathsf A_p}{2}|y|^2
       +\frac{\mathsf B_p^2}{2\mathsf A_p}|s|^2\right\}.
 \label{eq:power-first-order}
\end{align}
Combine \eqref{eq:power-zero-step}--\eqref{eq:power-first-order}.  Since
$\gamma\leq\mathsf A_p\mathsf D_p/\mathsf B_p^2$ and $\gamma\leq1$, the sum of the zero-order and first-order terms is at most
\[
 V^{p/2}-\frac{\gamma}{2}V^{p/2-1}
       \{\mathsf A_p|y|^2+\mathsf D_p|s|^2\}
 \leq (1-\mathfrak d_p\gamma)V^{p/2}.
\]
The last step uses $V\leq\lambda_{\max}(P_\beta)(|y|^2+|s|^2)$.

It remains to bound the power remainder.  Since
\[
 |y|=|u-c_\beta s|\leq\sqrt{1+\beta^2}\,V^{1/2},
 \qquad
 \|T_0\|_{\mathrm{op}}=\sqrt{1+\beta^2}/d_\beta,
\]
we have $\|T_0h\|_p\leq H_pV^{1/2}$ conditionally.  For $1<p<2$, Lemma~\ref{lem:power}(ii) bounds the conditional remainder by $C_p\gamma^pV^{p/2}$.  This calculation uses no second moment.  For $p\geq2$, the scalar remainder $(a+b)^p-a^p-pa^{p-1}b$ is nondecreasing in $a,b\geq0$, as is seen from its integral second-derivative formula.  Hence $|D_0w|\leq V^{1/2}$, Lemma~\ref{lem:power}(i), and the Jensen remainder bound proved in \Cref{app:base} give
\[
 \E(\text{remainder}\mid\cF_k)
 \leq\{(1+\gamma H_p)^p-1-p\gamma H_p\}V^{p/2}
 \leq C_p\gamma^2V^{p/2},
\]
where the final inequality follows by integrating the second derivative on $[0,\gamma H_p]\subseteq[0,1]$.  Thus the remainder is at most $(\mathfrak d_p\gamma/2)V^{p/2}$ under \eqref{eq:allp-step}, proving \eqref{eq:allp-contraction}.  If $V=0$, stochastic Lipschitz continuity gives $h=0$ almost surely, and the same conclusion is immediate.  Taking $p$th roots and using the finite $p$th anchor displacement proves stationarity and GMC by backward iteration.
\end{proof}

\begin{remark}[What the power argument does and does not improve]
\label{rem:power-scope}
Theorem~\ref{thm:allp} removes the qualitative small-$\beta$ obstruction for both $1<p<2$ and $p>2$ under mean-only curvature.  It does \emph{not} make the useful mean-square interval in Theorem~\ref{thm:allbeta} an $L^p$ interval automatically.  The new bound is substantially smaller: for $(\mu,L_p)=(1,1.5)$ and $p=1.4$, \eqref{eq:allp-step} is approximately $1.51\times10^{-8}$ at $\beta=0.8$ and $1.54\times10^{-10}$ at $\beta=0.9$.  It is therefore a finite-moment existence and coupling theorem, not a practical tuning rule.  Obtaining useful all-momentum fractional-moment regions remains a substantive quantitative problem.  When its condition holds, the direct Perron theorem can be far more informative.
\end{remark}

\section{Verification in statistical learning models}
\label{sec:models}

The assumptions are stated directly in terms of the stochastic gradient and can be checked without assuming bounded gradients.  The following examples also show where the fractional-moment theorem is useful.

\subsection{Random-design least squares}

Let $X=(Z,R)$ with $Z\in\R^d$ and
\begin{equation}
 R=Z^\trans\theta^\star+\eps,
 \qquad \E(\eps\mid Z)=0,
 \label{eq:lin-model}
\end{equation}
and use the squared loss $g(\theta,(Z,R))=\tfrac12(R-Z^\trans\theta)^2$.  Then
\begin{equation}
 G(\theta,X)=ZZ^\trans(\theta-\theta^\star)-Z\eps,
 \qquad
 m(\theta)=Q(\theta-\theta^\star),
 \quad Q=\E(ZZ^\trans).
 \label{eq:lin-gradient}
\end{equation}
If $Q\succeq\mu I_d$ and
\begin{equation}
 L_p^{\mathrm{LS}}
 :=\sup_{\abs v=1}\norm{ZZ^\trans v}_p<\infty,
 \qquad
 \norm{Z\eps}_p<\infty,
 \label{eq:ls-constants}
\end{equation}
then Assumptions~\ref{ass:strong}--\ref{ass:lp} hold with $L_p=L_p^{\mathrm{LS}}$ and $\sigma_p=\norm{Z\eps}_p$.  Hence Theorem~\ref{thm:direct} applies whenever
\begin{equation}
 \beta\gamma L_p^{\mathrm{LS}}
 <(1-\beta)\{1-q_{\gamma,p}(\mu,L_p^{\mathrm{LS}})\}.
 \label{eq:ls-direct}
\end{equation}
For a standard Gaussian design $Z\sim N(0,I_d)$, $\mu=1$ and rotational symmetry gives, for every unit vector $v$,
\[
 \E|ZZ^\trans v|^2=\E\{|Z|^2(Z^\trans v)^2\}
 =\E Z_1^4+\sum_{j=2}^d\E(Z_1^2Z_j^2)=d+2.
\]
Thus $L_2^{\rm LS}=\sqrt{d+2}$ and the direct mean-square criterion has the small-step momentum range
$\beta<1/(1+\sqrt{d+2})$, approximately $0.333$, $0.224$, and $0.122$ at $d=2,10,50$.  It does not cover momenta $0.8$--$0.95$ in these examples.  The all-momentum quadratic or power theorem requires its own smaller step-size check.  For $1<p<2$, condition \eqref{eq:ls-constants} permits $Z\eps$ to have infinite variance.  If, more strongly, $\abs Z\leq R_Z$ almost surely and ridge regularization $\lambda\abs\theta^2/2$ is added, then the sample Hessian lies in the sector
\[
 \lambda I_d\preceq ZZ^\trans+\lambda I_d
 \preceq(R_Z^2+\lambda)I_d,
\]
so Corollary~\ref{cor:pathwise} and the endpoint LMI of Theorem~\ref{thm:lmi} are available.

\subsection{Ridge-regularized logistic regression}

Let $Y\in\{0,1\}$, $Z\in\R^d$, $\sigma(t)=(1+e^{-t})^{-1}$, and
\begin{equation}
 g(\theta,(Z,Y))
 =-Y Z^\trans\theta+\log(1+e^{Z^\trans\theta})
 +\frac{\lambda}{2}\abs\theta^2,
 \qquad \lambda>0.
 \label{eq:logistic-loss}
\end{equation}
The stochastic gradient and Hessian are
\begin{align}
 G(\theta,X)&=\{\sigma(Z^\trans\theta)-Y\}Z+\lambda\theta,
 \label{eq:logistic-gradient}\\
 \nabla_\theta G(\theta,X)&=
 \sigma(Z^\trans\theta)\{1-\sigma(Z^\trans\theta)\}ZZ^\trans+\lambda I_d.
 \label{eq:logistic-hessian}
\end{align}
Consequently the mean field is $\lambda$-strongly monotone, and
\begin{equation}
 \norm{G(\theta,X)-G(\vartheta,X)}_p
 \leq\left\{\lambda+\frac14\norm{\abs Z^2}_p\right\}
 \abs{\theta-\vartheta}.
 \label{eq:logistic-Lp}
\end{equation}
Thus one may take
\[
 \mu=\lambda,
 \qquad
 L_p=\lambda+\frac14\norm{\abs Z^2}_p.
\]
At the population minimizer, $\abs{\sigma(Z^\trans\theta^\star)-Y}\leq1$, so $\sigma_p<\infty$ follows from $\norm{\abs Z}_p<\infty$.  If $\abs Z\leq R_Z$ almost surely, \eqref{eq:logistic-hessian} gives the samplewise sector
\begin{equation}
 \lambda I_d\preceq\nabla_\theta G(\theta,X)
 \preceq(\lambda+R_Z^2/4)I_d.
 \label{eq:logistic-sector}
\end{equation}
The all-$\beta$ and pathwise GMC theorems therefore give explicit stability regions for constant-parameter stochastic Nesterov logistic regression.  The ridge term is essential for global strong monotonicity; without it, logistic risk may be only locally strongly convex \citep{Bach2014}.

\subsection{Additive heavy-tailed gradient noise}

Suppose
\begin{equation}
 G(\theta,X)=\nabla G_0(\theta)-\xi,
 \qquad \E\xi=0,
 \label{eq:additive-general}
\end{equation}
where $G_0$ is $\mu$-strongly convex with $L$-Lipschitz gradient and $\norm{\xi}_p<\infty$ for some $p>1$.  Then $L_p=L$ and $\sigma_p=\norm{\xi}_p$.  The direct GMC condition becomes
\begin{equation}
 \beta\gamma L<(1-\beta)\{1-q_{\gamma,p}(\mu,L)\}.
 \label{eq:additive-direct}
\end{equation}
No variance is needed when $1<p<2$.  This setting separates two different sources of heavy tails: the innovations may themselves be regularly varying, while the deterministic Jacobian controls memory.  In contrast, multiplicative-noise least squares can generate heavy stationary tails even from light-tailed additive errors \citep{GurbuzbalabanSimsekliZhu2021,HodgkinsonMahoney2021,DamekMentemeier2026}.

\section{Pathwise sector conditions and an endpoint LMI}
\label{sec:pathwise}

The preceding theorems assume curvature only after averaging over $X$.  Stronger samplewise curvature yields stronger conclusions and a useful computational refinement.

\begin{assumption}[Samplewise Hessian sector]
\label{ass:sector}
For almost every $x$, $g(\cdot,x)$ is twice continuously differentiable and there are constants $0<\mu_s\leq L_s<\infty$ such that
\begin{equation}
 \mu_s I_d\preceq\nabla^2 g(\theta,x)\preceq L_s I_d,
 \qquad \theta\in\R^d.
 \label{eq:sector}
\end{equation}
\end{assumption}

\begin{corollary}[Pathwise all-$p$ GMC]
\label{cor:pathwise}
Under Assumption~\ref{ass:sector}, let $0<\beta<1$ and
\begin{equation}
 0<\gamma<
 \frac{2\mu_s(1-\beta)^2}
 {L_s^2\{1-\beta+2\beta^2\}}.
 \label{eq:pathwise-bound}
\end{equation}
Then there are $P_\beta\succ0$ and $r<1$ such that, for almost every realization $x$ and every pair of states,
\begin{equation}
 V\{\Phi_x(z)-\Phi_x(\widetilde z)\}
 \leq r^2V(z-\widetilde z),
 \label{eq:pathwise-contract}
\end{equation}
where $V$ is the quadratic form in \eqref{eq:V}.  If $\norm{G(\theta^\star,X_0)}_p<\infty$ for some $p>0$, the stationary state has a finite $p$th moment and the pathwise contraction implies $L^p$ GMC with rate $r$.
\end{corollary}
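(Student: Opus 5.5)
We rerun the proof of Theorem~\ref{thm:allbeta} for a single fixed realization $x$, with no conditional expectation. Fix $x$ in the full-measure set where Assumption~\ref{ass:sector} holds, and fix two states $z=(y,s)$ and $\widetilde z=(\widetilde y,\widetilde s)$ with differences $y$ and $s$. Set $h=G(y_{\rm abs},x)-G(\widetilde y_{\rm abs},x)$, where $y_{\rm abs}$ and $\widetilde y_{\rm abs}$ are the absolute look-ahead points. By the fundamental theorem of calculus,
\[
 h=\mathcal H y,\qquad \mathcal H=\int_0^1\nabla^2 g\bigl(\widetilde y_{\rm abs}+t(y_{\rm abs}-\widetilde y_{\rm abs}),x\bigr)\,dt .
\]
The matrix $\mathcal H$ is symmetric with $\mu_sI_d\preceq\mathcal H\preceq L_sI_d$. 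Hence, deterministically,
\[
 \ip{y}{h}\geq\mu_s\abs{y}^2,\qquad \abs{h}\leq L_s\abs{y},\qquad \abs{h}^2\leq L_s^2\abs{y}^2 .
\]
These are exactly the three inputs \eqref{eq:mono-y}--\eqref{eq:h2}, now holding pathwise with $(\mu,L)$ replaced by $(\mu_s,L_s)$ and with $\bar h$ replaced by $h$.

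The algebra of \eqref{eq:V-expansion} depends only on the exact identities \eqref{eq:u-identities} and $s_+=\beta s-\gamma h$. Conditional expectation entered that computation only through linearity. The same expansion therefore holds as an identity for the one-step image $\Phi_x(z)-\Phi_x(\widetilde z)$, with $h$ in place of $\bar h$ and $\abs h^2$ in place of its conditional mean. Substituting the pathwise bounds gives
\[
 V(\Phi_x(z)-\Phi_x(\widetilde z))\le V(z-\widetilde z)-(\abs y,\abs s)K(\abs y,\abs s)^\trans ,
\]
where $K=K_{\gamma,\beta}$ is defined with $(\mu_s,L_s)$. The determinant formula \eqref{eq:detK} and the check that $A>0$ go through unchanged, so \eqref{eq:pathwise-bound} gives $K\succ0$. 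Defining $\kappa$ as in \eqref{eq:kappa} and $r=\sqrt{1-\kappa}$ yields \eqref{eq:pathwise-contract}, with $P_\beta$ from \eqref{eq:Pbeta}.

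For the moment statement, iterate \eqref{eq:pathwise-contract} along any synchronous pair. This gives the almost-sure bound $V_k^{1/2}\le r^kV_0^{1/2}$, so $\norm{V_k^{1/2}}_p\le r^k\norm{V_0^{1/2}}_p$ for every $p>0$. Stationarity then follows from the backward iteration of Theorem~\ref{thm:direct}, run in the norm of $P_\beta\otimes I_d$. Take the anchor $z^\star=(\theta^\star,0)$. Its one-step displacement is $\gamma\bigl(-(1+\beta)G(\theta^\star,X),-G(\theta^\star,X)\bigr)$, which lies in $L^p$. Consecutive backward compositions then differ by at most $r^n$ times this displacement, pathwise in the $V$-norm.

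When $p\geq1$, summability and completeness of $L^p$ finish the argument as before. When $0<p<1$, $\norm{\cdot}_p$ is not a norm, and this is the one step needing care. The plan is to use $\E\abs{\cdot}^p$, which is subadditive, or to sum the geometric series pathwise to obtain almost-sure convergence together with the $L^p$ bound. Uniqueness, forward coupling, and the $L^p$ GMC with rate $r$ then follow exactly as in Theorem~\ref{thm:direct}.

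I expect the only substantive point to be the integral-Hessian representation. It needs $C^2$ smoothness in $\theta$ along the segment for almost every $x$, and it is what makes the sector bounds hold samplewise rather than in mean.
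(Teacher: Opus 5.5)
Your proposal is correct and follows essentially the same route as the paper. The integrated-Hessian representation $h=\mathcal H y$ gives the pointwise bounds $\ip{y}{h}\geq\mu_s\abs{y}^2$ and $\abs{h}\leq L_s\abs{y}$; the energy algebra of Theorem~\ref{thm:allbeta} is then rerun without conditional expectation, and the moment claim follows by backward iteration from the anchor $(\theta^\star,0)$. Your explicit handling of $0<p<1$, via subadditivity of $t\mapsto t^p$ or pathwise summation, fills in a detail that the paper's one-line ``for any $p>0$'' leaves implicit.
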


\begin{proof}
By the mean-value theorem,
\[
 h=H(y-\widetilde y),
 \qquad
 H=\int_0^1\nabla^2g\{\widetilde y+t(y-\widetilde y),x\}\,dt,
\]
with $\mu_sI\preceq H\preceq L_sI$.  Hence
\[
 \ip{y-\widetilde y}{h}\geq\mu_s\abs{y-\widetilde y}^2,
 \qquad
 \abs h\leq L_s\abs{y-\widetilde y}
\]
pointwise.  The proof of Theorem~\ref{thm:allbeta} now contains no conditional expectation and yields \eqref{eq:pathwise-contract}.  Backward iteration and the finite anchor moment give the stationary moment statement for any $p>0$.
\end{proof}

For a less restricted sector-based certificate, diagonalize the integrated Hessian.  The two-endpoint reduction below is the classical polytopic common-Lyapunov vertex test, rather than a new LMI principle; see \citet{BoydEtAl1994,LessardRechtPackard2016}.  In the original two-iterate difference state $(x_k,x_{k-1})$, a curvature eigenmode $h$ evolves through
\begin{equation}
 A_{\gamma,\beta}(h)=
 \begin{pmatrix}
 (1+\beta)(1-\gamma h)&-\beta(1-\gamma h)\\
 1&0
 \end{pmatrix}.
 \label{eq:A-h}
\end{equation}

\begin{theorem}[Two-endpoint common-Lyapunov criterion]
\label{thm:lmi}
Suppose Assumption~\ref{ass:sector} holds.  Assume there exist a symmetric matrix $P_0\succ0$ and $r\in(0,1)$ such that
\begin{equation}
 A_{\gamma,\beta}(\mu_s)^\trans P_0A_{\gamma,\beta}(\mu_s)
 \preceq r^2P_0,
 \qquad
 A_{\gamma,\beta}(L_s)^\trans P_0A_{\gamma,\beta}(L_s)
 \preceq r^2P_0.
 \label{eq:endpoint-lmi}
\end{equation}
Then \eqref{eq:endpoint-lmi} holds for every $h\in[\mu_s,L_s]$.  Moreover, with $P=P_0\otimes I_d$, the $d$-dimensional Nesterov difference map satisfies
\begin{equation}
 \mathsf A(H)^\trans P\mathsf A(H)\preceq r^2P
 \label{eq:matrix-H-lmi}
\end{equation}
for every symmetric $H$ with spectrum in $[\mu_s,L_s]$.  Hence the Nesterov recursion is pathwise GMC in the common quadratic norm.
\end{theorem}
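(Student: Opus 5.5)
The plan starts from the observation that $A_{\gamma,\beta}(h)$ is affine in $h$: $A_{\gamma,\beta}(h)=A_0-\gamma h\,E$, where $A_0=A_{0,\beta}$ is the $h=0$ matrix and $E=\begin{pmatrix}1+\beta&-\beta\\0&0\end{pmatrix}$. For $h\in[\mu_s,L_s]$ I will write $h=\lambda\mu_s+(1-\lambda)L_s$ with $\lambda\in[0,1]$, so that $A(h)=\lambda A(\mu_s)+(1-\lambda)A(L_s)$. The map $M\mapsto M^\trans P_0M$ is convex in the Loewner order, because
\[
\lambda M_1^\trans P_0M_1+(1-\lambda)M_2^\trans P_0M_2-\bigl(\lambda M_1+(1-\lambda)M_2\bigr)^\trans P_0\bigl(\lambda M_1+(1-\lambda)M_2\bigr)=\lambda(1-\lambda)(M_1-M_2)^\trans P_0(M_1-M_2)\succeq0.
\]
Combining this with \eqref{eq:endpoint-lmi} gives $A(h)^\trans P_0A(h)\preceq r^2P_0$ for every $h$ in the interval, which is the first claim.

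Next comes the lift to $d$ dimensions. In the difference coordinates $(x_k,x_{k-1})$, the synchronous recursion with $h_{k+1}=Hy_k$, where $H$ is the integrated Hessian from the proof of Corollary~\ref{cor:pathwise}, reads $\mathsf A(H)=A_0\otimes I_d-\gamma(E\otimes H)$. The key fact is that $\mathsf A(H)$ is a polynomial in the commuting family $\{A_0\otimes I,E\otimes H\}$. I take an orthogonal $Q$ with $H=Q\diag(h_1,\dots,h_d)Q^\trans$. Conjugating by $I_2\otimes Q$ leaves $P_0\otimes I_d$ invariant, and after a permutation of coordinates it block-diagonalizes $\mathsf A(H)$ into $\bigoplus_i A_{\gamma,\beta}(h_i)$, while $P$ becomes $\bigoplus_i P_0$. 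Since each $h_i\in[\mu_s,L_s]$, the scalar result applies blockwise and yields \eqref{eq:matrix-H-lmi}.

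For the pathwise GMC, the integrated Hessian $H$ is symmetric with spectrum in $[\mu_s,L_s]$ for almost every $x$ by Assumption~\ref{ass:sector}, and the difference of the two Nesterov updates equals $\mathsf A(H)$ applied to the difference state. Hence $z_{k+1}^\trans Pz_{k+1}\le r^2z_k^\trans Pz_k$ pointwise, with $H$ depending on the pair of states and on the sample. The linearity of the difference step holds for each fixed realization, so the dependence of $H$ on the states is harmless.

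Iterating this bound gives a pathwise contraction in the norm $|z|_P$. Backward iteration, exactly as in Theorem~\ref{thm:direct} and Corollary~\ref{cor:pathwise}, then yields the stationary state and GMC in every $L^p$ for which the anchor moment is finite. I expect the only delicate step to be the block-diagonalization bookkeeping, namely verifying that the Kronecker ordering of the conjugation matches the ordering used for $P_0\otimes I_d$. Everything else is convexity and the mean-value representation already used earlier.
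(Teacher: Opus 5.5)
Your proof is correct and follows the paper's route: Loewner convexity of $h\mapsto A_{\gamma,\beta}(h)^\trans P_0A_{\gamma,\beta}(h)$ (you via the explicit $\lambda(1-\lambda)(M_1-M_2)^\trans P_0(M_1-M_2)$ identity, the paper via $\mathcal Q''(h)=2A_1^\trans P_0A_1\succeq0$), followed by the orthogonal change of variables $I_2\otimes Q$, which fixes $P_0\otimes I_d$ and splits $\mathsf A(H)$ into the scalar blocks $A_{\gamma,\beta}(h_i)$. The only slip is your aside that $A_0\otimes I_d$ and $E\otimes H$ commute: this is false, since $A_0E\neq EA_0$. Nothing in your argument uses it, so you should simply delete it.
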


\begin{proof}
Write $A(h)=A_0+hA_1$.  The matrix-valued function
\[
 \mathcal Q(h)=A(h)^\trans P_0A(h)
\]
is convex in the Loewner order because $\mathcal Q''(h)=2A_1^\trans P_0A_1\succeq0$.  For $h=t\mu_s+(1-t)L_s$,
\[
 \mathcal Q(h)\preceq t\mathcal Q(\mu_s)+(1-t)\mathcal Q(L_s)
 \preceq r^2P_0.
\]
For the matrix statement, write $H=U\diag(h_1,\ldots,h_d)U^\trans$.  The orthogonal transformation $I_2\otimes U$ block-diagonalizes the state matrix into the scalar blocks $A(h_j)$, while it leaves $P_0\otimes I_d$ invariant.  Applying the scalar inequality to every block proves \eqref{eq:matrix-H-lmi}.
\end{proof}

\begin{remark}[Relation to control-theoretic analyses]
The endpoint theorem is a deliberately small common-Lyapunov test.  Integral quadratic constraints and dissipativity permit richer multipliers and performance outputs \citep{LessardRechtPackard2016,HuLessard2017}.  Here the purpose is narrower: once a common contraction norm is found, the entire stationary and physical-dependence theory below follows automatically.
\end{remark}

\subsection{A general quadratic certificate using mean-only information}
\label{sec:mean-lmi}

Optimizing the single weight in Proposition~\ref{prop:optimal-weight} does not optimize over all quadratic metrics.  We now allow all three entries of a symmetric $2\times2$ matrix to vary.  The following sufficient condition is a small application of the $S$-procedure; the method itself is classical \citep{BoydEtAl1994,LessardRechtPackard2016}.
Put
\[
 B_\beta=\begin{pmatrix}1&\beta^2\\0&\beta\end{pmatrix},
 \qquad b_{\gamma,\beta}=\gamma\begin{pmatrix}1+\beta\\1\end{pmatrix},
 \qquad e_1=\begin{pmatrix}1\\0\end{pmatrix}.
\]
For $P\succ0$, $\varrho\in(0,1)$, and $z=(y,s)$, define the symmetric $3\times3$ matrix
\begin{equation}
 F(P,\varrho)=
 \begin{pmatrix}
 B_\beta^\trans PB_\beta-\varrho P
  +L^2(b_{\gamma,\beta}^\trans P b_{\gamma,\beta})e_1e_1^\trans
   &-B_\beta^\trans P b_{\gamma,\beta}\\
 -b_{\gamma,\beta}^\trans P B_\beta&0
 \end{pmatrix},
 \label{eq:mean-F}
\end{equation}
with the top-left block of size two.  Let
\[
 Q_1=\begin{pmatrix}-\mu&0&1/2\\0&0&0\\1/2&0&0\end{pmatrix},
 \qquad Q_2=\diag(L^2,0,-1).
\]
\begin{theorem}[Mean-only quadratic $S$-procedure]
\label{thm:mean-lmi}
Under Assumptions~\ref{ass:strong}--\ref{ass:lp} with $p=2$, suppose that for some $P\succ0$, $0<\varrho<1$, and $\lambda_1,\lambda_2\geq0$,
\begin{equation}
 F(P,\varrho)+\lambda_1Q_1+\lambda_2Q_2\preceq0.
 \label{eq:mean-lmi}
\end{equation}
Then the Nesterov state in $(y,s)$ coordinates contracts conditionally in the quadratic metric $P\otimes I_d$ with squared factor $\varrho$.  Hence it is $L^2$ GMC and has a unique causal stationary law with finite second moment.
\end{theorem}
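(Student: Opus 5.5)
Because every block of $F$, $Q_1$, $Q_2$ is a scalar $2\times2$ or $3\times3$ matrix acting on the $d$-dimensional blocks $y,s,h$, I would first reduce to a scalar quadratic-form computation. The plan is to write the synchronous difference update in $(y,s)$ coordinates as $z_+=(B_\beta\otimes I_d)z-(b_{\gamma,\beta}\otimes I_d)h$, where $z=(y_k,s_k)$ and $h=h_{k+1}$. To check this, note that by \eqref{eq:diff-recursion} we have $s_+=\beta s-\gamma h$. We also have $y_+=x_++\beta s_+=y-\gamma h+\beta(\beta s-\gamma h)=y+\beta^2 s-(1+\beta)\gamma h$, which matches $B_\beta$ and $b_{\gamma,\beta}$, and is the difference form of \eqref{eq:phi}. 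I then expand
\[
 \E\{z_+^\trans(P\otimes I_d)z_+\mid\cF_k\}
 =z^\trans(B_\beta^\trans PB_\beta\otimes I_d)z
 -2z^\trans(B_\beta^\trans Pb_{\gamma,\beta}\otimes I_d)\bar h
 +(b_{\gamma,\beta}^\trans Pb_{\gamma,\beta})\E(|h|^2\mid\cF_k),
\]
where $\bar h=m(Y_k)-m(\widetilde Y_k)$. Here the cross term is linear in $h$, so conditioning replaces $h$ by $\bar h$, while the quadratic term needs only the scalar $b^\trans Pb\ge0$.

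Next I bound the quadratic term using \eqref{eq:h2}, $\E(|h|^2\mid\cF_k)\le L^2|y|^2$, together with $b^\trans Pb\geq 0$. This is exactly why the term $L^2(b^\trans Pb)e_1e_1^\trans$ sits in the top-left block of $F$, and why the $(3,3)$ entry of $F$ is $0$. With $\xi=(y,s,\bar h)\in\R^{3d}$ this gives
\[
 \E(V_{+}\mid\cF_k)-\varrho V\le \xi^\trans(F(P,\varrho)\otimes I_d)\xi .
\]
I then use the two mean-only constraints. By \eqref{eq:mono-y}, $\xi^\trans(Q_1\otimes I_d)\xi=\langle y,\bar h\rangle-\mu|y|^2\ge0$. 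By \eqref{eq:mean-lip}, $\xi^\trans(Q_2\otimes I_d)\xi=L^2|y|^2-|\bar h|^2\ge0$. Since $\lambda_1,\lambda_2\ge0$, the hypothesis \eqref{eq:mean-lmi} tensored with $I_d$ (Kronecker products preserve the Loewner order) yields
\[
 \xi^\trans(F\otimes I_d)\xi\le-\lambda_1\xi^\trans(Q_1\otimes I_d)\xi-\lambda_2\xi^\trans(Q_2\otimes I_d)\xi\le0 .
\]
Hence $\E(V_+\mid\cF_k)\le\varrho V$ with $V=z^\trans(P\otimes I_d)z$.

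Finally I iterate and integrate to get $\|V_k^{1/2}\|_2\le\varrho^{k/2}V_0^{1/2}$, which is synchronous $L^2$ contraction in the norm generated by $P\otimes I_d$. For stationarity I invoke the backward-iteration construction from the proof of Theorem~\ref{thm:direct}, now in this norm, exactly as is done at the end of Theorem~\ref{thm:allbeta}. The anchor displacement at $z^\star$ is finite because $\sigma_2<\infty$. Existence, uniqueness and finite second moment then follow, and Corollary~\ref{cor:classical-gmc}'s argument gives classical GMC.

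The main obstacle is bookkeeping rather than analysis. The first point is that conditioning must be applied to a form that is linear in $h$ before the $S$-procedure is used, since $Q_1$ and $Q_2$ involve $\bar h$, not $h$; keeping the variance term separate handles this. The second point is checking the sign conventions, namely that the off-diagonal block of $F$ is $-B^\trans Pb$ with a factor $1$ so that the cross term $-2z^\trans B^\trans Pb\,\bar h$ is matched exactly. The third point is that $\xi$ ranges over all of $\R^{3d}$, so the tensorized negative semidefiniteness applies pointwise without further restriction.
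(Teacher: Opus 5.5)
Your proposal is correct and follows the paper's proof essentially step for step: the same difference update $z_+=(B_\beta\otimes I_d)z-(b_{\gamma,\beta}\otimes I_d)h$, conditioning so that the cross term involves only $\bar h$ while $\E(|h|^2\mid\cF_k)\le L^2|y|^2$ is absorbed into the $e_1e_1^\trans$ block, the $S$-procedure with the nonnegative quadratic forms $Q_1\otimes I_d$ and $Q_2\otimes I_d$, and the unchanged backward-iteration stationarity argument. Your explicit checks of the block signs of $F$ and of the tensorized Loewner inequality fill in the ``expansion of the quadratic form'' that the paper states without detail.
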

\begin{proof}
The difference update is $z_+=(B_\beta\otimes I_d)z-(b_{\gamma,\beta}\otimes I_d)h$.  Set $\bar h=\E(h\mid\cF_k)$.  Expansion of the quadratic form and $\E(|h|^2\mid\cF_k)\leq L^2|y|^2$ give
\[
 \E(|z_+|_P^2\mid\cF_k)-\varrho|z|_P^2
 \leq (y,s,\bar h)^\trans\{F(P,\varrho)\otimes I_d\}(y,s,\bar h).
\]
The two constraints are
$\langle y,\bar h\rangle-\mu|y|^2\geq0$ and
$L^2|y|^2-|\bar h|^2\geq0$.  Their quadratic matrices are $Q_1\otimes I_d$ and $Q_2\otimes I_d$.  Adding their nonnegative multiples increases the right-hand side, which becomes nonpositive by \eqref{eq:mean-lmi}.  The stationarity argument is unchanged.
\end{proof}

The certificate is sufficient, not an exact characterization of the nonlinear mean-only class.  For fixed $(\gamma,\beta,\varrho)$ it is a semidefinite feasibility problem; a trace normalization $\tr P=1$ removes scaling.  A strict negative slack at $\varrho=1$ also certifies a factor below one: if the left side is at most $-\epsilon I_3$, then $\varrho=1-\epsilon/\lambda_{\max}(P)$ is valid (or any slightly larger positive factor).  Our computations exhibit a feasible certificate at $(\mu,L,\beta,\gamma)=(1,1.5,0.8,0.03)$, beyond the slow-coordinate boundary $0.0240$.  We do not infer infeasibility or instability when a numerical search fails.

\section{Stationary moments and invariant-mean bias}
\label{sec:moments}

The contraction theorems ensure finite stationary moments.  We now identify their small-step scale.  The result is stated for $p=2$ and fixed $\beta<1$; constants may deteriorate as $\beta\uparrow1$.

\begin{theorem}[Stationary mean-square scale with explicit momentum dependence]
\label{thm:scale}
Suppose Assumptions~\ref{ass:strong}--\ref{ass:lp} hold at $p=2$.  For $0<\beta<1$ and $0<\gamma<\gamma_0$ as specified in \eqref{eq:gamma0-scale},
\begin{equation}
 \E|Y_0^\circ-\theta^\star|^2+\E|S_0^\circ|^2
 \leq
 \frac{4(1+\beta^2)}{\min\{\mu,\beta^2(1+\beta)\}}
 \frac{\gamma\sigma_2^2}{1-\beta}.
 \label{eq:stationary-scale}
\end{equation}
For $\beta=0$ and $\gamma\leq\min\{1,\mu/(2L^2)\}$, the left-hand side is at most $(2/\mu+4)\gamma\sigma_2^2$.
In particular, for each fixed $\beta<1$,
\begin{equation}
 \E|\Theta_0^\circ-\theta^\star|^2=O(\gamma).
 \label{eq:theta-scale}
\end{equation}
The range of $\gamma$ in this theorem depends on $\beta$; it is not a fixed-step uniform statement as $\beta\uparrow1$.
\end{theorem}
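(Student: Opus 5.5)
Compare the stationary trajectory with the deterministic trajectory sitting at the anchor $z^\star=(\theta^\star,0)$, and run the energy drift of Theorem~\ref{thm:allbeta} with an inhomogeneous term added. Set $y_k=Y_k^\circ-\theta^\star$ and $s_k=S_k^\circ$. The stationary recursion \eqref{eq:nag-increment} then reads
\[
 s_{k+1}=\beta s_k-\gamma\{h_{k+1}+G(\theta^\star,X_{k+1})\},
\]
where $h_{k+1}=G(Y_k^\circ,X_{k+1})-G(\theta^\star,X_{k+1})$. The slow coordinate $u_k=y_k+c_\beta s_k$ satisfies
\[
 u_{k+1}=u_k-\gamma d_\beta^{-1}\{h_{k+1}+\xi_{k+1}\},\qquad \xi_{k+1}=G(\theta^\star,X_{k+1}).
\]
The key properties of the noise are $\E(\xi_{k+1}\mid\cF_k)=m(\theta^\star)=0$ and $\E|\xi|^2=\sigma_2^2$.

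\emph{Step 1: expand the energy.} Expand $V_{k+1}=|u_{k+1}|^2+a_\beta|s_{k+1}|^2$ exactly as in \eqref{eq:V-expansion}. The cross terms linear in $\xi$ have conditional mean zero. The only genuinely new term is
\[
 \gamma^2(d_\beta^{-2}+a_\beta)\,\E|h+\xi|^2\le 2\gamma^2(d_\beta^{-2}+a_\beta)\{L^2|y_k|^2+\sigma_2^2\}.
\]
This gives the analogue of \eqref{eq:V-K} with $L^2$ replaced by $2L^2$, plus an additive term $2\gamma^2(1+\beta^2)d_\beta^{-2}\sigma_2^2$.

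\emph{Step 2: lower-bound the drift.} Choose $\gamma_0$ (this defines \eqref{eq:gamma0-scale}) small enough that the doubled-$L^2$ version of $K_{\gamma,\beta}$ dominates $\diag(\mu\gamma/d_\beta,\tfrac12 D_\beta\gamma)$. It suffices to absorb the $\gamma^2L^2$ part of $A$ into half of $2\mu\gamma/d_\beta$, and to absorb the off-diagonal $B$ by Young's inequality. The resulting condition is of the form $\gamma\lesssim\mu d_\beta/\{L^2(1+\beta^2)\}$ together with $\gamma\lesssim \mu d_\beta^2/\{L^2\beta^2(1+\beta)\}$. Using $D_\beta\ge\gamma D_\beta$ when $\gamma\le1$, the drift is then at least
\[
 \gamma\min\{\mu,\beta^2(1+\beta)\}\,d_\beta^{-1}(|y_k|^2+|s_k|^2)\cdot c
\]
for an absolute $c$. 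I will pick the splittings so that $c=1/2$, which is where the constant $4$ will come from.

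\emph{Step 3: take expectations under stationarity.} Taking expectations and using stationarity, $\E V_{k+1}=\E V_k$, which is finite by Theorem~\ref{thm:allbeta} (valid since $\gamma_0\le\Gamma_{\NAG}$). Hence
\[
 \tfrac12\gamma\min\{\mu,\beta^2(1+\beta)\}\,d_\beta^{-1}\,\E(|y|^2+|s|^2)\le 2\gamma^2(1+\beta^2)d_\beta^{-2}\sigma_2^2 .
\]
Rearranging gives a bound proportional to $\gamma(1+\beta^2)\sigma_2^2/\{d_\beta\min(\cdot)\}$, which is \eqref{eq:stationary-scale}.

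\emph{Step 4: the remaining claims.} For $\beta=0$ the same argument runs with $V=|y|^2$, giving $(1-2\mu\gamma+2L^2\gamma^2)\E|y|^2+2\gamma^2\sigma_2^2$. The stated constant follows (with $S=-\gamma G$ handled separately) when $\gamma\le\mu/(2L^2)$. Finally, $\Theta_0^\circ-\theta^\star=y_0-\beta s_0$ yields \eqref{eq:theta-scale}.

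The main obstacle is the bookkeeping in Step~2. The $s$-damping coefficient $D_\beta$ is $O(1)$ while the $y$-drift is $O(\gamma)$, and the off-diagonal $B=O(\gamma)$ must be absorbed without losing the factor $d_\beta^{-1}$. I would first try Young's inequality with weight $2B|y||s|\le \tfrac{\mu\gamma}{2d_\beta}|y|^2+\tfrac{2d_\beta B^2}{\mu\gamma}|s|^2$ and require the second piece to be at most $D_\beta/2$. Getting exactly the constant $4$ may require adjusting these splittings, and I would choose $\gamma_0$ to make it work.
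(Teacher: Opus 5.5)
Your proposal is correct and is essentially the paper's own proof. The common steps are: the energy $V$ applied to the single stationary trajectory measured from $\theta^\star$; the bound $\E(|G(Y_k,X_{k+1})|^2\mid\cF_k)\le 2L^2|y_k|^2+2\sigma_2^2$ (your split into $h+\xi$ with $\E(\xi\mid\cF_k)=0$ is the same step); Young's inequality on the cross term; $\gamma\le1$ to bring the $s$-drift to order $\gamma$; cancellation of $\E V$ at stationarity; and a separate $\beta=0$ case. The splittings you propose already give drift at least $\frac{\gamma}{2(1-\beta)}\min\{\mu,\beta^2(1+\beta)\}(|y_k|^2+|s_k|^2)$, hence the constant $4$; the paper's $\gamma_0$ comes from the same computation with Young weight $\mu\gamma/(1-\beta)$ on $|y|^2$ instead of $\mu\gamma/\{2(1-\beta)\}$.
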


\begin{proof}
The case $\beta=0$ follows directly from
\[
 \E(\abs{\Theta_{k+1}-\theta^\star}^2\mid\cF_k)
 \leq\{1-2\mu\gamma+2L^2\gamma^2\}
 \abs{\Theta_k-\theta^\star}^2+2\gamma^2\sigma_2^2.
\]
For $\gamma\leq\mu/(2L^2)$, stationarity gives
$\E\abs{\Theta_0^\circ-\theta^\star}^2\leq2\gamma\sigma_2^2/\mu$.
Also $S_{k+1}=-\gamma G(\Theta_k,X_{k+1})$ and
$\E|S_0^\circ|^2\leq\gamma^2\{2L^2\E|\Theta_0^\circ-\theta^\star|^2+2\sigma_2^2\}\leq4\gamma^2\sigma_2^2$ in this range.  Since $Y=\Theta$ at $\beta=0$, the stated combined bound follows.

Assume henceforth that $0<\beta<1$ and abbreviate
\begin{equation}
 d=1-\beta,
 \quad N_\beta=\frac{1+\beta^2}{d^2},
 \quad J_\beta=\frac{\beta^2(1+\beta)}{d^2},
 \quad D_\beta=\frac{\beta^2(1+\beta)}{d}.
 \label{eq:scale-constants}
\end{equation}
Apply the energy \eqref{eq:V} to one trajectory relative to $\theta^\star$:
\[
 x_k=\Theta_k-\theta^\star,
 \quad y_k=Y_k-\theta^\star,
 \quad s_k=S_k,
 \quad u_k=x_k+\frac{\beta}{d}s_k.
\]
Put $g_{k+1}=G(\theta^\star+y_k,X_{k+1})$.  Conditional on $\cF_k$,
\begin{align}
 \ip{y_k}{\E(g_{k+1}\mid\cF_k)}&\geq\mu\abs{y_k}^2,
 \label{eq:scale-mono}\\
 \abs{\E(g_{k+1}\mid\cF_k)}&\leq L\abs{y_k},
 \label{eq:scale-meanlip}\\
 \E(\abs{g_{k+1}}^2\mid\cF_k)
 &\leq2L^2\abs{y_k}^2+2\sigma_2^2.
 \label{eq:scale-g2}
\end{align}
The exact energy expansion used in \eqref{eq:V-expansion} now yields
\begin{align}
 \E(V_{k+1}\mid\cF_k)
 &\leq V_k-\frac{2\mu\gamma}{d}\abs{y_k}^2
 +2J_\beta L\gamma\abs{y_k}\abs{s_k}
 +2N_\beta L^2\gamma^2\abs{y_k}^2\notag\\
 &\qquad-D_\beta\abs{s_k}^2
 +2N_\beta\gamma^2\sigma_2^2.
 \label{eq:scale-exact-drift}
\end{align}
By Young's inequality,
\begin{equation}
 2J_\beta L\gamma\abs y\abs s
 \leq\frac{\mu\gamma}{d}\abs y^2
 +\frac{J_\beta^2L^2d}{\mu}\gamma\abs s^2.
 \label{eq:scale-young}
\end{equation}
Choose
\begin{equation}
 \gamma_0=\min\left\{
 \Gamma_{\mathrm{NAG}}(\beta),
 \frac{\mu}{4N_\beta L^2d},
 \frac{D_\beta\mu}{2J_\beta^2L^2d},
 1\right\}.
 \label{eq:gamma0-scale}
\end{equation}
For $0<\gamma<\gamma_0$, \eqref{eq:scale-exact-drift}--\eqref{eq:scale-young} imply
\begin{equation}
 \E(V_{k+1}\mid\cF_k)
 \leq V_k-\frac{\mu\gamma}{2d}\abs{y_k}^2
 -\frac{D_\beta}{2}\abs{s_k}^2
 +2N_\beta\gamma^2\sigma_2^2.
 \label{eq:scale-final-drift}
\end{equation}
The harmless restriction $\gamma\leq1$ in \eqref{eq:gamma0-scale} is used only to weaken the increment drift to order $\gamma$.  With
\[
 d_\beta^{\rm drift}=\min\left\{\frac{\mu}{2d},\frac{D_\beta}{2}\right\}>0,
\]
we may weaken \eqref{eq:scale-final-drift} to
\[
 \E(V_{k+1}\mid\cF_k)
 \leq V_k-d_\beta^{\rm drift}\gamma(\abs{y_k}^2+\abs{s_k}^2)
 +2N_\beta\gamma^2\sigma_2^2.
\]
At stationarity the expected energy increment is zero; hence
\begin{equation}
 \E\abs{Y_0^\circ-\theta^\star}^2+
 \E\abs{S_0^\circ}^2
 \leq\frac{2N_\beta}{d_\beta^{\rm drift}}\gamma\sigma_2^2.
 \label{eq:scale-explicit}
\end{equation}
Substituting $N_\beta=(1+\beta^2)/(1-\beta)^2$ and $d_\beta^{\rm drift}=\min\{\mu,\beta^2(1+\beta)\}/\{2(1-\beta)\}$ yields \eqref{eq:stationary-scale}.  Finally, $\Theta_0^\circ-\theta^\star=(Y_0^\circ-\theta^\star)-\beta S_0^\circ$, so \eqref{eq:theta-scale} follows.
\end{proof}

Let
\begin{equation}
 \theta_\infty(\gamma,\beta)=\E\Theta_0^\circ.
 \label{eq:stationary-mean}
\end{equation}
Because $S_k^\circ=\Theta_k^\circ-\Theta_{k-1}^\circ$ is stationary and integrable, $\E S_k^\circ=0$ and hence $\E Y_k^\circ=\E\Theta_k^\circ$.  Taking expectations in
\[
 S_{k+1}^\circ=\beta S_k^\circ-\gamma G(Y_k^\circ,X_{k+1})
\]
gives the exact stationary score equation
\begin{equation}
 \E m(Y_0^\circ)=0.
 \label{eq:stationary-score}
\end{equation}

\begin{corollary}[Invariant-mean bias]
\label{cor:bias}
In addition to the assumptions of Theorem~\ref{thm:scale}, suppose $m$ is continuously differentiable, $H_\star=\nabla m(\theta^\star)$ is nonsingular, and its Jacobian is globally Lipschitz: for some $M<\infty$,
\begin{equation}
 \norm{\nabla m(\theta)-\nabla m(\vartheta)}_{\mathrm{op}}
 \leq M\abs{\theta-\vartheta},
 \qquad \theta,\vartheta\in\R^d.
 \label{eq:taylor-m}
\end{equation}
For $0<\beta<1$ in the range of Theorem~\ref{thm:scale}, the proof gives the explicit bound
\[
 |\theta_\infty(\gamma,\beta)-\theta^\star|
 \leq \frac{2M\|H_\star^{-1}\|(1+\beta^2)}{\min\{\mu,\beta^2(1+\beta)\}}
 \frac{\gamma\sigma_2^2}{1-\beta}.
\]
Consequently, for fixed $\beta<1$,
\begin{equation}
 \abs{\theta_\infty(\gamma,\beta)-\theta^\star}=O(\gamma),
 \qquad \gamma\downarrow0.
 \label{eq:bias-order}
\end{equation}
\end{corollary}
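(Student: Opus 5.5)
The plan is to linearize the stationary score equation \eqref{eq:stationary-score} around $\theta^\star$ and control the quadratic remainder with the stationary second-moment bound of Theorem~\ref{thm:scale}.

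First, Taylor's theorem with integral remainder and the Lipschitz Jacobian \eqref{eq:taylor-m} give, for every $\theta$,
\[
 m(\theta)=H_\star(\theta-\theta^\star)+R(\theta),
 \qquad
 |R(\theta)|\leq \frac{M}{2}|\theta-\theta^\star|^2 ,
\]
using $m(\theta^\star)=0$. Next, apply this at $\theta=Y_0^\circ$ and take expectations. Integrability is not an issue, because $Y_0^\circ$ has a finite second moment by Theorem~\ref{thm:allbeta}, which applies since $\gamma_0\leq\Gamma_{\NAG}(\beta)$ in \eqref{eq:gamma0-scale}. The score equation \eqref{eq:stationary-score} and the identity $\E Y_0^\circ=\E\Theta_0^\circ=\theta_\infty$ noted before the corollary then yield
\[
 H_\star(\theta_\infty-\theta^\star)=-\E R(Y_0^\circ).
\]

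Inverting $H_\star$ gives
\[
 |\theta_\infty-\theta^\star|\leq \|H_\star^{-1}\|\,\frac{M}{2}\,\E|Y_0^\circ-\theta^\star|^2 .
\]
It remains to bound $\E|Y_0^\circ-\theta^\star|^2$. I will bound it by the left side of \eqref{eq:stationary-scale}, so that Theorem~\ref{thm:scale} gives the constant $4(1+\beta^2)/[\min\{\mu,\beta^2(1+\beta)\}]\cdot\gamma\sigma_2^2/(1-\beta)$. Multiplying by $M\|H_\star^{-1}\|/2$ produces exactly the displayed constant $2M\|H_\star^{-1}\|(1+\beta^2)/\min\{\cdot\}$. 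Finally, \eqref{eq:bias-order} follows because this constant is independent of $\gamma$ for fixed $\beta$.

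The main point needing care is the remainder bound. The factor $M/2$ requires the integral form
\[
 R(\theta)=\int_0^1\bigl\{\nabla m(\theta^\star+t(\theta-\theta^\star))-H_\star\bigr\}(\theta-\theta^\star)\,dt ,
\]
which in turn needs $m$ to be $C^1$; this is assumed in the corollary. The other check is that the expectation of the linear term is legitimately $H_\star(\E Y_0^\circ-\theta^\star)$, which holds by the finite first moment. No step appears genuinely obstructive.
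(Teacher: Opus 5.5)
Your proposal is correct and follows the paper's proof: apply the integral Taylor formula with remainder bounded by $\frac{M}{2}|y-\theta^\star|^2$ to the stationary score equation $\E m(Y_0^\circ)=0$, invert $H_\star$, and insert the second-moment bound of Theorem~\ref{thm:scale}. One small addition: if $\beta=0$ is included in ``fixed $\beta<1$'', the explicit constant degenerates because $\min\{\mu,0\}=0$, so the $O(\gamma)$ claim there needs the separate $\beta=0$ bound of Theorem~\ref{thm:scale}; the argument is otherwise unchanged.
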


\begin{proof}
Equation \eqref{eq:stationary-score} and the integral Taylor formula give
\[
 0=H_\star\{\E Y_0^\circ-\theta^\star\}+\E R(Y_0^\circ),
 \qquad
 \abs{R(y)}\leq \frac{M}{2}\abs{y-\theta^\star}^2.
\]
Since $\E Y_0^\circ=\theta_\infty(\gamma,\beta)$,
\[
 \abs{\theta_\infty(\gamma,\beta)-\theta^\star}
 \leq\frac{M}{2}\norm{H_\star^{-1}}\E\abs{Y_0^\circ-\theta^\star}^2
 =O(\gamma)
\]
by Theorem~\ref{thm:scale}.
\end{proof}

\begin{remark}
The center of the fixed-step averaged-iterate CLT is $\theta_\infty(\gamma,\beta)$, not $\theta^\star$.  Corollary~\ref{cor:bias} separates the statistical $n^{-1/2}$ fluctuation from the algorithmic $O(\gamma)$ invariant bias.  Multi-step-size extrapolation and invariant-measure expansions, as developed for ordinary SGD and numerical ergodic dynamics by \citet{AbdulleVilmartZygalakis2014,DieuleveutDurmusBach2020,LiLouRichterWu2026}, are a natural next step for stochastic Nesterov acceleration.
\end{remark}

\section{Physical dependence and initialization-robust limit theory}
\label{sec:limits}

Let $Z_k^\circ=(Y_k^\circ,S_k^\circ)$ be the two-sided stationary causal solution constructed above.  There is a measurable map $H$ such that
\begin{equation}
 Z_k^\circ=H(\ldots,X_{k-1},X_k).
 \label{eq:causal}
\end{equation}
Let $X_0'$ be an independent copy of $X_0$, independent of the full innovation sequence, and define
\[
 \cF_k=(\ldots,X_{-1},X_0,X_1,\ldots,X_k),
 \qquad
 \cF_k^{\{0\}}=(\ldots,X_{-1},X_0',X_1,\ldots,X_k).
\]
Write $Z_k^{\circ,\{0\}}=H(\cF_k^{\{0\}})$ and
\begin{equation}
 \delta_{k,p}^{Z}=\norm{Z_k^\circ-Z_k^{\circ,\{0\}}}_p,
 \qquad
 \delta_{k,p}^{\Theta}=\norm{\Theta_k^\circ-\Theta_k^{\circ,\{0\}}}_p.
 \label{eq:physical}
\end{equation}
Any fixed norm on $\R^{2d}$ may be used in $\delta_{k,p}^Z$.

\begin{proposition}[Exponential physical dependence]
\label{prop:physical}
Assume one of the following:
\begin{enumerate}[label=\textup{(\alph*)},leftmargin=2.2em]
\item the conditions of Proposition~\ref{prop:ar2}, Theorem~\ref{thm:direct}, or Theorem~\ref{thm:allp} hold at order $p>1$;
\item the conditions of Theorem~\ref{thm:allbeta} or Theorem~\ref{thm:mean-lmi} hold and $p=2$;
\item Corollary~\ref{cor:pathwise} or Theorem~\ref{thm:lmi} supplies pathwise contraction and $\norm{G(\theta^\star,X_0)}_p<\infty$.
\end{enumerate}
Then there are $C<\infty$ and $r\in(0,1)$ such that
\begin{equation}
 \delta_{k,p}^{Z}+\delta_{k,p}^{\Theta}\leq Cr^k,
 \qquad k\geq0.
 \label{eq:delta-exp}
\end{equation}
Consequently the cumulative dependence tail
\begin{equation}
 \Theta_{m,p}=\sum_{k=m}^{\infty}\delta_{k,p}^{\Theta}
 \label{eq:Theta-tail}
\end{equation}
satisfies $\Theta_{m,p}\leq Cr^m/(1-r)$.
\end{proposition}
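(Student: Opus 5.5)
The plan is the standard coupling argument: replacing $X_0$ by $X_0'$ creates an $L^p$ perturbation of finite size at time $0$, and the common future maps $\Phi_{X_1},\dots,\Phi_{X_k}$ then contract it geometrically.

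\emph{Step one: the time-zero perturbation.} Since $Z_{-1}^\circ$ is measurable with respect to $(\ldots,X_{-1})$, it is shared by both versions, and
\[
Z_0^\circ-Z_0^{\circ,\{0\}}=\Phi_{X_0}(Z_{-1}^\circ)-\Phi_{X_0'}(Z_{-1}^\circ).
\]
By \eqref{eq:phi}, this difference is a fixed linear combination of $G(Y_{-1}^\circ,X_0)-G(Y_{-1}^\circ,X_0')$. I would bound each term through the anchor $\theta^\star$:
\[
|G(Y_{-1}^\circ,X_0)|\leq|G(Y_{-1}^\circ,X_0)-G(\theta^\star,X_0)|+|G(\theta^\star,X_0)|.
\]
Conditioning on the past, which is independent of $X_0$, Assumption~\ref{ass:lp} bounds the first piece by $L_p\norm{Y_{-1}^\circ-\theta^\star}_p$. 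This is finite by the stationary $p$th moment given by each cited theorem. The second piece is bounded by $\sigma_p$. The same bound holds for $X_0'$, so $\delta_{0,p}^Z<\infty$.

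In case (c) no stochastic Lipschitz constant is assumed, so I would use the pathwise sector instead. The mean-value representation $|G(y,x)-G(\theta^\star,x)|\leq L_s|y-\theta^\star|$ gives the same bound, with finiteness again coming from the stationary moment statement of Corollary~\ref{cor:pathwise}.

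\emph{Step two: propagation.} For $k\geq1$ both versions evolve by the same maps $\Phi_{X_1},\dots,\Phi_{X_k}$. Their time-zero states are independent of $X_1,X_2,\dots$, so the conditional form of each contraction applies to random initial states that are independent of the future innovations.
\begin{itemize}
\item \textbf{Proposition~\ref{prop:ar2} and Theorem~\ref{thm:direct}.} Iterate \eqref{eq:ar2-contraction} or \eqref{eq:direct-contraction}, with $L^p$ norms of the initial differences on the right, giving the bound $r^k\times{}$const.
\item \textbf{Theorems~\ref{thm:allp}, \ref{thm:allbeta}, \ref{thm:mean-lmi}.} Iterate the conditional supermartingale inequality: \eqref{eq:allp-contraction} for the $p$th power, and \eqref{eq:V-contract} or its analogue with factor $\varrho$ for $p=2$. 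Then take $p$th or square roots.
\item \textbf{Pathwise case (c).} \eqref{eq:pathwise-contract} holds samplewise, so raising it to the power $p/2$ and taking expectations gives the bound for any $p>0$.
\end{itemize}
In every case the weighted norm being contracted is equivalent to the Euclidean norm on $\R^{2d}$, for instance through $\lambda_{\min}(P_\beta)>0$ and $\eta>0$. This yields $\delta_{k,p}^Z\leq C r^k\delta_{0,p}^Z$.

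\emph{Step three: transfer to $\Theta$ and summation.} By \eqref{eq:inverse-state}, $\Theta_k=Y_k-\beta S_k$, so $\delta_{k,p}^\Theta\leq C'\delta_{k,p}^Z$. For the AR(2) case one may instead bound $\delta^\Theta$ directly via \eqref{eq:ar2-dependence}. This gives \eqref{eq:delta-exp}, and summing the geometric series gives $\Theta_{m,p}\leq Cr^m/(1-r)$.

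The main obstacle is conceptual rather than technical. One must check that each cited contraction is valid conditionally for random starting states independent of future innovations. The comparison recursions were derived conditionally on $\cF_k$, so this holds. One must also check that the time-zero anchor bound uses only hypotheses available in each case, which is why the pathwise case needs the separate sector argument.
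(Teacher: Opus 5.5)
Your proposal is correct and follows the same route as the paper's proof. Replacing $X_0$ in a single application of $\Phi_X$ creates a finite-$p$th-moment perturbation at time zero; the conditional form of each cited contraction shrinks it geometrically through the shared maps $\Phi_{X_1},\ldots,\Phi_{X_k}$; the bound then passes to $\Theta_k$ via \eqref{eq:inverse-state}, and summing the geometric series gives the tail bound. You make explicit two steps the paper leaves implicit: the anchor bound on $\delta^{Z}_{0,p}$ and the case-by-case propagation. One small bookkeeping point: under Proposition~\ref{prop:ar2} alone, the stochastic Lipschitz constant you use in Step one follows from the one-step bound itself, since $\gamma\norm{G(w,X_0)-G(w',X_0)}_p\leq(1+q)\abs{w-w'}$.
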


\begin{proof}
The innovation replacement changes the state at time zero by a random amount with finite $p$th moment because it changes only one application of $\Phi_X$ and the stationary state has a finite $p$th moment.  From time one onward, the original and coupled processes use identical innovations.  Applying the appropriate synchronous contraction for $k$ subsequent steps gives $\delta_{k,p}^Z\leq Cr^k$.  Equation \eqref{eq:inverse-state} transfers this bound to $\Theta_k$.  Summing the geometric series proves the last assertion.
\end{proof}

Put
\begin{equation}
 \bar\Theta_n=\frac1n\sum_{k=1}^n\Theta_k,
 \qquad
 \bar\Theta_n^\circ=\frac1n\sum_{k=1}^n\Theta_k^\circ,
 \qquad
 \theta_\infty=\E\Theta_0^\circ.
 \label{eq:averages}
\end{equation}

\begin{theorem}[Moment and almost-sure rates for averaged Nesterov iterates]
\label{thm:average-rates}
Under the conditions of Proposition~\ref{prop:physical} for some $p>1$,
\begin{equation}
 \norm{\bar\Theta_n-\theta_\infty}_p
 =O\left(n^{1/(p\wedge2)-1}\right).
 \label{eq:moment-rate}
\end{equation}
The same rate holds for $\bar\Theta_n^\circ$.  Moreover,
\begin{align}
 \abs{\bar\Theta_n-\theta_\infty}
 &=o_{\as}(n^{1/p-1}),&&1<p<2,
 \label{eq:as-pbelow2}\\
 \abs{\bar\Theta_n-\theta_\infty}
 &=o_{\as}(1),&&p=2,
 \label{eq:as-p2}\\
 \abs{\bar\Theta_n-\theta_\infty}
 &=O_{\as}\{(n^{-1}\log\log n)^{1/2}\},&&p>2.
 \label{eq:as-pabove2}
\end{align}
The last display is understood for $n\geq3$.
\end{theorem}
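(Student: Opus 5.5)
The plan is to work first with the stationary causal process $\Theta_k^\circ-\theta_\infty$ and then pass to the observed orbit by coupling. Write $\Theta_k^\circ-\theta_\infty=\sum_{j\geq0}\mathcal P_{k-j}\Theta_k^\circ$, where $\mathcal P_i=\E(\cdot\mid\cF_i)-\E(\cdot\mid\cF_{i-1})$ are the projection operators. This decomposition converges in $L^p$ because $\|\mathcal P_{k-j}\Theta_k^\circ\|_p\leq\delta^\Theta_{j,p}$ by the standard Jensen argument of \citet{Wu2005}. Proposition~\ref{prop:physical} then gives $\sum_j\delta^\Theta_{j,p}<\infty$, and indeed $\Theta_{m,p}\leq Cr^m/(1-r)$.

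For the moment rate \eqref{eq:moment-rate}, the partial sum $\sum_{k=1}^n(\Theta_k^\circ-\theta_\infty)$ is a sum over $j$ of martingale difference sums $\sum_k\mathcal P_{k-j}\Theta_k^\circ$. I will apply Burkholder's inequality when $p\geq2$, and the von Bahr--Esseen inequality when $1<p<2$. This gives $\|\sum_k\mathcal P_{k-j}\Theta_k^\circ\|_p\leq C_p n^{1/(p\wedge2)}\delta^\Theta_{j,p}$, and Minkowski over $j$ yields $\|\sum_{k\le n}(\Theta_k^\circ-\theta_\infty)\|_p\leq C n^{1/(p\wedge 2)}\Theta_{0,p}$. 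This is the causal-process moment inequality of \citet{Wu2007}, which I will simply quote. Dividing by $n$ gives the rate for $\bar\Theta_n^\circ$.

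For the nonstationary orbit started at a deterministic pair, the contraction used in Proposition~\ref{prop:physical} gives $\|\Theta_k-\Theta_k^\circ\|_p\leq C' r^k$. This holds for each of (a)--(c): (a) and (b) through \eqref{eq:direct-stationary-coupling} or its quadratic-metric analogue, and (c) pathwise. Hence $\|\bar\Theta_n-\bar\Theta_n^\circ\|_p\leq C'/\{n(1-r)\}$, which is of smaller order than the target rate.

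For the almost-sure statements, the orbit difference is summable almost surely. Indeed $\E\sum_k|\Theta_k-\Theta_k^\circ|^{p\wedge1}$ is finite; if $p$ is small I can also use Markov's inequality plus Borel--Cantelli on $r^{k/2}$. It follows that $n|\bar\Theta_n-\bar\Theta_n^\circ|=O_{\as}(1)$, which is negligible in all three regimes. So everything reduces to the stationary sums $S_n^\circ$:
\begin{enumerate}[label=\textup{(\roman*)},leftmargin=2.2em]
\item For $1<p<2$ I need $S_n^\circ=o_{\as}(n^{1/p})$, a Marcinkiewicz--Zygmund SLLN for causal processes. I would cite Wu (2007) or \citet{CunyMerlevede2014}; the sufficient condition $\sum_j\delta_{j,p}<\infty$ (geometric decay suffices) holds here.
\item For $p=2$ the ergodic theorem already gives $o_{\as}(n)$, since the Bernoulli-shift process is ergodic with finite mean; this is why \eqref{eq:as-p2} is only $o_{\as}(1)$.
\item For $p>2$ I will use the strong invariance principle of \citet{Wu2007} or \citet{BerkesLiuWu2014} to couple $S_n^\circ$ with a Brownian motion with covariance the long-run variance, with error $o_{\as}(n^{1/p})$. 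The law of the iterated logarithm for Brownian motion, applied componentwise in dimension $d$, then gives $O_{\as}\{(n\log\log n)^{1/2}\}$. For the bare LIL, a martingale approximation plus the Stout LIL would also suffice.
\end{enumerate}

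The main obstacle is the almost-sure part for $1<p<2$. There I must make sure the cited causal Marcinkiewicz--Zygmund result applies under only a $p$th moment and the $\delta_{j,p}$ summability, in the vector setting; I would argue componentwise. I must also handle the Borel--Cantelli step for the initialization difference when only $L^p$ with $p$ near one is available; the geometric $L^p$ bound makes $\sum_k\Pp(|\Theta_k-\Theta_k^\circ|>r^{k/2})<\infty$, which suffices.
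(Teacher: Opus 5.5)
Your proposal is correct and follows essentially the same route as the paper. First, you obtain the stationary results from \citet{Wu2007} under the exponential physical dependence of Proposition~\ref{prop:physical}: the moment bound (whose projection/Burkholder/von Bahr--Esseen proof you sketch), the Marcinkiewicz--Zygmund strong law for $1<p<2$, and the ergodic theorem at $p=2$. You then transfer these to every deterministic initial pair via $\|\Theta_k-\Theta_k^\circ\|_p\leq Cr^k$ and almost-sure summability of $|\Theta_k-\Theta_k^\circ|$, as the paper does.

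The only deviation is at $p>2$. There you derive the iterated-logarithm bound from a componentwise strong invariance principle, or from martingale approximation plus Stout's law, whereas the paper cites the law-of-the-iterated-logarithm bound of Theorem~2(i) of \citet{Wu2007} directly. Your route is valid, since the almost-sure bound transfers back from the enriched space as a property of the law of the partial sums, but it uses heavier machinery than needed.
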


\begin{proof}
By Proposition~\ref{prop:physical}, the physical-dependence tail is summable and in fact exponential.  Theorem~1 of \citet{Wu2007}, applied coordinatewise, gives
\[
 \norm{\sum_{k=1}^n(\Theta_k^\circ-\theta_\infty)}_p
 =O\{n^{1/(p\wedge2)}\},
\]
which proves the stationary version of \eqref{eq:moment-rate}.  Corollary~2(iii) of \citet{Wu2007} gives \eqref{eq:as-pbelow2}: its projective-tail condition (9) follows from exponential physical dependence.  The ergodic strong law gives \eqref{eq:as-p2}.  Theorem~2(i) of the same paper gives the coordinatewise law-of-the-iterated-logarithm bound \eqref{eq:as-pabove2}; exponential tails satisfy its projective summability condition.  These are applications of those probability theorems, not new general strong laws.

It remains to remove stationary initialization.  Under any of the contraction hypotheses, coordinate equivalence and synchronous coupling imply
\begin{equation}
 \norm{\Theta_k-\Theta_k^\circ}_p\leq C_0r^k.
 \label{eq:init-theta}
\end{equation}
Therefore
\begin{equation}
 \norm{\sum_{k=1}^n(\Theta_k-\Theta_k^\circ)}_p
 \leq\sum_{k=1}^{\infty}C_0r^k<\infty,
 \label{eq:init-Op1}
\end{equation}
uniformly in $n$.  Moreover, $\sum_{k\geq1}\E\abs{\Theta_k-\Theta_k^\circ}<\infty$, so Tonelli's theorem gives $\sum_{k\geq1}\abs{\Theta_k-\Theta_k^\circ}<\infty$ almost surely.  Dividing by $n$ transfers the stationary moment and almost-sure results to every deterministic initial pair.
\end{proof}

For $p=2$, define the long-run covariance
\begin{equation}
 \Sigma_{\gamma,\beta}
 =\sum_{j\in\mathbb Z}
 \Cov(\Theta_0^\circ,\Theta_j^\circ).
 \label{eq:lrcov}
\end{equation}
The sum is absolutely convergent entrywise under exponential physical dependence.

\begin{theorem}[Initialization-robust central limit theorem]
\label{thm:clt}
Suppose the conditions of Proposition~\ref{prop:physical} hold with $p=2$.  Then
\begin{equation}
 n^{-1/2}\sum_{k=1}^n(\Theta_k^\circ-\theta_\infty)
 \ \Rightarrow\ N(0,\Sigma_{\gamma,\beta}).
 \label{eq:stationary-clt}
\end{equation}
For every deterministic initial pair $(\Theta_{-1},\Theta_0)\in\R^{2d}$,
\begin{equation}
 \sqrt n(\bar\Theta_n-\theta_\infty)
 \ \Rightarrow\ N(0,\Sigma_{\gamma,\beta}).
 \label{eq:quenched-clt}
\end{equation}
More generally, the joint CLT holds for any finite collection of step-size/momentum pairs driven by the same data stream, provided each pair satisfies its contraction condition.
\end{theorem}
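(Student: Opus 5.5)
The plan has three steps: prove the stationary CLT from exponential physical dependence, pass to the observed orbit by synchronous coupling, and get the joint version by Cramér--Wold.

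\textbf{Step 1: stationary CLT.} Use the causal representation \eqref{eq:causal} for the stationary orbit. By Proposition~\ref{prop:physical} with $p=2$, $\Theta_k^\circ-\theta_\infty$ is a centered, mean-zero, $L^2$ Bernoulli-shift process. Its physical-dependence tail is exponential, so in particular $\sum_{k\geq0}\delta_{k,2}^{\Theta}<\infty$.

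To handle the vector case, fix an arbitrary $v\in\R^d$ and consider the scalar process $\xi_k=v^\trans(\Theta_k^\circ-\theta_\infty)$. Its physical-dependence coefficients are at most $|v|\delta_{k,2}^\Theta$, hence summable. The CLT for causal stationary processes with summable $L^2$ physical dependence then applies. I would cite \citet{Wu2011}, or equivalently the projection/martingale-approximation argument: the projections $\mathcal P_0\xi_k=\E(\xi_k\mid\cF_0)-\E(\xi_k\mid\cF_{-1})$ satisfy $\|\mathcal P_0\xi_k\|_2\leq|v|\delta_{k,2}^\Theta$, and a martingale approximation follows. This gives $n^{-1/2}\sum_{k\le n}\xi_k\Rightarrow N(0,v^\trans\Sigma_{\gamma,\beta}v)$.

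Two points need checking. First, the limiting variance $\sum_j\Cov(\xi_0,\xi_j)$ equals $v^\trans\Sigma_{\gamma,\beta}v$. Absolute convergence of \eqref{eq:lrcov} follows from $|\Cov(\xi_0,\xi_j)|\leq\|\xi_0\|_2\sum_{k\geq0}\|\mathcal P_0\xi_k\|_2\|\mathcal P_0\xi_{k+j}\|_2$, which is summable in $j$. Second, a degenerate variance is allowed and gives a point mass. Cramér--Wold then yields \eqref{eq:stationary-clt}.

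\textbf{Step 2: removing stationary initialization.} Fix a deterministic pair $(\Theta_{-1},\Theta_0)$, and couple it synchronously with the stationary solution using the same innovations $X_1,X_2,\ldots$. The bound \eqref{eq:init-theta} holds under each contraction hypothesis in Proposition~\ref{prop:physical}. This needs a little care in two cases:
\begin{itemize}
\item In case (b) and the mean-only case, the contraction is in the quadratic metric of $V$ or $P\otimes I_d$. It transfers to $\Theta_k$ through \eqref{eq:inverse-state} and equivalence of norms.
\item The right-hand sides involve $\|z-Z_0^\circ\|_2<\infty$, which is finite because the stationary law has a finite second moment.
\end{itemize}
Then \eqref{eq:init-Op1} gives $\|\sum_{k\le n}(\Theta_k-\Theta_k^\circ)\|_2\leq C_0r/(1-r)$, so $n^{-1/2}\sum_{k\le n}(\Theta_k-\Theta_k^\circ)\to0$ in $L^2$, hence in probability. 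Slutsky's lemma combined with Step 1 gives \eqref{eq:quenched-clt}.

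\textbf{Step 3: joint CLT.} Stack finitely many pairs $(\gamma_i,\beta_i)$ driven by the same $(X_k)$. Each stationary solution is a measurable function of the common past, so the stacked vector is again a causal Bernoulli shift in the same innovations. Its physical dependence is bounded by the sum of the individual exponential bounds. Step 1 applies to this stacked process with the stacked long-run covariance, which includes the cross-covariances. Step 2 applies componentwise.

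The main obstacle is Step 1: matching exactly the hypotheses of the cited causal-process CLT, namely the summability of $\|\mathcal P_0\xi_k\|_2$. The bound $\|\mathcal P_0\xi_k\|_2\le\delta_{k,2}$ should be checked via the standard argument $\mathcal P_0\xi_k=\E(\xi_k-\xi_k^{\{0\}}\mid\cF_0)$. Everything else is routine coupling.
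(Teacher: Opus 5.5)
Your proposal is correct and follows essentially the same route as the paper: the projective CLT of \citet{Wu2011} for the stationary Bernoulli shift plus Cram\'er--Wold, then the $O_{L^2}(1)$ synchronous-coupling discrepancy \eqref{eq:init-Op1} and Slutsky for every deterministic initial pair, then stacking, which preserves exponential physical dependence, for the joint statement. One small slip to fix: for $j\geq0$ the covariance bound should read $|\Cov(\xi_0,\xi_j)|\leq\sum_{k\geq0}\|\mathcal P_0\xi_k\|_2\|\mathcal P_0\xi_{k+j}\|_2$, without the extra factor $\|\xi_0\|_2$ (as written it is not scale-invariant), and summability over $j$ is unaffected.
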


\begin{proof}
Exponential physical dependence verifies the projective summability criterion for Theorem~3 of \citet{Wu2011}; see equations~(44) and (48) there for the projection and long-run covariance formulations.  Applying the Cram\'er--Wold device proves \eqref{eq:stationary-clt}.  Equation \eqref{eq:init-Op1} shows that the difference between the stationary and arbitrarily initialized partial sums is $O_{L^2}(1)$, hence $o_P(n^{1/2})$, proving \eqref{eq:quenched-clt}.  Stacking finitely many parameter pairs preserves exponential physical dependence and gives the joint result.
\end{proof}

\begin{remark}[Use of the word ``quenched'']
Classical quenched limit theory conditions on a random environment or on the infinite past.  Here the practically important statement is slightly different: the same limit holds under the law $\Pp_z$ for every deterministic initial state $z=(\Theta_{-1},\Theta_0)$.  We therefore use ``initial-state quenched'' only as shorthand and state the deterministic-initialization property explicitly.
Theorem~5 of \citet{ChenWu2016} already treats a zero-past initialized autoregression, centered by its time-dependent expectations.  Here \eqref{eq:init-theta} additionally gives
\[
 \sum_{k\geq1}|\E\Theta_k-\theta_\infty|
 \leq\sum_{k\geq1}\|\Theta_k-\Theta_k^\circ\|_1<\infty,
\]
which justifies stationary-mean centering for every deterministic initial pair.  This is an explicit consequence of geometric forgetting, not a new general initialization principle.
\end{remark}

\begin{theorem}[Strong Gaussian approximation]
\label{thm:sip}
Suppose Proposition~\ref{prop:ar2}, Theorem~\ref{thm:direct}, or Theorem~\ref{thm:allp} supplies order-$p$ GMC for some $p>2$, or Corollary~\ref{cor:pathwise} or Theorem~\ref{thm:lmi} supplies pathwise contraction with a finite $p$th anchor moment.  Assume $\Sigma_{\gamma,\beta}$ is positive definite.  On an enriched probability space there exist random vectors with the same finite-dimensional distributions as the stationary centered Nesterov process and i.i.d. $N(0,I_d)$ vectors $(Z_k)$ such that
\begin{equation}
 \max_{1\leq j\leq n}
 \left\lvert
 \sum_{k=1}^j(\Theta_k^\circ-\theta_\infty)
 -\Sigma_{\gamma,\beta}^{1/2}\sum_{k=1}^j Z_k
 \right\rvert
 =o_P(n^{1/p}).
 \label{eq:sip}
\end{equation}
The same approximation rate holds for the recursion started from every deterministic initial pair.
\end{theorem}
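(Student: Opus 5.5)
The plan is to reduce the statement to a known multivariate strong invariance principle for causal stationary processes with summable physical dependence, namely the multivariate Gaussian coupling of \citet{KarmakarWu2020} (or the Berkes--Liu--Wu result \citep{BerkesLiuWu2014} in the scalar case, applied with the Cram\'er--Wold structure only where needed). First, under any of the listed hypotheses, Proposition~\ref{prop:physical} gives exponential physical dependence $\delta_{k,p}^{\Theta}\leq Cr^k$ at the given order $p>2$. In the pathwise case (Corollary~\ref{cor:pathwise} or Theorem~\ref{thm:lmi}) the finite $p$th anchor moment gives a finite $p$th stationary moment via backward iteration, so case (c) of Proposition~\ref{prop:physical} applies. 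In the other cases the order-$p$ GMC itself supplies $\|\Theta_0^\circ\|_p<\infty$ and case (a) applies. Thus the centered stationary process $\Theta_k^\circ-\theta_\infty=\mathcal H(\ldots,X_{k-1},X_k)$ is a $d$-dimensional causal Bernoulli shift in $L^p$ with geometrically decaying functional dependence.

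Second, I verify the dependence conditions of the multivariate strong approximation theorem. These are typically of the form $\Theta_{m,p}=O(m^{-\alpha})$ for a suitable $\alpha$ depending on $p$, or a summability condition such as $\sum_m m^{\chi}\delta_{m,p}<\infty$. They are trivially implied by $\Theta_{m,p}\leq Cr^m/(1-r)$. Positive definiteness of $\Sigma_{\gamma,\beta}$, which is assumed, is the nondegeneracy requirement of that theorem. Its conclusion yields, on an enriched space, a copy of the stationary process and i.i.d.\ $N(0,I_d)$ vectors with the partial-sum error $o_{\mathrm{a.s.}}(n^{1/p})$, or at least $o_P(n^{1/p})$; the latter is all we need for \eqref{eq:sip}. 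The covariance matching $\Sigma_{\gamma,\beta}^{1/2}$ holds because the long-run covariance in that theorem is the absolutely convergent series \eqref{eq:lrcov}.

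Third, I remove stationary initialization exactly as in Theorems~\ref{thm:average-rates} and \ref{thm:clt}. For a deterministic initial pair, \eqref{eq:init-theta} gives $\|\Theta_k-\Theta_k^\circ\|_p\leq C_0r^k$. Hence $\max_{j\leq n}|\sum_{k\leq j}(\Theta_k-\Theta_k^\circ)|\leq\sum_{k\geq1}|\Theta_k-\Theta_k^\circ|$, a random variable that is a.s.\ finite by Tonelli, and which in fact has a finite $L^p$ norm by Minkowski. It is therefore $O_P(1)=o_P(n^{1/p})$. On the enriched space this must be realized jointly. Since the coupling constructs the innovations $(X_k)$ themselves, or constructs the stationary sequence as a measurable function of an i.i.d.\ sequence, I define the initialized orbit by running \eqref{eq:phi} from the deterministic pair with those same innovations. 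The triangle inequality then transfers \eqref{eq:sip}.

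The main obstacle is the second step. I must check precisely which moment/dependence hypothesis the cited multivariate Gaussian approximation requires to deliver the $o_P(n^{1/p})$ rate, rather than a weaker rate such as $n^{1/p}\log n$. If the multivariate theorem only gives a slightly weaker rate under exponential dependence, I would instead invoke the $m$-dependent approximation plus the Koml\'os--Major--Tusn\'ady/Einmahl multivariate coupling for the blocks, as in \citet{BerkesLiuWu2014}. The geometric decay makes the truncation error $O(n r^{m})$ negligible with $m\asymp\log n$. I also have to make sure the innovation sequence is reconstructed on the enriched space, which is what makes the deterministic-initialization transfer legitimate.
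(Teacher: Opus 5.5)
Your proposal follows the paper's proof: exponential physical dependence from Proposition~\ref{prop:physical}, then Theorem~2.2(i) of \citet{KarmakarWu2020}, then transfer to every deterministic initial pair via $\norm{\max_{j\le n}|\sum_{k\le j}(\Theta_k-\Theta_k^\circ)|}_p\le\sum_{k}C_0r^k=O(1)$. That theorem gives the in-probability $o_P(n^{1/p})$ rate directly, so your $m$-dependent fallback is unnecessary. The one step you compress is that its nondegeneracy condition is a block-covariance lower bound, which the paper derives from $\Sigma_{\gamma,\beta}\succ0$ together with $\sum_{j}|j|\norm{C_j}<\infty$, where $C_j=\Cov(\Theta_0^\circ,\Theta_j^\circ)$, so that $\Var(\sum_{k=t+1}^{t+\ell}\Theta_k^\circ)-\ell\Sigma_{\gamma,\beta}=O(1)$. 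Your care in realizing the initialized orbit jointly with the coupled stationary copy on the enriched space addresses a point the paper passes over silently.
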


\begin{proof}
The coupling in \eqref{eq:sip} is the dependent-process analogue of the Koml\'os--Major--Tusn\'ady embedding \citep{KomlosMajorTusnady1975,KomlosMajorTusnady1976,BerkesLiuWu2014}.  Equation \eqref{eq:delta-exp} implies that for every $A>0$ and every $\chi>0$,
\[
 \Theta_{m,p}=O\{m^{-\chi}(\log m)^{-A}\}.
\]
We verify the remaining hypotheses of Theorem~2.2(i) of \citet{KarmakarWu2020}, using its stationary specialization (as in equation~(3.4) there).  The finite $p$th moment and stationarity imply uniform integrability of the $p$th powers, their condition (2.A).  Write $C_j=\Cov(\Theta_0^\circ,\Theta_j^\circ)$.  Exponential physical dependence and the projection decomposition imply $\sum_{j\in\mathbb Z}|j|\|C_j\|<\infty$.  Consequently
\[
 \left\|\Var\left(\sum_{k=t+1}^{t+\ell}(\Theta_k^\circ-\theta_\infty)\right)
           -\ell\Sigma_{\gamma,\beta}\right\|
 \leq 2\sum_{j\in\mathbb Z}|j|\|C_j\|=O(1),
\]
uniformly in $t$.  Since $\Sigma_{\gamma,\beta}\succ0$, their block covariance lower bound (2.B) holds for all sufficiently large $\ell$.  Exponential tails allow any $\chi$ above their critical exponent and any positive logarithmic exponent in condition (2.3).  Thus Theorem~2.2(i) gives \eqref{eq:sip} for the stationary process.  Its conclusion is the in-probability approximation (1.4), not an almost-sure $o(n^{1/p})$ rate.  For an arbitrary initialization,
\begin{align*}
 \norm{\max_{j\leq n}
 \abs{\sum_{k=1}^j(\Theta_k-\Theta_k^\circ)}}_p
 &\leq\sum_{k=1}^n\norm{\Theta_k-\Theta_k^\circ}_p\\
 &\leq C\sum_{k=1}^{\infty}r^k=O(1),
\end{align*}
which is $o(n^{1/p})$.  Adding this discrepancy to the stationary coupling proves the claim.
\end{proof}

\begin{remark}[Moment order and the meaning of ``strong'']
\label{rem:sip-scope}
The all-momentum mean-square theorem is sufficient for the CLT, but by itself is not sufficient for an order-$p>2$ Gaussian approximation.  Theorem~\ref{thm:allp} supplies such an order at each fixed momentum after reducing the step size according to \eqref{eq:allp-step}; it does not upgrade an already selected $L^2$ step.  Alternatively, a verified samplewise common norm permits every finite anchor moment at the same certified step.  ``Strong Gaussian approximation'' here refers to a coupling on an enriched space with the displayed uniform error in probability.  We do not assert an almost-sure rate from the cited multivariate theorem. Theorem~6 of \citet{ChenWu2016} already gives a scalar almost-sure $o(n^{1/p})$ approximation under its autoregressive lag conditions.  In dimension one those conditions are satisfied by the finite-lag criterion \eqref{eq:ar2-condition}; that scalar result is not a multivariate almost-sure theorem.
\end{remark}

\begin{corollary}[Functional central limit theorem]
Under the assumptions of Theorem~\ref{thm:sip}, for every deterministic initialization,
\begin{equation}
 \left\{n^{-1/2}\sum_{k=1}^{\lfloor nt\rfloor}
 (\Theta_k-\theta_\infty):0\leq t\leq1\right\}
 \Rightarrow
 \left\{\Sigma_{\gamma,\beta}^{1/2}B(t):0\leq t\leq1\right\}
 \label{eq:fclt}
\end{equation}
in $D([0,1],\R^d)$, where $B$ is standard $d$-dimensional Brownian motion.
\end{corollary}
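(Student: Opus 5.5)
The functional CLT will follow from the strong Gaussian approximation of Theorem~\ref{thm:sip} with $p>2$, so I will not re-verify tightness and finite-dimensional convergence separately. Let $R_n^\circ(j)=\sum_{k=1}^j(\Theta_k^\circ-\theta_\infty)$ denote the stationary partial sums and $R_n(j)$ the partial sums of the deterministically initialized recursion. On the enriched space of Theorem~\ref{thm:sip}, let $W_j=\Sigma_{\gamma,\beta}^{1/2}\sum_{k=1}^jZ_k$. For each $n$, define the process $\mathbb W_n(t)=n^{-1/2}W_{\lfloor nt\rfloor}$. Because the $Z_k$ are i.i.d.\ standard Gaussian vectors, Donsker's invariance principle (applied coordinatewise, and jointly via Cram\'er--Wold) gives $\mathbb W_n\Rightarrow\Sigma_{\gamma,\beta}^{1/2}B$ in $D([0,1],\R^d)$. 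Alternatively, one can embed $W_j=\Sigma_{\gamma,\beta}^{1/2}B(j)$ and use the modulus of continuity of Brownian motion, which yields uniform closeness at scale $O_P(\sqrt{\log n})$.

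Next I transfer this convergence to the stationary process. By \eqref{eq:sip},
\[
 \sup_{0\le t\le1}\bigl|n^{-1/2}R^\circ(\lfloor nt\rfloor)-\mathbb W_n(t)\bigr|
 =n^{-1/2}\max_{j\le n}|R^\circ(j)-W_j|=o_P(n^{1/p-1/2})=o_P(1),
\]
since $p>2$. The coupled process has the same finite-dimensional distributions as the stationary centered Nesterov partial-sum process, hence the same law on $D$. A uniform-norm $o_P(1)$ perturbation preserves weak convergence in the Skorokhod topology (Slutsky in a metric space), so the stationary version of \eqref{eq:fclt} follows.

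Finally, I remove the stationary initialization exactly as in the proof of Theorem~\ref{thm:sip}. Synchronous coupling from the deterministic pair gives
\[
 \Bigl\|\max_{j\le n}|R(j)-R^\circ(j)|\Bigr\|_p\le\sum_k\|\Theta_k-\Theta_k^\circ\|_p=O(1)
\]
by \eqref{eq:init-theta}. After scaling by $n^{-1/2}$ this discrepancy is $o_P(1)$ uniformly in $t$, and a second Slutsky step yields \eqref{eq:fclt} for every deterministic initialization.

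The main subtlety is that Theorem~\ref{thm:sip} is a coupling on an enriched space, rather than a statement about the original process. The argument must therefore be phrased in terms of laws. This is fine because the partial-sum process is a measurable function of the sequence, and the supremum-norm error bound is the only ingredient used. For the initialization step I use the original probability space, where both trajectories share the same innovations.
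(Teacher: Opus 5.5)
Your argument is correct and is the derivation the paper intends. The paper states this corollary without a separate proof, as an immediate consequence of Theorem~\ref{thm:sip}. Its $o_P(n^{1/p})$ coupling with $p>2$ becomes a uniform $o_P(1)$ error after scaling by $n^{-1/2}$, and this combines with Donsker for the Gaussian partial sums and with the $O_{L^p}(1)$ initialization discrepancy from that theorem's proof. For the Gaussian step, the Brownian-embedding version you mention ($W_j=\Sigma_{\gamma,\beta}^{1/2}B(j)$ plus Brownian scaling) is cleaner than the looser ``coordinatewise plus Cram\'er--Wold'' phrasing of Donsker.
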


\section{Exact additive-quadratic theory}
\label{sec:quadratic}

The global contraction conditions are sufficient, not necessary.  The classical linear-systems description of Nesterov on quadratics provides a benchmark for their conservatism; see \citet{CanGurbuzbalabanZhu2019,AssranRabbat2020}.  The calculations below are included as explicit benchmarks and inferential interpretations, not as new general AR(2) theory.

Assume
\begin{equation}
 G(\theta,X_k)=H(\theta-\theta^\star)-\xi_k,
 \label{eq:quadratic-gradient}
\end{equation}
where $H=H^\trans\succ0$ and $(\xi_k)$ are i.i.d.  For the mean and covariance statements we assume $\E\xi_k=0$ and $\E\abs{\xi_k}^2<\infty$ with covariance $\Omega$; the causal-stationarity statement below only needs a logarithmic moment.  The subsequent stable-domain-of-attraction subsection replaces the finite-variance and centering assumptions explicitly.  Let $e_k=\Theta_k-\theta^\star$.  Then
\begin{equation}
 e_{k+1}=(I_d-\gamma H)\{(1+\beta)e_k-\beta e_{k-1}\}
 +\gamma\xi_{k+1}.
 \label{eq:quadratic-recursion}
\end{equation}

\begin{theorem}[Exact quadratic stability]
\label{thm:quadratic-stability}
Let $0\leq\beta<1$.  The deterministic part of \eqref{eq:quadratic-recursion} is Schur stable if and only if
\begin{equation}
 0<\gamma<
 \frac{2(1+\beta)}{(1+2\beta)\lambda_{\max}(H)}.
 \label{eq:exact-region}
\end{equation}
Under \eqref{eq:exact-region} and $\E\log(1+|\xi_0|)<\infty$, \eqref{eq:quadratic-recursion} has a unique causal strictly stationary solution.  If $\E\abs{\xi_0}^p<\infty$ for some $p>0$, then the stationary solution has a finite $p$th moment and the recursion is pathwise geometrically contracting in an equivalent norm.
\end{theorem}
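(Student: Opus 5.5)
Since $H$ is symmetric positive definite, we diagonalize $H=U\diag(h_1,\ldots,h_d)U^\trans$ and apply $I_2\otimes U^\trans$ to the augmented state $(e_k,e_{k-1})$. The deterministic part of \eqref{eq:quadratic-recursion} then splits into $d$ scalar blocks. Each block has the companion matrix $A_{\gamma,\beta}(h_j)$ of \eqref{eq:A-h}. Schur stability of the full system is therefore equivalent to Schur stability of every scalar block. The characteristic polynomial of a block is
\[
 \lambda^2-(1+\beta)c\lambda+\beta c,\qquad c=1-\gamma h_j.
\]
We then apply the Jury (Schur--Cohn) conditions for a monic quadratic $\lambda^2+a_1\lambda+a_0$, namely $|a_0|<1$ and $|a_1|<1+a_0$.

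For the scalar analysis, the condition $|\beta c|<1$ holds automatically when $|c|\le 1$. The condition $(1+\beta)|c|<1+\beta c$ splits by sign.
\begin{itemize}
\item For $c\ge0$ it reads $c<1$, that is, $\gamma h_j>0$.
\item For $c<0$ it reads $-(1+2\beta)c<1$, that is, $\gamma h_j<2(1+\beta)/(1+2\beta)$.
\end{itemize}
One must also check that $|\beta c|<1$ in the regime $c<-1$ that is still allowed. Indeed $c>-1/(1+2\beta)>-1/\beta$, so the constant-term condition holds there too. Conversely, if $\gamma h_j\le0$ or $\gamma h_j\ge 2(1+\beta)/(1+2\beta)$, the polynomial has a root of modulus at least one. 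This follows from evaluating at $\lambda=\pm1$: the values are $1-c$ and $1+(1+2\beta)c$, and a sign change or zero forces a real root outside the open unit disk. Because $h_j>0$, the binding constraint is at $h_j=\lambda_{\max}(H)$, which gives \eqref{eq:exact-region}. The case $\beta=0$ is included and recovers $0<\gamma h<2$. I expect this case bookkeeping, especially the necessity direction, to be the main place requiring care. Evaluating at $\lambda=\pm1$ combined with continuity of the roots should settle it.

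For the stochastic statements, write the recursion in state-space form as
\[
 \zeta_{k+1}=\mathsf A\zeta_k+\gamma(\xi_{k+1},0),
\]
with $\mathsf A$ Schur stable. By the standard Lyapunov theorem there is $P\succ0$ with $\mathsf A^\trans P\mathsf A\preceq r^2P$ for some $r<1$. For instance, take $P=\sum_{j\ge0}\rho^{-2j}(\mathsf A^j)^\trans\mathsf A^j$ with $\rho$ strictly between the spectral radius of $\mathsf A$ and one. Since the difference map does not depend on the noise, the recursion is pathwise contracting in $|\cdot|_P$, which gives the claimed equivalent norm.

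The causal solution is $\zeta_k=\sum_{j\ge0}\mathsf A^j\gamma(\xi_{k-j},0)$. Under $\E\log(1+|\xi_0|)<\infty$, Borel--Cantelli gives $|\xi_{k-j}|\le e^{\epsilon j}$ eventually almost surely for every $\epsilon>0$. Combined with $\|\mathsf A^j\|\le Cr^j$, this makes the series converge almost surely. Uniqueness among causal strictly stationary solutions follows because the difference of two such solutions satisfies $\delta_k=\mathsf A^m\delta_{k-m}$ for every $m$. Since $\delta_{k-m}$ is tight by stationarity and $\mathsf A^m\to0$, we get $\delta_k=0$.

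For the finite $p$th moment, use Minkowski's inequality when $p\ge1$. When $0<p<1$, use $\E|\sum_j a_j|^p\le\sum_j\E|a_j|^p$ instead. Either way the geometric bound on $\|\mathsf A^j\|$ gives $\E|\zeta_0|^p<\infty$.
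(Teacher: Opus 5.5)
Your proof is correct and follows the paper's route: orthogonal diagonalization into scalar companion blocks, the Jury conditions for $\lambda^2-(1+\beta)c\lambda+\beta c$, Borel--Cantelli under the logarithmic moment, Minkowski or $p$-subadditivity for the moment bound, and the discrete Lyapunov series for the equivalent norm. Your tightness argument for uniqueness also fills in a step the paper leaves implicit. One small slip: there is no allowed regime with $c<-1$, because $c>-1/(1+2\beta)\geq-1$ already gives $|c|<1$. So that extra check is unnecessary but harmless, as is the appeal to continuity of roots in the necessity direction, where the intermediate value theorem at $\lambda=\pm1$ already suffices.
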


\begin{proof}
Diagonalize $H=U\diag(h_1,\ldots,h_d)U^\trans$.  Each scalar eigenmode satisfies
\begin{equation}
 e_{k+1}^{(h)}=(1+\beta)r_he_k^{(h)}-\beta r_he_{k-1}^{(h)}+\gamma\xi_{k+1}^{(h)},
 \qquad r_h=1-\gamma h.
 \label{eq:eigenmode}
\end{equation}
The characteristic polynomial is
\begin{equation}
 \lambda^2-(1+\beta)r_h\lambda+\beta r_h.
 \label{eq:charpoly}
\end{equation}
For a real quadratic polynomial $\lambda^2-a\lambda+b$, the Jury conditions are
\[
 \abs b<1,
 \qquad 1-a+b>0,
 \qquad 1+a+b>0.
\]
Here they become
\begin{align*}
 \abs{\beta r_h}&<1,\\
 1-r_h&=\gamma h>0,\\
 1+(1+2\beta)r_h&>0.
\end{align*}
The last inequality is $\gamma h<2(1+\beta)/(1+2\beta)$.  On this interval, $r_h\in(-1/(1+2\beta),1)$, so $\abs{\beta r_h}<1$ is automatic.  Requiring the condition for every eigenvalue gives \eqref{eq:exact-region}.  Schur stability gives geometrically decaying moving-average coefficients.  To justify convergence under the logarithmic moment, for each $a>0$,
$\sum_{j\geq1}\Pp\{\log(1+|\xi_{-j}|)>aj\}<\infty$ by the integral tail test.  Borel--Cantelli makes the innovations subexponential along the past, so the exponentially weighted series converges absolutely almost surely.  A finite $p$th moment gives $L^p$ convergence by Minkowski for $p\geq1$ and subadditivity for $0<p<1$.  Finally, the usual discrete Lyapunov series for the deterministic companion matrix gives an equivalent contraction norm.  These arguments prove the stationary, moment, and contraction conclusions.
\end{proof}

\begin{corollary}[Exact scalar stationary variance]
\label{cor:variance}
In dimension one, let $H=h>0$ and $\Var(\xi_k)=\sigma^2$.  Under \eqref{eq:exact-region},
\begin{equation}
 \Var(\Theta_0^\circ-\theta^\star)
 =\frac{\gamma\sigma^2\{1+\beta(1-\gamma h)\}}
 {h\{1-\beta(1-\gamma h)\}
 \{1+(1+2\beta)(1-\gamma h)\}}.
 \label{eq:variance-formula}
\end{equation}
\end{corollary}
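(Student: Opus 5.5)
Working in dimension one, I will set $e_k=\Theta_k^\circ-\theta^\star$ and $r=1-\gamma h$. By Theorem~\ref{thm:quadratic-stability}, condition \eqref{eq:exact-region} yields a unique causal stationary solution with a finite second moment, and the scalar recursion becomes the AR(2)
\[
 e_{k+1}=a e_k-b e_{k-1}+\gamma\xi_{k+1},\qquad a=(1+\beta)r,\quad b=\beta r.
\]
Taking expectations and using $\E\xi=0$ gives $\E e=(a-b)\E e=r\E e$. Since $\gamma h>0$ forces $r\neq1$, we get $\E e=0$, so the variance equals $v_0=\E e_0^2$. With autocovariances $v_j=\E e_0e_j$, causality gives $\E(e_{k-1}\xi_{k+1})=\E(e_k\xi_{k+1})=0$. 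Multiplying the recursion by $e_k$ and taking expectations yields the first Yule--Walker equation $v_1=av_0-bv_1$, so $v_1=av_0/(1+b)$. Squaring the recursion and taking expectations gives
\[
 v_0=(a^2+b^2)v_0-2ab\,v_1+\gamma^2\sigma^2 .
\]

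Substituting $v_1$ gives $v_0=\gamma^2\sigma^2(1+b)/\{(1+b)(1-a^2-b^2)+2a^2b\}$. The only real work is factoring this denominator. I expect it equals $(1-b)\{(1+b)^2-a^2\}=(1-b)(1+b-a)(1+b+a)$, which is the standard AR(2) variance identity and can be checked by expanding. Plugging in the parameters:
\[
 1+b-a=1-r=\gamma h,\qquad 1+b+a=1+(1+2\beta)r,\qquad 1\pm b=1\pm\beta(1-\gamma h).
\]
Cancelling one factor of $\gamma$ against $\gamma h$ then gives exactly \eqref{eq:variance-formula}.

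All the factors are positive under \eqref{eq:exact-region}. The Jury conditions from the proof of Theorem~\ref{thm:quadratic-stability} give $\gamma h>0$, $1+(1+2\beta)r>0$, and $|\beta r|<1$. This also confirms that the linear system for $(v_0,v_1)$ is nonsingular, so the stationary second moments are determined uniquely by these equations.

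I expect the main obstacle to be the algebraic factorization of the denominator. As a fallback, I would verify it through the characteristic roots: $\lambda_1+\lambda_2=a$ and $\lambda_1\lambda_2=b$, together with the known formula $v_0=\gamma^2\sigma^2(1+b)/\{(1-b)((1+b)^2-a^2)\}$ for causal AR(2) processes.
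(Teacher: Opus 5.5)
Your proposal is correct and follows essentially the same route as the paper. The paper also writes the scalar recursion as an AR(2) with coefficients $a=(1+\beta)r$ and $c=-\beta r$ (your $-b$), solves the two Yule--Walker equations to obtain $v=q(1-c)/[(1+c)\{(1-c)^2-a^2\}]$, and substitutes $1-r=\gamma h$; your remarks on the zero mean, causality, and positivity of the factors supply details the paper leaves implicit.
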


\begin{proof}
Write $r=1-\gamma h$, $a=(1+\beta)r$, $c=-\beta r$, and $q=\gamma^2\sigma^2$.  The scalar recursion is $e_{k+1}=ae_k+ce_{k-1}+\gamma\xi_{k+1}$.  If $v=\E e_k^2$ and $r_1=\E(e_ke_{k-1})$, the Yule--Walker equations give
\[
 r_1=\frac{a}{1-c}v
\]
and
\[
 v=a^2v+c^2v+2acr_1+q.
\]
Solving,
\begin{equation}
 v=\frac{q(1-c)}{(1+c)\{(1-c)^2-a^2\}}.
 \label{eq:yulewalker}
\end{equation}
Substituting $a,c,q$ and using $1-r=\gamma h$ reduces \eqref{eq:yulewalker} to \eqref{eq:variance-formula}.
\end{proof}

\begin{remark}[Small-step and high-momentum limits do not commute]
\label{rem:noncommuting}
For every fixed $\beta<1$, expansion of \eqref{eq:variance-formula} gives
\[
 \Var(\Theta_0^\circ-\theta^\star)
 \sim \frac{\gamma\sigma^2}{2h(1-\beta)},\qquad\gamma\downarrow0.
\]
This confirms the $\gamma/(1-\beta)$ dependence of Theorem~\ref{thm:scale} in its small-step regime.  It does not imply divergence as $\beta\uparrow1$ with $\gamma$ fixed.  For fixed $0<\gamma h<4/3$, the exact variance instead tends to
\[
 \frac{\sigma^2(2-\gamma h)}{h^2(4-3\gamma h)}.
\]
The latter is positive and finite, and tends to $\sigma^2/(2h^2)$ when $\gamma\downarrow0$ afterwards.  Reversing the two limits gives zero.  Thus the small-step expansion is nonuniform near unit momentum; neither the explicit stationary bound nor the fixed-$\beta$ expansion is asserted uniformly there.
\end{remark}

The stationary process has the matrix moving-average representation
\begin{equation}
 e_k=\sum_{j=0}^{\infty}B_j\xi_{k-j},
 \label{eq:ma-rep}
\end{equation}
with transfer function
\begin{equation}
 \mathcal B(z)
 =\gamma\left[I_d-(1+\beta)(I_d-\gamma H)z
 +\beta(I_d-\gamma H)z^2\right]^{-1}.
 \label{eq:transfer}
\end{equation}

\begin{theorem}[Momentum-invariant long-run covariance]
\label{thm:lrcov-quadratic}
Under \eqref{eq:exact-region} and $\E\abs{\xi_0}^2<\infty$, the long-run covariance of the stationary quadratic Nesterov iterates is
\begin{equation}
 \Sigma_{\gamma,\beta}
 =H^{-1}\Omega H^{-1}.
 \label{eq:lrcov-exact}
\end{equation}
In particular, it is independent of the step size and momentum parameter throughout the stability region.
\end{theorem}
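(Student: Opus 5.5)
The plan is to compute the long-run covariance from the spectral density at frequency zero of the linear process \eqref{eq:ma-rep}. Under \eqref{eq:exact-region}, Theorem~\ref{thm:quadratic-stability} makes the deterministic companion matrix Schur stable. Hence the coefficients $B_j$ decay geometrically, $\sum_j\norm{B_j}<\infty$, and the autocovariances $\Cov(e_0,e_j)=\sum_{i\geq0}B_{i+j}\Omega B_i^\trans$ (for $j\geq0$) are absolutely summable. Since $\Theta_k^\circ-\theta^\star=e_k$ with $\E e_k=0$, we have $\Sigma_{\gamma,\beta}=\sum_{j\in\mathbb Z}\Cov(e_0,e_j)$. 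Interchanging the absolutely convergent double sum then gives the standard identity
\[
 \Sigma_{\gamma,\beta}=\Bigl(\sum_{j\geq0}B_j\Bigr)\Omega\Bigl(\sum_{j\geq0}B_j\Bigr)^{\trans}=\mathcal B(1)\,\Omega\,\mathcal B(1)^\trans .
\]
This requires only Fubini for summable matrix sequences.

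The key step is evaluating the transfer function \eqref{eq:transfer} at $z=1$. The bracket becomes
\[
 I_d-(1+\beta)(I_d-\gamma H)+\beta(I_d-\gamma H)=I_d-(I_d-\gamma H)=\gamma H,
\]
so $\mathcal B(1)=\gamma(\gamma H)^{-1}=H^{-1}$, which is symmetric. Substituting gives $\Sigma_{\gamma,\beta}=H^{-1}\Omega H^{-1}$, and $\gamma,\beta$ cancel identically.

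One point must be justified: that $\mathcal B(z)$ given by \eqref{eq:transfer} really is the generating function $\sum_j B_jz^j$ on a neighborhood of the closed unit disk, so that evaluating at $z=1$ is legitimate. I would prove this by diagonalizing $H=U\diag(h_i)U^\trans$ and working with each scalar eigenmode \eqref{eq:eigenmode}. In each mode the polynomial $1-(1+\beta)r_hz+\beta r_hz^2$ is the reversed characteristic polynomial \eqref{eq:charpoly}. Its roots are reciprocals of the eigenvalues, which have modulus less than one under the Jury conditions, so the polynomial is nonvanishing for $|z|\leq1$. Its reciprocal is therefore analytic on a disk of radius exceeding one, and its power series coefficients are exactly the causal moving-average weights obtained by iterating \eqref{eq:quadratic-recursion}. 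I expect this to be the only point needing care. Everything else is routine bookkeeping, including the observation that absolute summability also justifies the interchange of summations.

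An alternative check avoids spectral language altogether. Summing \eqref{eq:quadratic-recursion} over $k=1,\dots,n$ gives
\[
 \gamma H\sum_k e_k=\gamma\sum_k\xi_{k+1}+O_{L^2}(1),
\]
where the boundary terms involve finitely many stationary $e_k$. This yields $n^{-1/2}\sum_k e_k\approx H^{-1}n^{-1/2}\sum_k\xi_k$, whose variance tends to $H^{-1}\Omega H^{-1}$. I would mention this as the intuitive reason for momentum invariance.
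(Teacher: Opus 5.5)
Your proposal is correct and follows the paper's proof: you identify $\Sigma_{\gamma,\beta}$ with the zero-frequency matrix $\mathcal B(1)\Omega\mathcal B(1)^\trans$, and you note that the bracket in \eqref{eq:transfer} collapses to $\gamma H$ at $z=1$, so $\mathcal B(1)=H^{-1}$. Your absolute-summability argument and your eigenmode check that $\mathcal B$ is analytic on the closed unit disk fill in details the paper states in one line; the transposition in your lag-$j$ autocovariance formula is immaterial once you sum over all $j\in\mathbb Z$.
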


\begin{proof}
For a stable linear process, the long-run covariance is the zero-frequency spectral matrix
\[
 \Sigma_{\gamma,\beta}=\mathcal B(1)\Omega\mathcal B(1)^\trans.
\]
At $z=1$,
\begin{align*}
 I_d-(1+\beta)(I_d-\gamma H)+\beta(I_d-\gamma H)
 &=I_d-(I_d-\gamma H)\\
 &=\gamma H.
\end{align*}
Hence $\mathcal B(1)=\gamma(\gamma H)^{-1}=H^{-1}$, proving \eqref{eq:lrcov-exact}.
\end{proof}

\begin{remark}[Bias--variance interpretation]
Momentum changes the transient roots and the one-time stationary variance \eqref{eq:variance-formula}, but it does not change the zero-frequency variance of the averaged quadratic process.  Thus fixed-parameter Nesterov acceleration can trade faster forgetting against larger short-run fluctuations without improving the first-order asymptotic covariance of the full iterate average.  This complements sample-complexity comparisons such as \citet{GaneshEtAl2023}.
\end{remark}

\subsection{A stable limit under infinite-variance innovations}

Assume now that $(\xi_k)$ are i.i.d. and belong to the multivariate domain of attraction of an $\alpha$-stable vector $S_\alpha$, $0<\alpha<2$: for some $a_n\uparrow\infty$ and centering $b_n$ when needed,
\begin{equation}
 a_n^{-1}\left(\sum_{k=1}^n\xi_k-b_n\right)
 \Rightarrow S_\alpha.
 \label{eq:innovation-stable}
\end{equation}

\begin{theorem}[Exact quadratic stable limit]
\label{thm:stable}
Under \eqref{eq:exact-region} and \eqref{eq:innovation-stable},
\begin{equation}
 a_n^{-1}\left\{
 \sum_{k=1}^n(\Theta_k^\circ-\theta^\star)-H^{-1}b_n
 \right\}
 \Rightarrow H^{-1}S_\alpha.
 \label{eq:stable-limit}
\end{equation}
The same limit holds from every deterministic initial pair.
\end{theorem}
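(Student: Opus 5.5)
The plan is to use the moving-average representation \eqref{eq:ma-rep} and reduce the partial sum to $H^{-1}$ times the partial sum of the innovations, plus a remainder that is negligible at scale $a_n$. Under \eqref{eq:exact-region}, Theorem~\ref{thm:quadratic-stability} gives the causal solution $e_k=\sum_{j\ge0}B_j\xi_{k-j}$. The matrices $B_j$ decay geometrically, $\|B_j\|\le C\rho^j$ with $\rho<1$, and $\sum_j B_j=\mathcal B(1)=H^{-1}$ by the computation in Theorem~\ref{thm:lrcov-quadratic}. Convergence of this series requires only a logarithmic moment, and that is implied by membership in a stable domain of attraction.

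The core step is an algebraic rearrangement rather than summation by parts. Rather than manipulating $\sum_k\sum_j B_j\xi_{k-j}$, I would use the recursion itself. Summing \eqref{eq:quadratic-recursion} over $k=1,\dots,n$ and setting $T_n=\sum_{k=1}^n e_k$ gives
\[
T_n-(1+\beta)(I_d-\gamma H)T_n+\beta(I_d-\gamma H)T_n=\gamma\sum_{k=1}^n\xi_k+R_n .
\]
Here $R_n$ collects finitely many boundary terms of the form $e_{0},e_{-1},e_n,e_{n-1}$, multiplied by fixed matrices. The left side equals $\gamma H T_n$, so
\[
T_n=H^{-1}\sum_{k=1}^n\xi_k+\gamma^{-1}H^{-1}R_n .
\]
Hence
\[
a_n^{-1}\bigl(T_n-H^{-1}b_n\bigr)=H^{-1}a_n^{-1}\Bigl(\sum_{k=1}^n\xi_k-b_n\Bigr)+a_n^{-1}\gamma^{-1}H^{-1}R_n .
\]
The first term converges weakly to $H^{-1}S_\alpha$ by \eqref{eq:innovation-stable} and the continuous mapping theorem.

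The only real obstacle is showing that $a_n^{-1}R_n\to0$ in probability. The terms $e_0,e_{-1}$ are fixed and finite almost surely. For $e_n,e_{n-1}$, stationarity makes $e_n\overset{d}{=}e_0$, which is tight. Since $a_n\to\infty$ (it is regularly varying with index $1/\alpha$), it follows that $a_n^{-1}e_n\to0$ in probability. This needs no moments: tightness of a single stationary marginal is enough.

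For a deterministic initial pair, the difference $e_k-e_k^\circ$ satisfies the homogeneous Schur-stable recursion. It is therefore bounded by $C\rho^k|(e_0-e_0^\circ,e_{-1}-e_{-1}^\circ)|$, and the summed difference is almost surely finite. This requires only that the stationary $(e_0^\circ,e_{-1}^\circ)$ is almost surely finite. Alternatively, the same identity applies verbatim to the initialized recursion, with boundary terms built from $e_0,e_{-1}$ (deterministic) and $e_n,e_{n-1}$. These last are tight because $e_n-e_n^\circ\to0$ almost surely. Dividing by $a_n$ gives the same limit, so Slutsky's lemma completes both statements.
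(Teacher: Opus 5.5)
Your argument is correct. It arrives at the paper's decomposition by a different derivation.

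\textbf{The paper's route.} It starts from the MA$(\infty)$ representation \eqref{eq:ma-rep}. It forms the tail sums $C_j=\sum_{\ell>j}B_\ell$ and introduces the auxiliary stationary series $v_k=\sum_{j\ge0}C_j\xi_{k-j}$. Convergence of that series is justified by geometric decay of $C_j$ plus the logarithmic moment implied by regular variation. The Beveridge--Nelson identity \eqref{eq:BN} then gives $\sum_{k=1}^n e_k=H^{-1}\sum_{k=1}^n\xi_k+v_0-v_n$. Deterministic initialization is handled separately, through powers of the companion matrix applied to the initial discrepancy; this is your first argument.

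\textbf{Your route.} You sum the finite-order recursion and use $I_d-(1+\beta)(I_d-\gamma H)+\beta(I_d-\gamma H)=\gamma H$. This is the same evaluation $\mathcal B(1)=H^{-1}$, done at the level of the autoregressive polynomial. Your remainder is $\gamma^{-1}H^{-1}(I_d-\gamma H)\{(e_0-\beta e_{-1})-(e_n-\beta e_{n-1})\}$. The identity $\sum_j C_j z^j=\{\mathcal B(1)-\mathcal B(z)\}/(1-z)=\gamma^{-1}H^{-1}(I_d-\gamma H)(1-\beta z)\mathcal B(z)$ shows that $\gamma^{-1}H^{-1}(I_d-\gamma H)(e_k-\beta e_{k-1})$ is exactly the paper's $v_k$. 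So you obtain the Beveridge--Nelson remainder in closed form, as a finite combination of two consecutive states.

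\textbf{What each approach buys.} Your version needs no auxiliary infinite series or convergence check, only existence of the stationary solution. The MA facts in your first paragraph are therefore not actually used. Your identity also holds pathwise for every solution of the recursion, so your second treatment of deterministic initial pairs needs only tightness of $e_n$. The paper's route is more general: it applies to any linear filter with suitably summable coefficients, not just finite-order recursions. It also ties the limit to the same $\mathcal B(1)$ used for the long-run covariance in Theorem~\ref{thm:lrcov-quadratic}.

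The endgame is identical in both: a tight remainder divided by $a_n\to\infty$, then Slutsky.
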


\begin{proof}
Schur stability makes the coefficients in \eqref{eq:ma-rep} geometrically summable and
\[
 \sum_{j=0}^{\infty}B_j=\mathcal B(1)=H^{-1}.
\]
Let $C_j=\sum_{\ell>j}B_\ell$ and define the stationary series $v_k=\sum_{j\geq0}C_j\xi_{k-j}$, which converges because $(C_j)$ decays geometrically and regularly varying random vectors have a finite logarithmic moment.  The Beveridge--Nelson identity is
\begin{equation}
 e_k=H^{-1}\xi_k+v_{k-1}-v_k.
 \label{eq:BN}
\end{equation}
Summing gives
\[
 \sum_{k=1}^ne_k=H^{-1}\sum_{k=1}^n\xi_k+v_0-v_n.
\]
The stationary endpoint difference is tight, while $a_n\to\infty$, so $(v_0-v_n)/a_n\to0$ in probability.  Equation \eqref{eq:innovation-stable} and Slutsky's theorem prove \eqref{eq:stable-limit}.  For deterministic initialization, the difference from a stationary orbit is the deterministic companion matrix to the $k$th power applied to the finite random initial discrepancy.  Its absolute sum is finite almost surely by Schur stability, even when the stationary initial state has no first moment.  Dividing this tight discrepancy by $a_n$ transfers the limit.
\end{proof}

\begin{remark}[What GMC alone does not imply]
Fractional-moment GMC controls memory but does not itself imply regular variation.  For a general nonlinear Nesterov recursion, a stable limit additionally requires joint regular variation, anti-clustering, and a negligible-small-jump condition for the stationary state; see \citet{BasrakSegers2009,BartkiewiczEtAl2011,BasrakKrizmanicSegers2012,MikoschWintenberger2014,Mirek2011}.  The quadratic theorem is useful because all these issues collapse to the innovation domain-of-attraction assumption through the exact linear filter.
\end{remark}

\subsection{Rigidity and an exact random-curvature benchmark}
\label{sec:random-curvature}

The scalar examples below separate three questions: whether a particular metric contracts, whether a particular admissible recursion is mean-square stable, and whether all recursions obeying the assumptions are stable.  These are not equivalent questions.

\begin{proposition}[Rigidity when $L_2=\mu$]
\label{prop:rigidity}
Under Assumptions~\ref{ass:strong}--\ref{ass:lp} at $p=2$ with $L_2=\mu$, for every deterministic pair $\theta,\vartheta$,
\[
 G(\theta,X)-G(\vartheta,X)=\mu(\theta-\vartheta)\quad\text{almost surely}.
\]
If the sample maps are continuous, the identity holds simultaneously for every pair on an event of probability one.  Thus $G(\theta,X)=\mu(\theta-\theta^\star)+G(\theta^\star,X)$ on that event.
\end{proposition}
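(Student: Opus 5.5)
Fix deterministic $\theta\neq\vartheta$, put $v=\theta-\vartheta$ and $h=G(\theta,X)-G(\vartheta,X)$. The plan is to show $\E\abs{h-\mu v}^2=0$ by expanding the square and using the two assumptions in opposite directions. We have
\[
 \E\abs{h-\mu v}^2=\E\abs h^2-2\mu\ip{v}{\E h}+\mu^2\abs v^2 .
\]
Assumption~\ref{ass:lp} with $p=2$ and $L_2=\mu$ gives $\E\abs h^2\leq\mu^2\abs v^2$. Since $\E h=m(\theta)-m(\vartheta)$, Assumption~\ref{ass:strong} gives $\ip{v}{\E h}\geq\mu\abs v^2$. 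Substituting both bounds yields $\E\abs{h-\mu v}^2\leq\mu^2\abs v^2-2\mu^2\abs v^2+\mu^2\abs v^2=0$. Hence $h=\mu v$ almost surely for each fixed pair; the case $v=0$ is trivial. This is the equality case of the Cauchy--Schwarz chain displayed after Assumption~\ref{ass:lp}, and the expansion above is the cleanest way to extract it.

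For the simultaneous statement, I would let $\mathbb Q^d$ be the countable dense set and intersect the probability-one events over all pairs of rational points. This gives a single event $\Omega_0$ of probability one on which $G(\theta,X)-G(\vartheta,X)=\mu(\theta-\vartheta)$ for all $\theta,\vartheta\in\mathbb Q^d$. On $\Omega_0$, continuity of $\theta\mapsto G(\theta,X(\omega))$ lets us pass to limits in both arguments, so the identity extends to all of $\R^d\times\R^d$. The only subtlety is that the exceptional null set in the first step depends on the pair. The countable-dense reduction plus sample continuity is exactly what removes this dependence, which is why that hypothesis appears in the statement.

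Finally, set $\vartheta=\theta^\star$ in the simultaneous identity. On the same event this gives $G(\theta,X)=\mu(\theta-\theta^\star)+G(\theta^\star,X)$ for every $\theta$. I expect no real obstacle beyond carefully handling the measure-zero sets; the algebraic core is the one-line variance identity.
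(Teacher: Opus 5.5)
Your proposal is correct and follows essentially the paper's argument. The paper writes the same two assumptions as the chain $\mu\abs v^2\leq\ip{v}{\E H}\leq\abs v\abs{\E H}\leq\abs v\norm{H}_2\leq\mu\abs v^2$ and reads off $\E H=\mu v$ and zero variance from the equality cases, which your expansion of $\E\abs{h-\mu v}^2$ packages into one line; the countable-dense-set and sample-continuity step is the same as the paper's, and your specialization $\vartheta=\theta^\star$ gives the final display.
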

\begin{proof}
Put $v=\theta-\vartheta$ and $H=G(\theta,X)-G(\vartheta,X)$.  For $v\ne0$,
\[
 \mu|v|^2\leq\langle v,\E H\rangle
 \leq |v||\E H|\leq |v|\|H\|_2\leq\mu|v|^2.
\]
Equality throughout implies $\E H=\mu v$ and
$\E|H-\E H|^2=0$.  A countable dense set and continuity give simultaneous validity.  The case $v=0$ is immediate.
\end{proof}

In particular, at $\mu=L_2$ the gap between \eqref{eq:allbeta-bound} and the exact quadratic boundary is entirely a loss in the certificate, not a consequence of random curvature.  At $\beta=0.9$ and $\mu=L_2=1$, these boundaries are approximately $0.01163$ and $1.35714$, a ratio of $116.7$; at $\beta=0.99$ the ratio is approximately $1.31\times10^4$.  ``Optimal'' in Proposition~\ref{prop:optimal-weight} means only optimal for its fixed slow coordinate and its displayed scalar inequalities.

For a genuinely random-curvature comparison, let
\begin{equation}
 G(\theta,X_k)=A_k(\theta-\theta^\star)-\xi_k,
 \qquad \E A_k=\mu>0,\quad\E A_k^2=L^2,
 \label{eq:random-gradient}
\end{equation}
where $(A_k,\xi_k)$ are i.i.d., $\E\xi_k=0$, and the anchor has a finite second moment.  The scalar position recursion is a finite-lag random-coefficient autoregression of the type in Example~7 of \citet{ChenWu2016}:
\[
 e_{k+1}=(1-\gamma A_{k+1})\{(1+\beta)e_k-\beta e_{k-1}\}
          +\gamma\xi_{k+1}.
\]
Proposition~\ref{prop:ar2} gives a sufficient condition
$(1+2\beta)\|1-\gamma A_0\|_p<1$ whenever the multiplier and anchor have the stated $p$th moments.  The exact second-moment calculation below instead retains the signed covariance terms; its boundary is not asserted to follow from that coefficient-sum condition.  Synchronous differences do not involve $\xi_k$.  In dimension one, set
\[
 t=1-\mu\gamma,\qquad v=1-2\mu\gamma+L^2\gamma^2.
\]
Their second-moment vector $m_k=(\E x_k^2,\E x_kx_{k-1},\E x_{k-1}^2)^\trans$ obeys $m_{k+1}=T_{\gamma,\beta}m_k$, where
\begin{equation}
 T_{\gamma,\beta}=\begin{pmatrix}
 (1+\beta)^2v&-2\beta(1+\beta)v&\beta^2v\\
 (1+\beta)t&-\beta t&0\\
 1&0&0
 \end{pmatrix}.
 \label{eq:second-moment-map}
\end{equation}

\begin{theorem}[Exact scalar multiplicative-curvature region]
\label{thm:random-boundary}
For $0<\beta<1$, define
\begin{align}
 c_0&=2\mu(1-\beta^2),\notag\\
 c_1&=2\beta(1+2\beta)\mu^2-(1+\beta)L^2,\notag\\
 c_2&=\beta(1+2\beta)\mu L^2,\label{eq:boundary-coefficients}\\
 \Gamma_{\rm sc}(\beta)&=
 \frac{c_1+\sqrt{c_1^2+4c_2c_0}}{2c_2}.
 \label{eq:scalar-boundary}
\end{align}
The scalar difference recursion is $L^2$ GMC if and only if
\begin{equation}
 \rho(T_{\gamma,\beta})<1
 \quad\Longleftrightarrow\quad
 0<\gamma<\Gamma_{\rm sc}(\beta).
 \label{eq:scalar-iff}
\end{equation}
For $\beta=0$, the boundary is $2\mu/L^2$.  When the boundary case $\beta=1$ is considered solely for this scalar model, a nonempty mean-square stable interval exists exactly when $L^2<3\mu^2$, and it is
\begin{equation}
 0<\gamma<\frac{2(3\mu^2-L^2)}{3\mu L^2}.
 \label{eq:beta-one}
\end{equation}
\end{theorem}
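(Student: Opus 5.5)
The plan is to reduce everything to the $3\times3$ second-moment map \eqref{eq:second-moment-map} and locate the parameter values at which its spectral radius crosses one. The reduction has three steps.

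First, I will justify \eqref{eq:second-moment-map} and the equivalence between $L^2$ GMC and $\rho(T_{\gamma,\beta})<1$.
\begin{itemize}
\item Write $x_{k+1}=(1-\gamma A_{k+1})w_k$ with $w_k=(1+\beta)x_k-\beta x_{k-1}$.
\item Since $A_{k+1}$ is independent of $\cF_k$, we get $\E x_{k+1}^2=v\,\E w_k^2$ and $\E x_{k+1}x_k=t\,\E w_kx_k$. Expanding these gives the rows of $T_{\gamma,\beta}$.
\item The vector $m_k$ is the vectorized covariance of $(x_k,x_{k-1})$, and $T_{\gamma,\beta}$ maps the cone $\mathcal K$ of (vectorized) positive semidefinite $2\times2$ matrices into itself. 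Indeed, $M\mapsto \E[\mathcal A^\trans M\mathcal A]$-type maps with random companion matrix $\mathcal A$ preserve positive semidefiniteness.
\item $\mathcal K$ is a proper cone, and the deterministic initial covariances $z z^\trans$ span it. Hence $\E x_k^2+\E x_{k-1}^2\to0$ geometrically for all deterministic initial pairs if and only if $\rho(T)<1$.
\item By Krein--Rutman (finite-dimensional Perron--Frobenius for cone-preserving maps), $\rho(T)$ is itself an eigenvalue of $T$ with an eigenvector in $\mathcal K$.
\end{itemize}

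Second, I will track $\rho(T_{\gamma,\beta})$ as $\gamma$ increases from $0$.
\begin{itemize}
\item At $\gamma=0$ we have $t=v=1$, and $T$ is the second-moment matrix of the deterministic AR(2) with roots $1$ and $\beta$. Its eigenvalues are $1,\beta,\beta^2$.
\item Compute $\det(I-T_{\gamma,\beta})$ as a polynomial in $\gamma$, using $t=1-\mu\gamma$ and $v=1-2\mu\gamma+L^2\gamma^2$. I expect, after cancelling the factor coming from the vanishing eigenvalue at $\gamma=0$, an identity of the form
\[
 \det(I-T_{\gamma,\beta})=\kappa_\beta\,\gamma\,\bigl(c_0+c_1\gamma-c_2\gamma^2\bigr),\qquad \kappa_\beta>0,
\]
with $c_0,c_1,c_2$ as in \eqref{eq:boundary-coefficients}. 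Since $c_0,c_2>0$, the quadratic $c_0+c_1\gamma-c_2\gamma^2$ is positive exactly on $0<\gamma<\Gamma_{\rm sc}(\beta)$.
\item The simple eigenvalue $1$ at $\gamma=0$ has derivative $-\partial_\gamma\det(I-T)/\prod_{\lambda\ne1}(1-\lambda)$. With $c_0>0$ and the other two factors equal to $(1-\beta)(1-\beta^2)>0$, this derivative is negative. So $\rho<1$ for small $\gamma>0$.
\item Because $\rho(T)$ is continuous in $\gamma$ and is an eigenvalue, it can reach $1$ only where $\det(I-T)=0$. Hence $\rho<1$ throughout $(0,\Gamma_{\rm sc})$.
\item Conversely, for $\gamma>\Gamma_{\rm sc}$ we have $\det(I-T)=\prod(1-\lambda_i)<0$. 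Complex eigenvalues come in conjugate pairs and contribute positive factors, so some real $\lambda>1$ exists and $\rho>1$. At $\gamma=\Gamma_{\rm sc}$, $1$ is an eigenvalue, so $\rho\geq1$.
\end{itemize}
This proves \eqref{eq:scalar-iff}.

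Third, the special cases.
\begin{itemize}
\item At $\beta=0$: $c_0=2\mu$, $c_1=-L^2$, $c_2=0$. The boundary reduces to the root of $2\mu-L^2\gamma$, that is, $2\mu/L^2$. I will treat this directly, since the formula for $\Gamma_{\rm sc}$ degenerates when $c_2=0$.
\item At $\beta=1$: $c_0=0$, so the quadratic becomes $\gamma(c_1-c_2\gamma)$ with $c_1=6\mu^2-2L^2$ and $c_2=3\mu L^2$. The positive interval is nonempty exactly when $L^2<3\mu^2$, and it equals \eqref{eq:beta-one}.
\item The $\beta=1$ case needs a separate start of the continuity argument. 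At $\gamma=0$ the eigenvalue $1$ is now triple, so the perturbation step is replaced by checking the sign of $\det(I-T)$ together with a small-$\gamma$ expansion of the eigenvalues.
\end{itemize}

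The main obstacle is the determinant identity: expanding $\det(I-T)$ and recognizing the factor $\gamma(c_0+c_1\gamma-c_2\gamma^2)$. I would do this by cofactor expansion along the third row, which has a single nonzero entry. A secondary point is verifying cone-invariance cleanly, so that the Krein--Rutman argument excludes boundary crossings by complex or negative eigenvalues without needing Jury's cubic conditions.
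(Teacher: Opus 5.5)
Your proposal is correct and, for $0\le\beta<1$, it is the paper's proof. It uses the same four ingredients:
\begin{itemize}
\item cone invariance, giving the equivalence of $L^2$ GMC with $\rho(T_{\gamma,\beta})<1$;
\item the identity $\det(I_3-T_{\gamma,\beta})=\gamma(c_0+c_1\gamma-c_2\gamma^2)$ (your third-row expansion gives $1+\beta t-(1+2\beta+2\beta^2)v+\beta(1+2\beta)tv$, so in fact $\kappa_\beta=1$);
\item the derivative $-2\mu/(1-\beta)$ of the simple eigenvalue that equals one at $\gamma=0$;
\item the first-exit argument through the cone Perron property, which the paper proves itself as Lemma~\ref{lem:cone-perron} rather than citing Krein--Rutman.
\end{itemize}

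You diverge only in how you start the continuation at $\beta=1$. The paper does not perturb at $\gamma=0$ there. It fixes $\gamma$ in \eqref{eq:beta-one} and uses $\Gamma_{\rm sc}(\beta)\to2(3\mu^2-L^2)/(3\mu L^2)$ as $\beta\uparrow1$, together with continuity in $\beta$, to get $\rho(T_{\gamma,1})\le1$. It then excludes $\rho=1$ via $\det(I_3-T_{\gamma,1})>0$ and the cone Perron property.

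Your small-$\gamma$ expansion also works, but it is more delicate than your sketch suggests. $T_{0,1}$ is a single $3\times3$ Jordan block at $1$, and $\det\{(1+\epsilon)I_3-T_{\gamma,1}\}=\epsilon^3+7\mu\gamma\epsilon^2+4\mu\gamma\epsilon+2(3\mu^2-L^2)\gamma^2$ up to terms of order $\gamma^2|\epsilon|+\gamma^3$. Its Newton polygon gives a real root $\epsilon\approx-(3\mu^2-L^2)\gamma/(2\mu)$ and a pair $\epsilon\approx\pm2i\sqrt{\mu\gamma}$. To first order that pair has squared modulus $1+4\mu\gamma>1$, so a naive expansion looks unstable.

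You therefore need one of two finishes:
\begin{itemize}
\item carry the $O(\gamma)$ real part of the pair, $-(11\mu^2+L^2)\gamma/(4\mu)$; or
\item observe that the pair is nonreal, so Krein--Rutman forces $\rho$ to equal the real eigenvalue, which $\det>0$ places below one.
\end{itemize}
The paper's continuity-in-$\beta$ argument sidesteps this computation entirely.
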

\begin{proof}
Independence of the current multiplier and the past gives \eqref{eq:second-moment-map}.  In covariance-matrix notation the same operator is $C\mapsto\E[A_{\gamma,\beta}(A_k)C A_{\gamma,\beta}(A_k)^\trans]$.  It preserves the cone of positive semidefinite matrices.  Its powers decay geometrically if its spectral radius is below one.  Conversely, GMC for every deterministic initial vector implies geometric decay on every rank-one covariance and hence on their linear span, the full space of symmetric matrices.  This proves the first equivalence.  Equivalently, when $\rho(T)<1$, the adjoint series
\[
 P=\sum_{j=0}^\infty (\mathcal T^*)^j(I_2)
\]
converges and satisfies $\E[A^\trans PA]=P-I_2$, giving a quadratic contraction norm.

We verify the spectral-radius boundary explicitly.  Direct expansion yields
\begin{equation}
 \det(I_3-T_{\gamma,\beta})
 =\gamma(c_0+c_1\gamma-c_2\gamma^2).
 \label{eq:random-det}
\end{equation}
At $\gamma=0$, the eigenvalues are $1,\beta,\beta^2$.  For $\beta<1$, the simple eigenvalue at one has derivative $-2\mu/(1-\beta)$, as follows by differentiating $f(\lambda,\gamma)=\det(\lambda I_3-T_{\gamma,\beta})$: at $(\lambda,\gamma)=(1,0)$,
$f_\gamma=2\mu(1-\beta^2)$ and $f_\lambda=(1-\beta)(1-\beta^2)$.
The other two eigenvalues remain strictly inside the unit disk for small $\gamma$, so $\rho(T)<1$ for sufficiently small positive $\gamma$.  A positive linear map on the positive-semidefinite cone has its spectral radius as a nonnegative real eigenvalue; a proof of this finite-dimensional fact is included in Lemma~\ref{lem:cone-perron}.  Therefore any first exit from $\rho(T)<1$ must satisfy $\det(I_3-T)=0$.  Since $c_0,c_2>0$, the polynomial $c_0+c_1\gamma-c_2\gamma^2$ has exactly one positive zero, namely \eqref{eq:scalar-boundary}.  There is no exit before it.  Beyond it, the determinant is negative, whereas $\det(I_3-T)>0$ for a real Schur-stable matrix: real eigenvalue factors are positive and nonreal factors occur in conjugate pairs.  This proves \eqref{eq:scalar-iff}.  At $\beta=0$, the only nonzero eigenvalue is $v$, giving the stated boundary.

At $\beta=1$,
\[
 \det(I_3-T_{\gamma,1})
 =\gamma^2\{2(3\mu^2-L^2)-3\mu L^2\gamma\}.
\]
If $L^2<3\mu^2$ and \eqref{eq:beta-one} holds, continuity from $\beta<1$ gives $\rho(T_{\gamma,1})\leq1$.  The positive determinant and the cone Perron property exclude equality.  Outside this interval the determinant is nonpositive, which precludes strict Schur stability.  This also proves that no positive interval exists when $L^2\geq3\mu^2$.
\end{proof}

\begin{corollary}[Three high-momentum regimes in the scalar model]
\label{cor:scalar-regimes}
As $\beta\uparrow1$,
\begin{equation}
 \Gamma_{\rm sc}(\beta)\sim
 \begin{cases}
 \displaystyle\frac{2(3\mu^2-L^2)}{3\mu L^2},&L^2<3\mu^2,\\[2mm]
 \displaystyle\frac{2}{3\mu}\sqrt{1-\beta},&L^2=3\mu^2,\\[2mm]
 \displaystyle\frac{2\mu}{L^2-3\mu^2}(1-\beta),&L^2>3\mu^2.
 \end{cases}
 \label{eq:scalar-three-regimes}
\end{equation}
\end{corollary}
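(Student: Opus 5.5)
The plan is to read the asymptotics directly off the closed-form root $\Gamma_{\rm sc}(\beta)$ in \eqref{eq:scalar-boundary}. By Theorem~\ref{thm:random-boundary}, $\Gamma_{\rm sc}(\beta)$ is the unique positive zero of $c_0+c_1\gamma-c_2\gamma^2$. So it suffices to track the behaviour of $c_0,c_1,c_2$ as $\beta\uparrow1$ and apply elementary perturbation of a quadratic root.

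\textbf{Limits of the coefficients.} As $\beta\uparrow1$:
\begin{itemize}
\item $c_0=2\mu(1-\beta)(1+\beta)\sim4\mu(1-\beta)\to0$;
\item $c_2\to3\mu L^2>0$;
\item $c_1\to c_1(1)=6\mu^2-2L^2=2(3\mu^2-L^2)$.
\end{itemize}
The sign of $c_1(1)$ therefore produces the three regimes.

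\textbf{Case $L^2<3\mu^2$.} Here $c_1(1)>0$ and $c_0\to0$. By continuity of the root formula, $\Gamma_{\rm sc}(\beta)\to c_1(1)/c_2(1)=2(3\mu^2-L^2)/(3\mu L^2)$. This matches \eqref{eq:beta-one}.

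\textbf{Case $L^2>3\mu^2$.} Here $c_1(1)<0$, so the root tends to zero. Write the root as
\[
\Gamma_{\rm sc}=\frac{2c_0}{\sqrt{c_1^2+4c_2c_0}-c_1}.
\]
Since $c_1<0$, the denominator tends to $2|c_1(1)|$. Hence
\[
\Gamma_{\rm sc}\sim\frac{c_0}{|c_1(1)|}\sim\frac{4\mu(1-\beta)}{2(L^2-3\mu^2)}=\frac{2\mu(1-\beta)}{L^2-3\mu^2}.
\]

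\textbf{Critical case $L^2=3\mu^2$.} This is the only step needing care, because $c_1$ vanishes at $\beta=1$ and its rate of vanishing matters. Substituting $L^2=3\mu^2$ gives
\[
c_1=\mu^2\{2\beta+4\beta^2-3-3\beta\}=\mu^2(4\beta+3)(\beta-1),
\]
so $c_1\sim-7\mu^2(1-\beta)$. Then $c_1^2=O((1-\beta)^2)$ is negligible against $4c_2c_0\sim 4\cdot9\mu^3\cdot4\mu(1-\beta)$. Also $c_1/(2c_2)=O(1-\beta)$ is negligible against $\sqrt{c_0/c_2}$. Therefore
\[
\Gamma_{\rm sc}\sim\sqrt{c_0/c_2}\sim\sqrt{\frac{4\mu(1-\beta)}{9\mu^3}}=\frac{2}{3\mu}\sqrt{1-\beta}.
\]
The key point is that $c_1\to0$ linearly, which is fast enough that only the $c_0$ versus $c_2\gamma^2$ balance survives, giving the square-root scale.
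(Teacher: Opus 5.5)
Your proof is correct and follows the paper's argument. You take the limits $c_0\sim4\mu(1-\beta)$, $c_2\to3\mu L^2$, $c_1\to2(3\mu^2-L^2)$, split on the sign of the limiting $c_1$, and use the balance $\sqrt{c_0/c_2}$ in the critical case; your rationalized root $2c_0/(\sqrt{c_1^2+4c_2c_0}-c_1)$ and the factorization $c_1=\mu^2(4\beta+3)(\beta-1)$ at $L^2=3\mu^2$ just spell out the steps the paper states in one line.
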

\begin{proof}
In \eqref{eq:boundary-coefficients}, $c_0\sim4\mu(1-\beta)$, $c_2\to3\mu L^2$, and $c_1\to2(3\mu^2-L^2)$.  For a positive limiting $c_1$, the positive root tends to $c_1/c_2$; for a negative limiting $c_1$, it is asymptotic to $c_0/(-c_1)$.  At $L^2=3\mu^2$, $c_1=O(1-\beta)$ and the root is asymptotic to $\sqrt{c_0/c_2}$.  These are exactly the three displayed expressions.
\end{proof}

The instance \eqref{eq:random-gradient} saturates the two mean-only assumptions.  Thus its stability region is a \emph{necessary upper envelope} for any sufficient condition valid uniformly over that class.  It is not proved to be the worst case, and \eqref{eq:scalar-boundary} is not a sufficient nonlinear or multivariate theorem.  The exact transition is $L/\mu=\sqrt3$, rather than an approximate numerical threshold.  The nonzero interval at $\beta=1$ when $L/\mu<\sqrt3$ reflects Nesterov's look-ahead damping.  Heavy ball at $\beta=1$ has deterministic companion determinant one, and even its mean recursion cannot be Schur stable; mean-square contraction would imply mean contraction, so it is impossible there.

\section{Numerical illustrations and checked certificates}
\label{sec:numerics}

The experiments distinguish a sufficient condition, an exact model-specific boundary, a numerically verified matrix certificate, and an observed Monte Carlo rate.  These are different objects.  The accompanying script uses NumPy, SciPy, and Matplotlib, with root seed $20260923$, and saves the numerical values and full-precision matrices used below.  No experiment is presented as a benchmark for large-scale neural-network training.

\subsection{What is lost by the explicit metric?}
\label{sec:num-regions}

First set $\mu=L=h=1$.  Proposition~\ref{prop:rigidity} shows that this is deterministic scalar curvature plus additive noise, not random curvature.  The direct mean-square condition has boundary $\gamma=2(1-\beta)$ for $\beta<1/2$, whereas the explicit quadratic metric gives \eqref{eq:allbeta-bound}.  The exact Schur boundary is $2(1+\beta)/(1+2\beta)$.  A freely chosen single-mode Lyapunov matrix recovers every strictly stable point, by the discrete Lyapunov series.  Thus the gap in Figure~\ref{fig:regions} is entirely a certificate loss.  Optimizing a single weight with the slow coordinate fixed does not eliminate it.

\begin{figure}[htbp]
 \centering
 \includegraphics[width=0.88\textwidth]{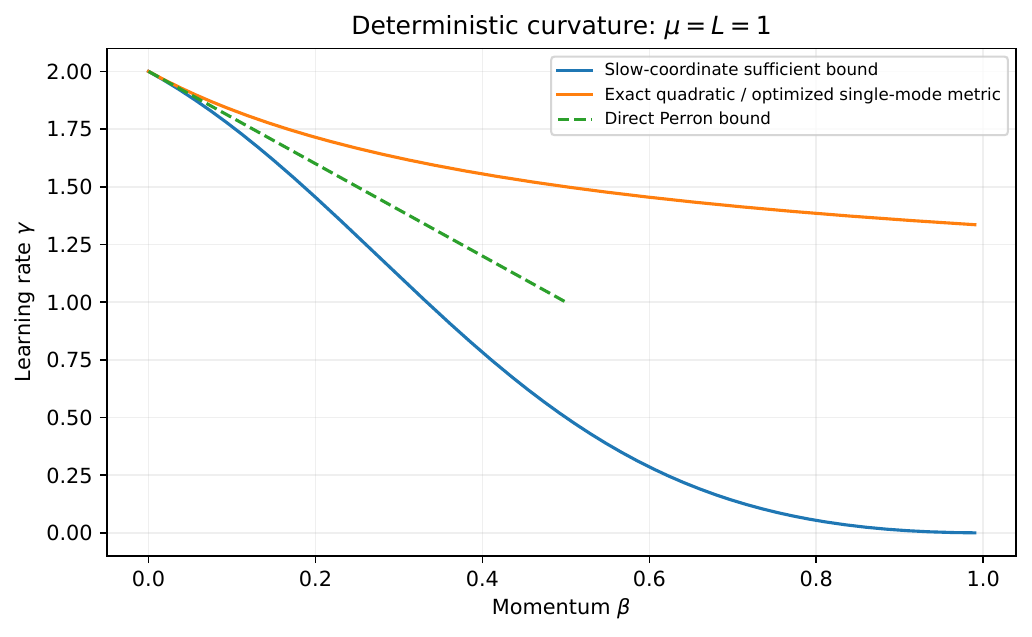}
 \caption{Deterministic curvature, $\mu=L=h=1$.  The exact quadratic curve is also the supremum obtainable by a freely chosen single-mode quadratic norm.  The much smaller slow-coordinate curve is not an intrinsic stability limit.}
 \label{fig:regions}
\end{figure}

For genuinely random curvature, compare the exact scalar benchmark of Theorem~\ref{thm:random-boundary} with the same sufficient bound, at $\mu=1$ and $L=1.5$ or $L=3$.  Table~\ref{tab:scalar-comparison} and Figures~\ref{fig:mean15}--\ref{fig:mean3} quantify the difference.  The scalar boundary is a necessary upper envelope for a universal mean-only theorem, not a sufficient region for the nonlinear class.  The distinction matters: the small-$L/\mu$ scalar example has a nonvanishing limiting region at unit momentum, while the fixed-coordinate certificate always vanishes quadratically.

\begin{table}[htbp]
\centering\small
\caption{Exact scalar random-curvature boundaries versus the explicit slow-coordinate sufficient bound, $\mu=1$. Ratios compare model-specific stability with a universal sufficient certificate.}
\label{tab:scalar-comparison}
\begin{tabular}{rrrrr}\toprule
$L$&$\beta$&$\Gamma_{\rm sc}$&$\Gamma_{\rm NAG}$&Ratio\\\midrule
1.5&0.50&0.566242&0.222222&2.5\\
1.5&0.80&0.404160&0.024024&16.8\\
1.5&0.90&0.334987&0.005168&64.8\\
1.5&0.95&0.289680&0.001198&241.8\\
1.5&0.99&0.239403&0.000045&5306.3\\
3&0.50&0.119297&0.055556&2.1\\
3&0.80&0.055083&0.006006&9.2\\
3&0.90&0.029835&0.001292&23.1\\
3&0.95&0.015689&0.000299&52.4\\
3&0.99&0.003290&0.000011&291.7\\
\bottomrule\end{tabular}\end{table}

\begin{figure}[htbp]
 \centering
 \includegraphics[width=0.86\textwidth]{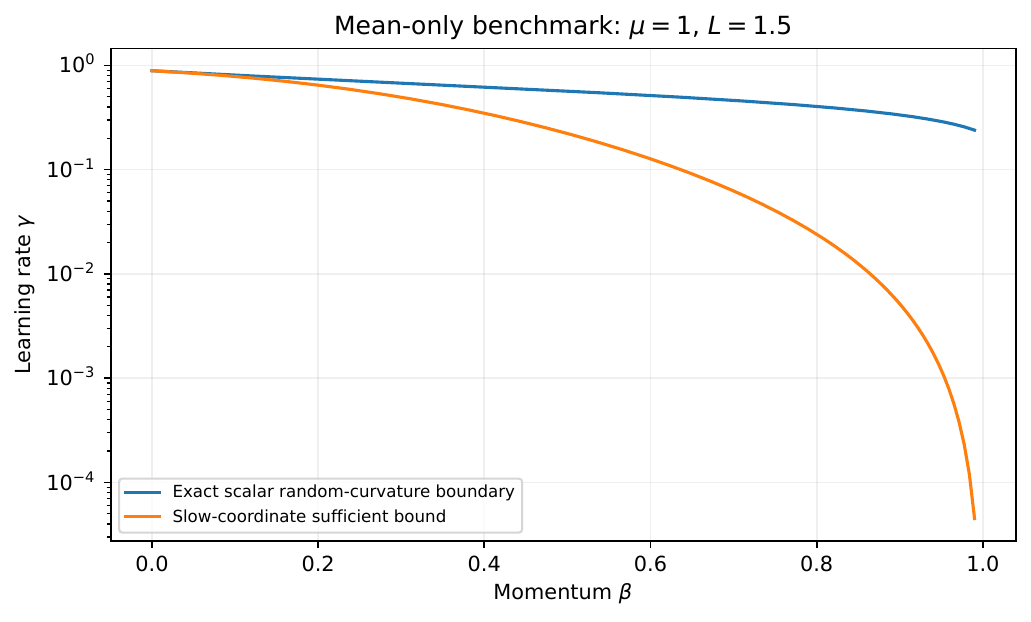}
 \caption{Mean-only comparison for $\mu=1,L=1.5$.  The scalar exact boundary tends to $2/9$ as $\beta\uparrow1$, whereas the explicit slow-coordinate bound tends to zero.}
 \label{fig:mean15}
\end{figure}
\begin{figure}[htbp]
 \centering
 \includegraphics[width=0.86\textwidth]{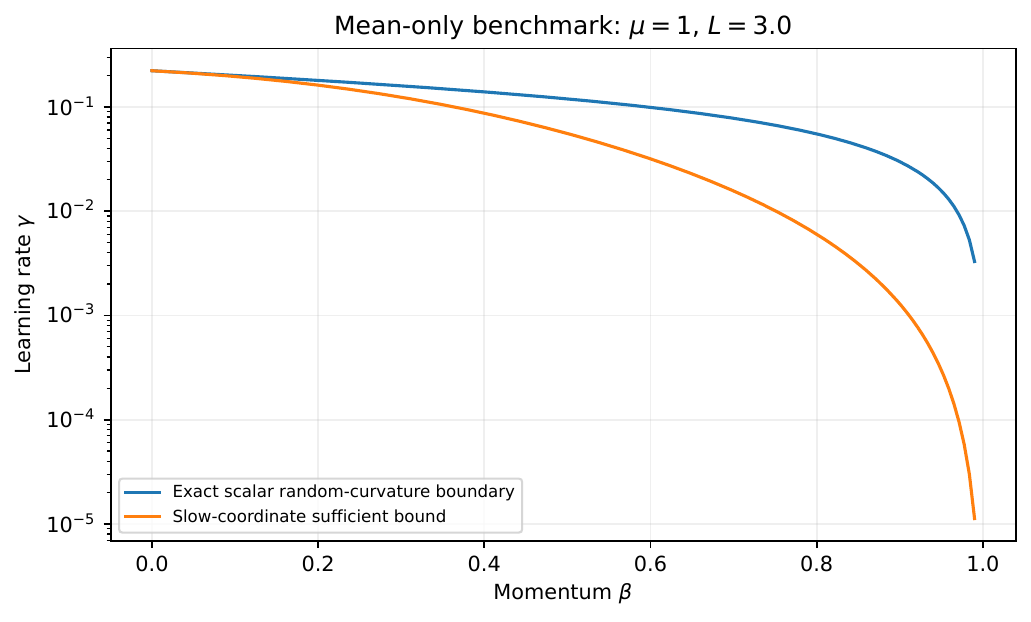}
 \caption{Mean-only comparison for $\mu=1,L=3$.  The exact scalar boundary decays linearly in $1-\beta$, not quadratically.  No sufficiency claim for general nonlinear models is made for the upper curve.}
 \label{fig:mean3}
\end{figure}
\FloatBarrier

\subsection{Endpoint and mean-only quadratic certificates}
\label{sec:num-lmi}

For the samplewise sectors $[1,1.5]$ and $[1,3]$, we optimize a trace-one $2\times2$ matrix in the endpoint inequalities of Theorem~\ref{thm:lmi}.  The code maximizes a common eigenvalue slack using SLSQP, then independently recomputes all eigenvalues of the candidate matrix and residuals.  Only positive-slack candidates are reported as certificates.  This is a small semidefinite feasibility problem mathematically; the numerical implementation does not require a dedicated semidefinite solver.  Failure of the search is not a proof of infeasibility, and a reported upper edge is not proved globally optimal.

The search locates a strictly verified step approximately $0.923076$ for $[1,1.5]$, $\beta=0.8$, compared with the per-mode Schur upper bound $0.923077$ and the explicit bound $0.024024$.  For $[1,3]$, $\beta=0.9$, it locates approximately $0.434862$, compared with $0.452381$ and $0.001292$.  These improvements are factors of approximately $38$ and $337$ over the explicit sufficient steps.  The per-mode upper bound is necessary for a common quadratic certificate; it is not by itself sufficient under time-varying curvature.

\begin{figure}[htbp]
 \centering
 \includegraphics[width=0.86\textwidth]{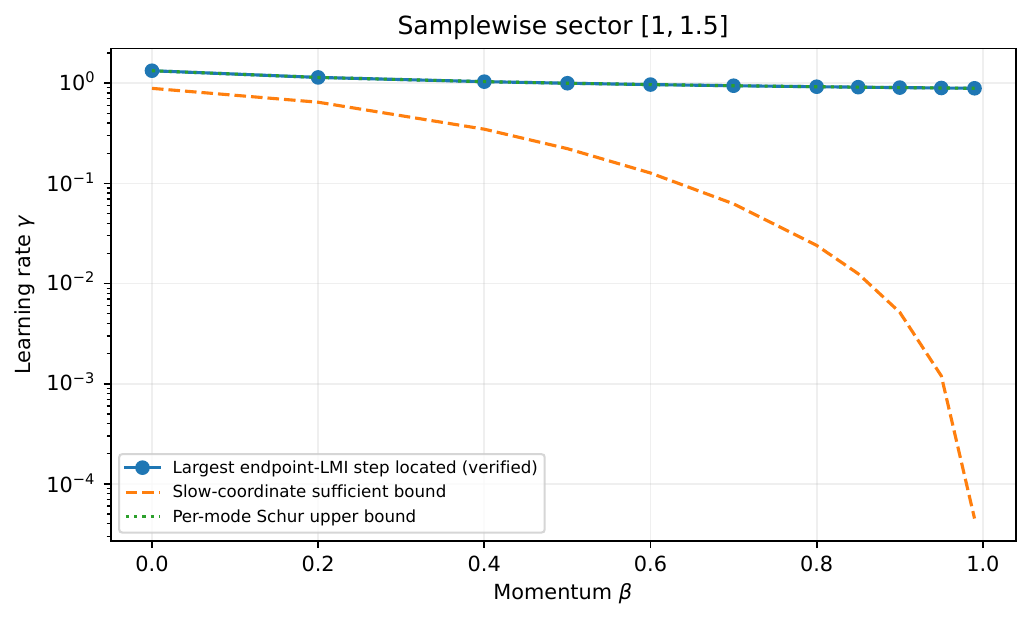}
 \caption{Samplewise sector $[1,1.5]$.  Dots are the largest strictly verified steps located by the specified grid and local refinement; connecting lines are visual guides, not certification of every intervening parameter value.  The upper curve is the per-mode Schur bound.}
 \label{fig:sector15}
\end{figure}
\begin{figure}[htbp]
 \centering
 \includegraphics[width=0.86\textwidth]{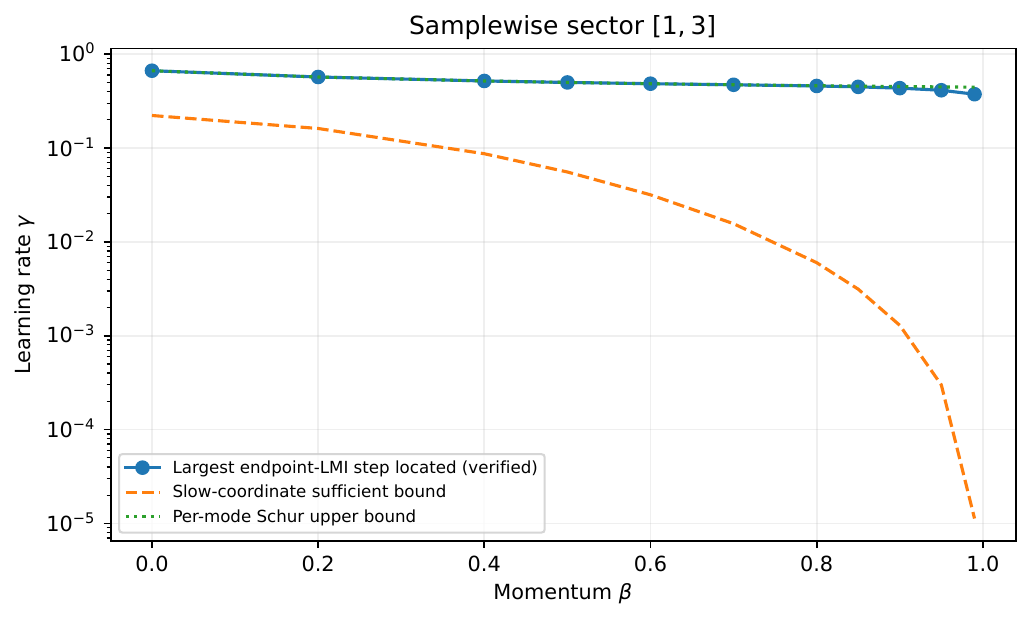}
 \caption{Samplewise sector $[1,3]$.  The optimized common metric substantially improves the fixed slow-coordinate metric.  These dots do not assert that every smaller step is feasible for a common quadratic norm; no monotonicity of feasible steps is assumed.}
 \label{fig:sector3}
\end{figure}

For reproducibility, the following rounded matrices satisfy the endpoint inequalities with the displayed squared factor $\varrho$, in the two-iterate coordinates $(x_k,x_{k-1})$.  Full precision is in the numerical output:
\begin{align}
 (\beta,\gamma,[\mu_s,L_s],\varrho)&=(0.8,0.9,[1,1.5],0.87),\notag\\*
 P_0&=\begin{pmatrix}0.90771236&-0.25957077\\-0.25957077&0.09228764\end{pmatrix},
 \label{eq:num-P1}\\
 (\beta,\gamma,[\mu_s,L_s],\varrho)&=(0.9,0.42,[1,3],0.97),\notag\\*
 P_0&=\begin{pmatrix}0.75669343&-0.25348634\\-0.25348634&0.24330657\end{pmatrix}.
 \label{eq:num-P2}
\end{align}
The minimum of the eigenvalues of $P_0$ and both endpoint residuals $\varrho P_0-A(h)^\trans P_0A(h)$ exceeds $0.0026$ and $0.0040$, respectively; the rounding does not consume these margins.  The accompanying symbolic-check script treats these printed decimals as exact rational numbers and verifies positive leading principal minors of $P_0$ and both residuals.  Thus these two displayed certificates do not rely on a solver status or on floating-point eigenvalue signs.

The mean-only $S$-procedure is different because it does not assume a positive samplewise sector.  At $(\mu,L,\beta,\gamma)=(1,1.5,0.8,0.03)$, a trace-one certificate at $\varrho=1$ is
\[
 P=\begin{pmatrix}0.09779322&0.11092930\\0.11092930&0.90220678\end{pmatrix},
 \qquad \lambda_1=0.03246927,\quad\lambda_2=0.00787110.
\]
The positive eigenvalue slack at $\varrho=1$ exceeds $0.0018$, including after rounding.  More strongly, the symbolic-check script verifies positive leading principal minors of the negative $3\times3$ residual using the printed rational decimals at $\varrho=0.999$.  Theorem~\ref{thm:mean-lmi} therefore gives the explicit squared factor $0.999$; this step is larger than $0.024024$.  This is an exhibited improvement, not a claim that the general mean-only LMI reaches the scalar upper envelope or attains optimal high-momentum scaling.

An additional caution emerged from the parameter search.  For $\beta=0.99$ and sector $[1,3]$, positive certificates are found near both $\gamma=10^{-4}$ and $\gamma=0.3$, while the implementation does not certify several intermediate steps.  Accordingly the search scans the step range before local refinement rather than assuming monotone feasibility in $\gamma$.  The negative searches do not prove a gap and do not justify interpreting the plotted largest-found steps as certificates of all smaller steps.
\FloatBarrier

\subsection{Certified and observed forgetting in a nonlinear sector model}
\label{sec:num-forgetting}

Consider
\begin{equation}
 G(\theta,\xi)=\theta+\tfrac12\tanh\theta+\xi,
 \label{eq:nonlinear-example}
\end{equation}
with i.i.d. Student-$t_3$ innovations.  The sample Jacobian is deterministic and lies in $[1,1.5]$.  Thus this experiment exercises the samplewise-sector theory, not the extra scope of the mean-only theorem.  We use $\beta=0.8$ and compare the initial pair $(10,10)$ with $(0,0)$ using shared innovations in each of $1800$ independent replications.

At $\gamma=0.015$, the explicit slow-coordinate energy has certified squared factor $0.9979116$.  An endpoint certificate improves this to $0.8252862$ in its own equivalent norm.  A log-linear fit of the observed Euclidean squared separation over iterations $50$--$200$ gives $0.78156$ per step.  Figure~\ref{fig:forgetting} compares each normalized metric with its own bound; it does not compare an unadjusted Euclidean distance with a bound stated in another norm.  The explicit rate is conservative even well inside its stable region.

\begin{figure}[htbp]
 \centering
 \includegraphics[width=0.9\textwidth]{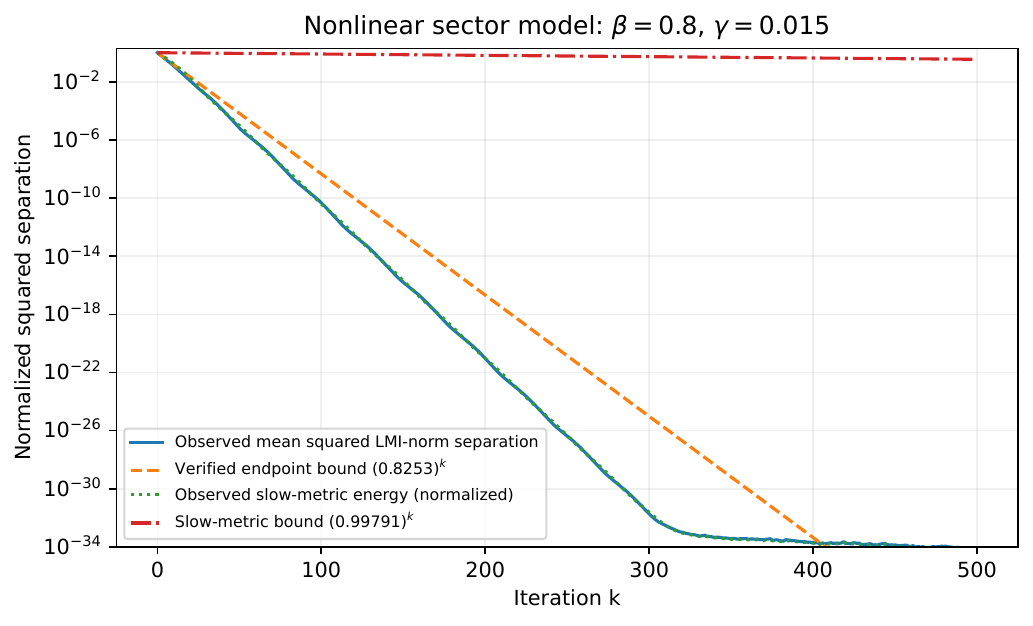}
 \caption{Nonlinear model, $\beta=0.8$, $\gamma=0.015$.  Each observed squared metric is normalized by its initial value and compared with the corresponding certified factor.  Late flattening near machine precision is floating-point round-off, not a nonzero mathematical coupling error.}
 \label{fig:forgetting}
\end{figure}

The same model is also run at $\gamma=0.9$, justified by the verified endpoint matrix \eqref{eq:num-P1}, far beyond the slow-coordinate interval.  Its computed worst endpoint squared factor is $0.8665990<0.87$.  Figure~\ref{fig:forgetting-large} shows synchronous forgetting at this larger step.  Trajectories eventually coalesce in floating-point arithmetic; no observed asymptotic slope is fitted after that event.  The experiment demonstrates applicability of the improved certificate, not optimality of $\gamma=0.9$ for inference or optimization.

\begin{figure}[htbp]
 \centering
 \includegraphics[width=0.9\textwidth]{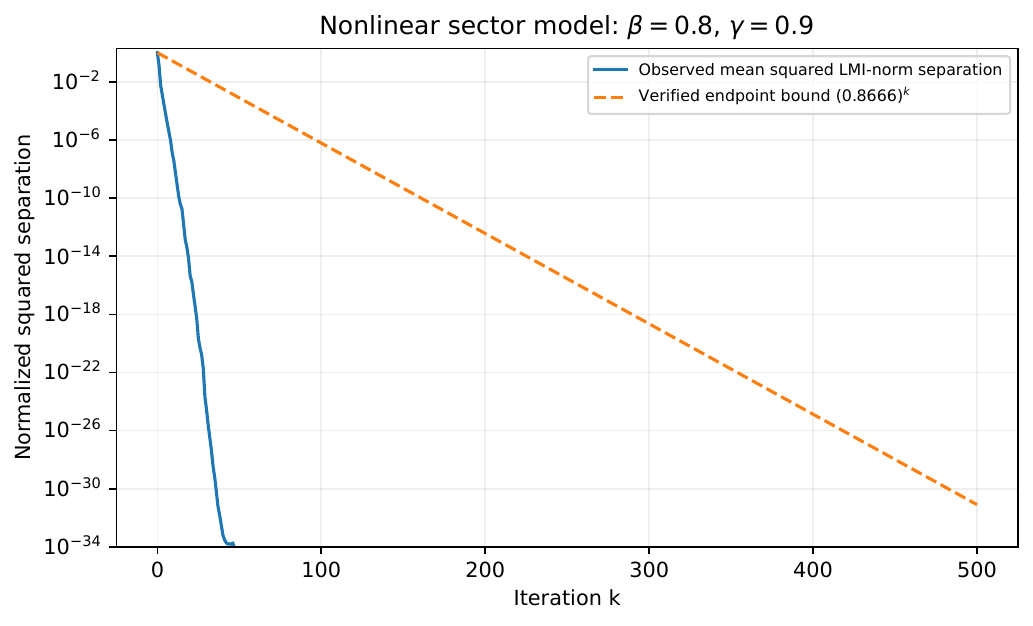}
 \caption{The same nonlinear sector model at $\gamma=0.9$, certified by the endpoint LMI rather than the explicit slow-coordinate bound.  Values below the plotting range or equal to zero after floating-point coalescence are not used to estimate a decay rate.}
 \label{fig:forgetting-large}
\end{figure}
\FloatBarrier

\subsection{A genuinely mean-only random-curvature experiment}
\label{sec:num-multiplicative}

Let $A_k=1+\sqrt{1.25}\,\varepsilon_k$, where $\varepsilon_k$ is symmetric on $\{-1,1\}$.  Then $\E A_k=1$, $\E A_k^2=2.25$, and the two sample slopes are approximately $-0.1180$ and $2.1180$.  Thus the mean-only assumptions hold with $(\mu,L_2)=(1,1.5)$, but no positive samplewise curvature lower bound holds.  Set $\beta=0.8$, $\gamma=0.015$ and start the synchronous two-iterate difference at $(1,1)$.  Additive innovations cancel in this difference recursion and need not be simulated.

We compare $60000$ Monte Carlo difference trajectories with the exact second-moment map \eqref{eq:second-moment-map}.  Its spectral radius is $0.8258465$, compared with the explicit theorem's squared factor $0.9979116$.  Figure~\ref{fig:multiplicative} shows both the substantial rate conservatism and the agreement of the finite-sample moment estimate with the exact recursion over most of the range.  Very small late moments are sensitive to rare multiplicative paths, so empirical agreement is not asserted at arbitrary relative precision.

\begin{figure}[htbp]
 \centering
 \includegraphics[width=0.9\textwidth]{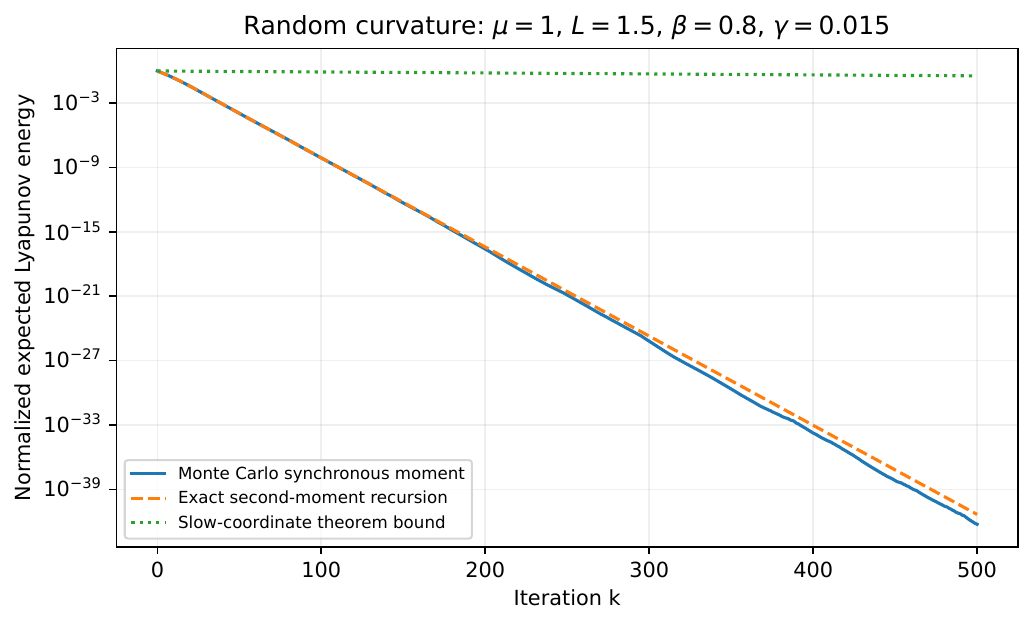}
 \caption{Mean-only example with a negative sample slope of positive probability.  The exact second-moment recursion is available independently of Monte Carlo sampling and gives a model-specific benchmark for the theorem's bound.}
 \label{fig:multiplicative}
\end{figure}
\FloatBarrier

\subsection{Fractional moments with infinite-variance innovations}
\label{sec:num-fractional}

Use \eqref{eq:nonlinear-example} with symmetric Pareto innovations satisfying $\Pp(|\xi|>x)=x^{-1.5}$ for $x\geq1$.  They have mean zero and a finite $1.4$th moment, but infinite variance.  We choose $p=1.4$, $\beta=0.1$, $\gamma=0.003$ and use $6000$ independent shared-noise pairs initialized at $(10,10)$ and $(0,0)$.  The direct Perron calculation gives
\[
 r_{\gamma,\beta,p}=0.9980599484,
 \qquad \eta_{\gamma,\beta,p}=0.0111351141.
\]
The empirical product distance
$\|y_k\|_{1.4}+\eta\|s_k\|_{1.4}$ and its theoretical upper bound are shown in Figure~\ref{fig:fractional}.  This experiment checks the direct fractional-moment theorem at a permitted small momentum.  It does not numerically validate the tiny large-momentum interval of Theorem~\ref{thm:allp}; that theorem is established by its proof and algebraic checks, not by extrapolating this experiment.

\begin{figure}[htbp]
 \centering
 \includegraphics[width=0.9\textwidth]{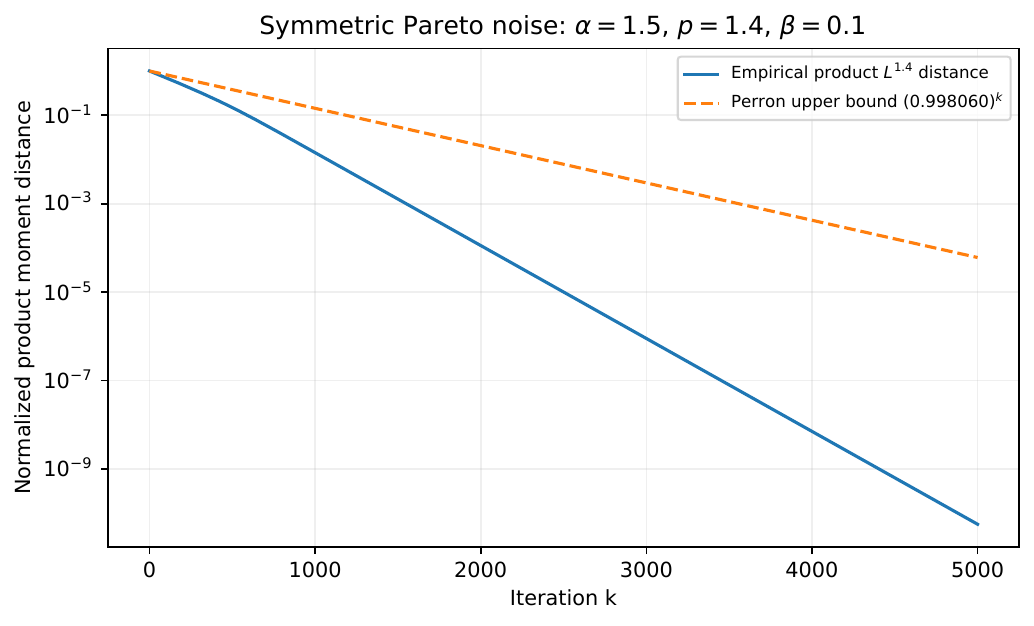}
 \caption{Fractional-moment synchronous coupling under infinite-variance Pareto noise.  The finite $p$th moment is $p=1.4<1.5$; no variance estimate is used in the displayed product distance.}
 \label{fig:fractional}
\end{figure}
\FloatBarrier

\subsection{The full-orbit central limit theorem from separated initial pairs}
\label{sec:num-clt}

For the scalar additive quadratic model take $h=1$, $\gamma=0.4$, $\beta=0.9$, and independent Rademacher innovations $\xi_k\in\{-1,1\}$.  Its long-run variance is exactly one.  Across $3000$ independent streams, compute $\sqrt n\,\bar\Theta_n$ with $n=40000$ for initial pairs $(-30,-30)$, $(0,0)$ and $(30,30)$, retaining \emph{every} iterate in the average.  The three starts share a stream within a replication; different replications are independent.  For efficiency the code adds the exact homogeneous transient to the zero-start orbit, which is algebraically identical to running three separate linear recursions.

\begin{table}[htbp]\centering\small
\caption{Full-orbit quadratic CLT experiment: $n=40000$, $3000$ replications, and known long-run variance one. Coverage uses $\pm z_{0.975}$ on the standardized statistic.}
\label{tab:clt}
\begin{tabular}{rrrrr}\toprule
Initial pair&Mean&Standard deviation&Exact initial shift&Coverage\\\midrule
$(-30,-30)$&-0.0165&1.0080&-0.0225&0.9463\\
$(0,0)$&0.0060&1.0080&0.0000&0.9483\\
$(30,30)$&0.0285&1.0080&0.0225&0.9477\\
\bottomrule\end{tabular}\end{table}

Figure~\ref{fig:clt} and Table~\ref{tab:clt} show the normal approximation and the small remaining deterministic shifts, whose exact standardized values are $-0.0225$, $0$, and $0.0225$.  This illustrates initialization robustness without deleting burn-in.  The coverage calculation uses the known asymptotic long-run variance, not an estimated standard error; it is not a validation of a new confidence-interval estimator.

\begin{figure}[htbp]
 \centering
 \includegraphics[width=0.79\textwidth]{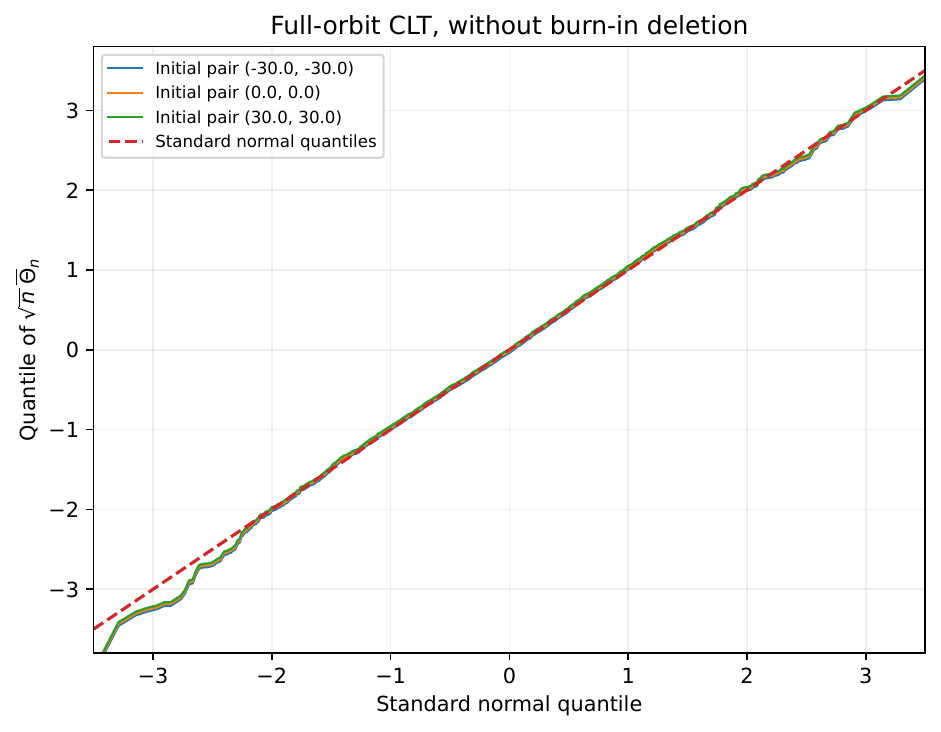}
 \caption{Normal quantile comparison for full-orbit averages from three separated initial pairs.  The innovations are non-Gaussian; the normal comparison concerns the iterate average.  No initial iterates are discarded.}
 \label{fig:clt}
\end{figure}
\FloatBarrier

\subsection{Marginal variance versus long-run variance}
\label{sec:num-variance}

Take $h=\sigma=1$, $\gamma=0.4$ and Gaussian additive innovations.  For each of six momentum values, use $12$ independent runs of $400000$ retained observations following $5000$ burn-in steps.  The marginal variance is estimated by the usual sample variance.  The long-run variance is estimated by $1000$ times the sample variance of $400$ nonoverlapping batch means; standard errors across the $12$ independent runs describe Monte Carlo variation.  Burn-in deletion in this stationary-moment experiment is separate from the full-orbit CLT experiment above.

\begin{table}[htbp]\centering\small
\caption{Quadratic marginal and long-run variance estimates. Parentheses contain Monte Carlo standard errors across $12$ independent runs; the exact long-run variance equals one.}
\label{tab:variance-lrv}
\begin{tabular}{rrrr}\toprule
$\beta$&Exact marginal variance&Estimated marginal variance&Estimated long-run variance\\\midrule
0.00&0.25000&0.24997 (0.00029)&0.9931 (0.0163)\\
0.30&0.29368&0.29371 (0.00028)&1.0110 (0.0220)\\
0.50&0.33766&0.33746 (0.00046)&1.0196 (0.0199)\\
0.70&0.40136&0.40157 (0.00033)&1.0003 (0.0141)\\
0.90&0.49968&0.50043 (0.00066)&1.0475 (0.0172)\\
0.95&0.53302&0.53283 (0.00047)&0.9540 (0.0187)\\
\bottomrule\end{tabular}\end{table}

Figure~\ref{fig:variance} reproduces the increase in marginal variance, while Figure~\ref{fig:lrv} displays the very different behavior of the long-run variance, whose population value is one for every displayed momentum.  Empirical long-run estimates deviate by a few percent owing to Monte Carlo variation and finite-batch effects; the error bars quantify sampling variability across runs, not all sources of estimator bias.  We do not require every bar to cover the exact value.  The contrast illustrates the zero-frequency statement of Theorem~\ref{thm:lrcov-quadratic}, rather than claiming that momentum leaves the full spectrum unchanged.  Recursive alternatives for long-run variance estimation are developed by \citet{Wu2009,ZhuChenWu2023}.

\begin{figure}[htbp]
 \centering
 \includegraphics[width=0.80\textwidth]{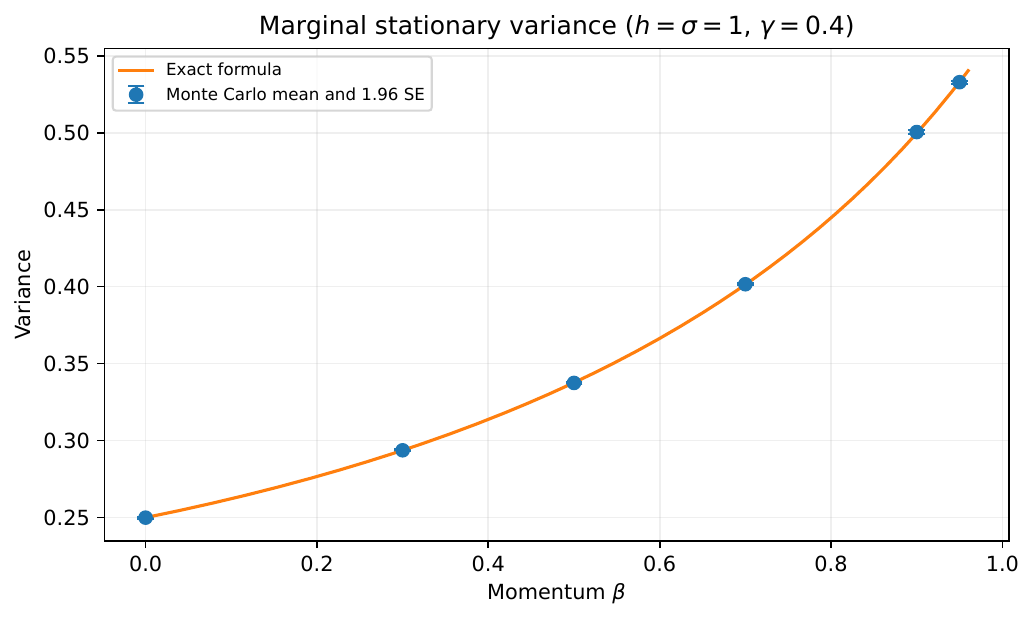}
 \caption{Marginal stationary variance for the additive quadratic model: exact formula and Monte Carlo means with $1.96$ standard errors across independent runs.}
 \label{fig:variance}
\end{figure}
\begin{figure}[htbp]
 \centering
 \includegraphics[width=0.80\textwidth]{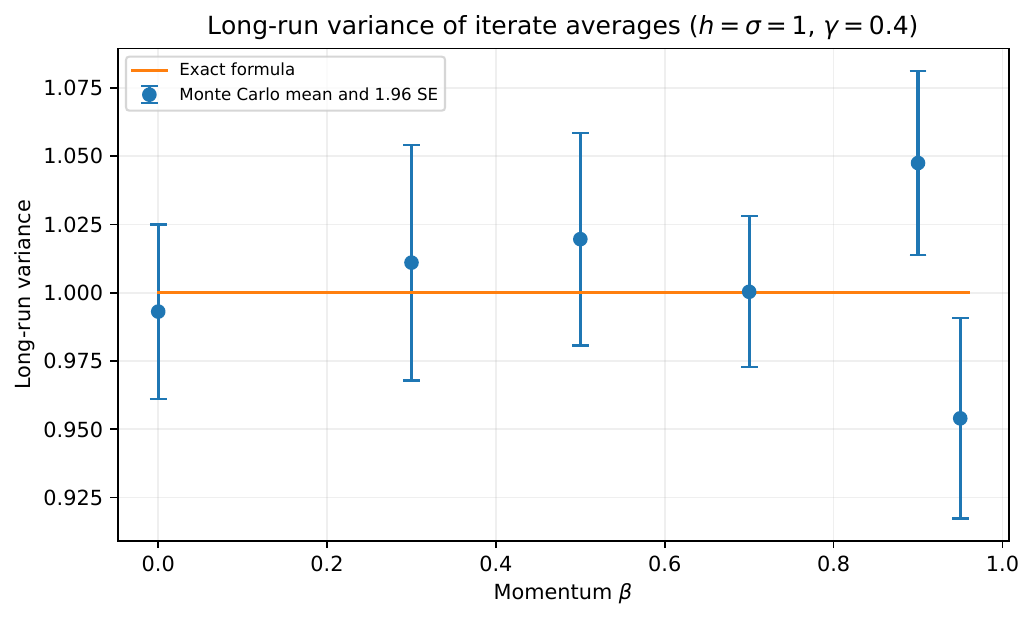}
 \caption{Long-run variance for the same parameters, estimated by nonoverlapping batch means.  The exact value is one for every momentum, although the marginal variance in Figure~\ref{fig:variance} changes substantially.}
 \label{fig:lrv}
\end{figure}
\FloatBarrier

\section{Discussion and extensions}
\label{sec:discussion}

The theory separates moment assumptions, momentum parameters, and the information available about curvature.  The direct Perron theorem has a restrictive momentum range, but provides a simple finite-$p$ criterion.  The explicit quadratic theorem covers every fixed momentum in mean square, with a conservative step interval.  The power-Lyapunov theorem supplies finite-$p$ contraction for arbitrary fixed momentum without adding a second moment when $p<2$, but its quantitative interval can be very small.  General quadratic certificates improve some of these ranges; under samplewise sectors, verified endpoint matrices can make the difference particularly large.  No sufficient certificate is identified with the true nonlinear stability boundary.

The exact scalar random-curvature calculation helps distinguish certificate loss from genuine instability.  Its phase transition at $L_2/\mu=\sqrt3$ and its constant, square-root, and linear high-momentum scales are properties of that admissible scalar model.  They are necessary benchmarks for a universal theorem, not a solution of the worst-case nonlinear or multivariate stability problem.  In particular, the favorable scalar behavior at $\beta=1$ does not extend the general GMC theorems to unit momentum.

The statistical results use the following established mechanism, developed for nonlinear autoregressions by \citet{ChenWu2016}, after the algorithm-specific contraction has been verified:
\[
 \text{GMC}\ \Longrightarrow\ \text{causal stationarity and physical dependence}
 \ \Longrightarrow\ \text{limit theory for the observed orbit}.
\]
The central limit theorem requires a second-moment contraction.  The sharper Gaussian coupling requires the stated higher moment and its own contraction condition.  These conclusions center at the invariant mean, which need not equal the optimizer for a fixed nonzero step.  They concern initialization robustness, not a conditional limit theorem given an arbitrary random environment.

Several questions remain quantitatively important.  Useful all-momentum regions under only a fractional moment would improve on the existence interval of Theorem~\ref{thm:allp}.  Optimizing a general mean-only metric, richer quadratic constraints, or multistep contraction may improve the high-momentum scaling.  A look-ahead coefficient $\nu\in[0,1]$ interpolates between heavy ball and Nesterov through
\[
 Y_k=\Theta_k+\nu\beta(\Theta_k-\Theta_{k-1}),\qquad
 \Theta_{k+1}=\Theta_k+\beta(\Theta_k-\Theta_{k-1})-\gamma G(Y_k,X_{k+1});
\]
its optimal certificates need not interpolate linearly.  Parameter-derivative processes and invariant-bias expansions could support Richardson--Romberg corrections, complementing stochastic modified-equation and continuous-time analyses \citep{LiTaiE2019,SirignanoSpiliopoulos2020,LiuChenZhouZhao2021,JinEtAl2025}.  Online covariance estimation is another natural application of the dependence bounds \citep{Wu2009,ZhuChenWu2023}.  Proximal or projected recursions would require checking nonexpansiveness in the \emph{augmented contraction metric}, which is not automatic from Euclidean nonexpansiveness.  Nonconvex extensions would need additional recurrence and communication assumptions before global uniqueness could be asserted; see related work on benign nonconvex acceleration \citep{GuptaWojtowytsch2024}.

Finally, the exact quadratic calculations distinguish transient acceleration from first-order statistical efficiency.  Momentum changes the transient roots, marginal variance, and nonzero-frequency spectrum, but the long-run covariance of the full average remains $H^{-1}\Omega H^{-1}$.  This distinction, together with the nonuniformity of the small-step expansion near unit momentum, is essential when selecting momentum for online inference.

\clearpage
\appendix

\section{Power inequalities and proof of Proposition~\ref{prop:base}}
\label{app:base}

For completeness, we reproduce the analytic inequalities underlying the ordinary-SGD module.  They are the part of the GMC argument that permits noninteger $p$ and infinite variance.

\begin{lemma}[Euclidean power remainders]
\label{lem:power}
For $x,y\in\R^d$:
\begin{enumerate}[label=\textup{(\roman*)},leftmargin=2.2em]
\item if $p\geq2$,
\begin{equation}
 \left|\abs{x+y}^p-\abs x^p-p\abs x^{p-2}\ip{x}{y}\right|
 \leq(\abs x+\abs y)^p-\abs x^p-p\abs x^{p-1}\abs y;
 \label{eq:power-large}
\end{equation}
\item if $1<p<2$,
\begin{equation}
 \abs{x+y}^p
 \leq\abs x^p+p\abs x^{p-2}\ip{x}{y}
 +2^{2-p}\abs y^p.
 \label{eq:power-small}
\end{equation}
\end{enumerate}
\end{lemma}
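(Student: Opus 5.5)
The two parts will be handled separately. Both rest on the one-variable function $f(t)=\abs{x+ty}^p$ on $[0,1]$, which has $f(0)=\abs x^p$ and $f'(0)=p\abs x^{p-2}\ip{x}{y}$. In each part the left-hand remainder is written as an integral of derivatives of $f$ and then compared with a scalar quantity.

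\emph{Part (i), $p\geq2$.} Here $f$ is $C^2$ and Taylor's formula with integral remainder gives $\abs{x+y}^p-\abs x^p-p\abs x^{p-2}\ip{x}{y}=\int_0^1(1-t)f''(t)\,dt$. With $z_t=x+ty$, the second derivative is
\[
 f''(t)=p\abs{z_t}^{p-2}\abs y^2+p(p-2)\abs{z_t}^{p-4}\ip{z_t}{y}^2 .
\]
Since $p\geq2$ and $\ip{z_t}{y}^2\leq\abs{z_t}^2\abs y^2$, this satisfies $0\leq f''(t)\leq p(p-1)\abs{z_t}^{p-2}\abs y^2$. The triangle inequality $\abs{z_t}\leq\abs x+t\abs y$ then bounds this by $g''(t)$, where $g(t)=(\abs x+t\abs y)^p$. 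Because $f''\geq0$, the remainder is nonnegative, so its absolute value is at most $\int_0^1(1-t)g''(t)\,dt$. This integral equals $(\abs x+\abs y)^p-\abs x^p-p\abs x^{p-1}\abs y$, which is the right side of \eqref{eq:power-large}.

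One technicality arises when $2<p<3$ and the segment passes through $0$: the formula for $f''$ has a removable singularity there. It is covered by the bound $\abs{z}^{p-4}\ip{z}{y}^2\leq\abs z^{p-2}\abs y^2$. Alternatively, perturb $x$ slightly and pass to the limit by continuity of both sides.

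\emph{Part (ii), $1<p<2$.} Here $f''$ may blow up, so I would use only the first-order formula. The map $\psi(v)=\abs v^{p-2}v$, with $\psi(0)=0$, satisfies $f'(t)=p\ip{\psi(x+ty)}{y}$. Hence
\[
 \abs{x+y}^p-\abs x^p-p\abs x^{p-2}\ip{x}{y}
 =p\int_0^1\ip{\psi(x+ty)-\psi(x)}{y}\,dt .
\]
The key ingredient is the H\"older estimate
\[
 \abs{\psi(a)-\psi(b)}\leq 2^{2-p}\abs{a-b}^{p-1},\qquad a,b\in\R^d .
\]
Granting it, Cauchy--Schwarz bounds the integrand by $2^{2-p}t^{p-1}\abs y^{p}$. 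Integrating, $p\int_0^1 t^{p-1}\,dt=1$, which gives exactly $2^{2-p}\abs y^p$ and hence \eqref{eq:power-small}.

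\emph{Main obstacle: the H\"older estimate for $\psi$.}
\begin{enumerate}[label=\textup{(\alph*)},leftmargin=2.2em]
\item By homogeneity of degree $p-1$ on both sides, I may assume $\abs a=1\geq\abs b$. Everything then lives in the plane spanned by $a,b$.
\item Write $b=\rho e$ with $0\leq\rho\leq1$ and $\abs e=1$, and put $c=\ip{a}{e}\in[-1,1]$. Then I must show
\[
 1-2\rho^{p-1}c+\rho^{2p-2}\leq 2^{4-2p}\bigl(1-2\rho c+\rho^2\bigr)^{p-1}.
\]
\item The left side is affine in $c$, while the right side is a concave power of an affine function of $c$. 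So it suffices to check the endpoints $c=\pm1$.
\item At $c=1$ the inequality reduces to $1-\rho^{p-1}\leq 2^{2-p}(1-\rho)^{p-1}$. This follows from subadditivity of $s\mapsto s^{p-1}$, even with constant $1$.
\item At $c=-1$ it reduces to $1+\rho^{p-1}\leq2^{2-p}(1+\rho)^{p-1}$. This is concavity of $s\mapsto s^{p-1}$ applied at the points $1$ and $\rho$, and equality holds at $\rho=1$. This explains the sharp constant $2^{2-p}$.
\end{enumerate}

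Should the endpoint reduction in step (c) fail, I would instead cite this standard estimate for the $p$-Laplacian-type map $\psi$.
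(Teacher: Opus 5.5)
Your proof is correct, and in both parts it takes a genuinely different route from the paper.

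\emph{Part (i).} The paper rotates into the plane of $x,y$ and writes the centered remainder as a function of the cosine $u\in[-1,1]$. It then argues by differentiation that this function is unimodal, with largest absolute value at $u=1$. Your comparison of Taylor remainders, $0\le f''(t)\le p(p-1)(|x|+t|y|)^{p-2}|y|^2$, is shorter. It also shows the remainder is nonnegative, so the absolute value in the statement is superfluous. And it avoids comparing the two endpoint values $u=\pm1$, which the paper leaves implicit. There is one small slip: the factor $|z|^{p-4}$ is singular for all $2<p<4$, not just $2<p<3$. Your domination $|z|^{p-4}\langle z,y\rangle^2\le|z|^{p-2}|y|^2$ handles this uniformly, and in fact $|z|^p$ is $C^2$ for every $p\ge2$.

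\emph{Part (ii).} The paper normalizes by $|y|^p$ and maximizes the resulting function, which is concave in the cosine $\rho$, over $\rho\in[-1,1]$. It splits at $\omega=|x|/|y|=1/2$, and for $\omega<1/2$ it cites Proposition~1.8 of Pinelis for $(1-\omega)^p-\omega^p+p\omega^{p-1}\le2^{2-p}$. You instead integrate the first derivative and use the global H\"older bound $\bigl||a|^{p-2}a-|b|^{p-2}b\bigr|\le2^{2-p}|a-b|^{p-1}$. Your proof of that bound is correct: homogeneity reduces to $|a|=1\ge|b|$, and the concave-minus-affine structure in $c$ reduces to the endpoints $c=\pm1$, both of which you verify correctly.

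\emph{Comparison.} Your route is fully self-contained and explains where the constant comes from. The value $2^{2-p}$ is the sharp H\"older constant, attained at $a=-b$, and the identity $p\int_0^1t^{p-1}\,dt=1$ passes it through unchanged. The paper's pointwise optimization, by contrast, computes the exact worst case for each $\omega$; for instance, it is at most $p2^{1-p}$ when $\omega\ge1/2$. That makes it the route to refine if one wanted a better $p$-dependent constant, which the paper notes is possible. Your chain produces exactly $2^{2-p}$, which is all that is used downstream.
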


\begin{proof}
For $p\geq2$, rotate coordinates so that $x=\delta e_1$ and decompose $y=a\delta e_1+be_2$.  With $R=\abs y$ and $a\delta=Ru$, $u\in[-1,1]$, the centered remainder equals
\[
 f(u)=(\delta^2+2\delta Ru+R^2)^{p/2}-\delta^p-p\delta^{p-1}Ru.
\]
Differentiation shows that $f$ decreases and then increases, and its largest absolute value on $[-1,1]$ is attained at $u=1$.  This gives \eqref{eq:power-large}.

For $1<p<2$, after dividing by $\abs y^p$ when $x,y\neq0$, put $\omega=\abs x/\abs y$ and $\rho=\ip{x}{y}/(\abs x\abs y)$.  The normalized remainder is
\[
 \psi(\omega,\rho)
 =(\omega^2+1+2\omega\rho)^{p/2}
 -\omega^p-p\omega^{p-1}\rho.
\]
Maximizing over $\rho\in[-1,1]$ gives either the interior point $\rho=-1/(2\omega)$ or the endpoint $\rho=-1$.  If $\omega\geq1/2$, the interior maximum is $(p/2)\omega^{p-2}\leq p2^{1-p}\leq2^{2-p}$.  If $0<\omega<1/2$, the maximum is $(1-\omega)^p-\omega^p+p\omega^{p-1}$, bounded by $2^{2-p}$ using Proposition~1.8 of \citet{Pinelis2015}.  The constant $2^{2-p}$ is a convenient valid upper bound, not the best constant for a fixed $p$.  Multiplying back by $\abs y^p$ proves \eqref{eq:power-small}.
\end{proof}

\begin{proof}[Proof of Proposition~\ref{prop:base}]
Fix $\theta,\vartheta$, set
\[
 x=\theta-\vartheta,
 \qquad
 y=-\gamma\{G(\theta,X_0)-G(\vartheta,X_0)\}.
\]
When $p\geq2$, first justify the scalar expected-remainder step.  For fixed $a\geq0$, put
\[
 \phi_a(t)=(a+t^{1/p})^p-a^p-pa^{p-1}t^{1/p},\qquad t\geq0.
\]
For $t>0$, writing $u=t^{1/p}$ gives
$\phi_a'(t)=(1+a/u)^{p-1}-(a/u)^{p-1}$.
As $u$ increases this derivative decreases: the function
$r\mapsto(1+r)^{p-1}-r^{p-1}$ is nondecreasing for $p\geq2$.
Thus $\phi_a$ is concave, extends continuously at zero, and Jensen's inequality yields
\begin{equation}
 \E\{(a+|y|)^p-a^p-pa^{p-1}|y|\}
 \leq (a+\|y\|_p)^p-a^p-pa^{p-1}\|y\|_p.
 \label{eq:jensen-remainder}
\end{equation}
(The case $a=0$ is linear.)  This is the required justification; it is not obtained by simply substituting an $L^p$ norm in the negative linear term.  Lemma~\ref{lem:power}(i), mean monotonicity, and \eqref{eq:jensen-remainder} now give
\begin{align*}
 \E\abs{x+y}^p
 &\leq\abs x^p-p\mu\gamma\abs x^p\\
 &\quad+\E(\abs x+\abs y)^p-\abs x^p
 -p\abs x^{p-1}\E\abs y\\
 &\leq\left\{(1+\gamma L_p)^p-p\gamma L_p-p\mu\gamma\right\}\abs x^p.
\end{align*}
For $1<p<2$, Lemma~\ref{lem:power}(ii) yields
\begin{align*}
 \E\abs{x+y}^p
 &\leq\abs x^p-p\mu\gamma\abs x^p
 +2^{2-p}\gamma^pL_p^p\abs x^p.
\end{align*}
Taking $p$th roots gives \eqref{eq:base-contract} with \eqref{eq:qdef}.  The step-size characterizations follow by elementary monotonicity of the right-hand sides.
\end{proof}

\section{Additional details for the iterated-random-function construction}
\label{app:irf}

This appendix records a general lemma used implicitly in the main proofs.

\begin{lemma}[Backward construction in a product $L^p$ metric]
\label{lem:backward}
Let $\Phi_X:E\to E$ be an i.i.d. random recursion on a finite-dimensional normed space.  Suppose there is a complete metric $\mathfrak D_p$ on $L^p(E)$, equivalent to the componentwise $L^p$ norm, and an $r<1$ such that for any past-measurable $U,V$ independent of a fresh $X$,
\begin{equation}
 \mathfrak D_p\{\Phi_X(U),\Phi_X(V)\}
 \leq r\mathfrak D_p(U,V).
 \label{eq:abstract-contract}
\end{equation}
If $\mathfrak D_p(\Phi_X(z_0),z_0)<\infty$ for one deterministic anchor $z_0$, then there is a unique causal stationary solution with finite $p$th moment, and forward synchronous coupling decays at rate $r^k$.
\end{lemma}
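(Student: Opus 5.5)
The plan is the standard backward-iteration (Letac/Wu--Shao) argument, carried out in the metric $\mathfrak D_p$. Write $\mathcal L$ for the space of random elements of $E$ with finite $p$th moment, equipped with $\mathfrak D_p$; it is complete by hypothesis. The first step is to extend \eqref{eq:abstract-contract} by iteration. Suppose $U,V$ are measurable with respect to $\sigma(\ldots,X_{-n})$ and we apply the $n$ fresh maps $\Phi_{X_0}\circ\cdots\circ\Phi_{X_{-n+1}}$. Each map's innovation is independent of its input, and induction then gives
\[
 \mathfrak D_p\{\Phi_{X_0}\circ\cdots\circ\Phi_{X_{-n+1}}(U),\Phi_{X_0}\circ\cdots\circ\Phi_{X_{-n+1}}(V)\}\leq r^n\mathfrak D_p(U,V).
\]
I will also record that $\mathfrak D_p$ depends only on the joint law of the pair, which is automatic for $L^p$-type metrics. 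This gives shift invariance: $\mathfrak D_p(\Phi_{X_{-n}}(z_0),z_0)=\mathfrak D_p(\Phi_X(z_0),z_0)=:c<\infty$.

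Next I construct the stationary state. Set $B_n=\Phi_{X_0}\circ\cdots\circ\Phi_{X_{-n+1}}(z_0)$ and $B_0=z_0$. Then $B_{n+1}$ equals the same $n$-fold composition applied to $\Phi_{X_{-n}}(z_0)$ in place of $z_0$, and both inputs are independent of $X_{-n+1},\ldots,X_0$. Hence $\mathfrak D_p(B_{n+1},B_n)\leq r^nc$, so $(B_n)$ is Cauchy. Its limit $Z_0^\circ$ is $\sigma(\ldots,X_{-1},X_0)$-measurable, since $L^p$ limits of measurable functions are measurable modulo null sets after passing to an a.s. convergent subsequence. I define $Z_k^\circ$ by applying the shift, which makes the sequence stationary and causal. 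To show that it solves the recursion, I note that $\Phi_{X_{k+1}}(B_n\text{ at time }k)$ is the $(n+1)$-term backward composition at time $k+1$. Applying the one-step contraction to the pair (approximant, limit), both of which are independent of $X_{k+1}$, the left side converges in $\mathfrak D_p$ to $\Phi_{X_{k+1}}(Z_k^\circ)$, while it also converges to $Z_{k+1}^\circ$. Therefore $Z_{k+1}^\circ=\Phi_{X_{k+1}}(Z_k^\circ)$ a.s. The finite $p$th moment follows because $\mathfrak D_p(Z_0^\circ,z_0)\leq c/(1-r)$ and the metric is equivalent to the componentwise norm.

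Two facts then finish the argument. For forward coupling, if $Z_0$ is deterministic (or independent of the future) with finite moment, iterating the contraction $k$ times gives $\mathfrak D_p(Z_k,Z_k^\circ)\leq r^k\mathfrak D_p(Z_0,Z_0^\circ)$. For uniqueness, suppose $\widetilde Z^\circ$ is another causal stationary solution with finite $p$th moment, driven by the same innovations. For each $n$ we have $\mathfrak D_p(Z_0^\circ,\widetilde Z_0^\circ)\leq r^n\mathfrak D_p(Z_{-n}^\circ,\widetilde Z_{-n}^\circ)=r^n\mathfrak D_p(Z_0^\circ,\widetilde Z_0^\circ)$ by stationarity of the pair. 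This forces the distance to be zero, so the two solutions agree a.s.

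I expect the main obstacle to be the bookkeeping in the identification step: making precise that $\mathfrak D_p$ is law-determined for pairs (to justify the shift invariance), and that it is a genuine metric on a.s. equivalence classes, so that zero distance means a.s. equality. I will first check these properties concretely for the product metric $a\|\cdot\|_p+b\|\cdot\|_p$ and the quadratic metric $\|V^{1/2}\|_p$, and then state them as the standing interpretation of the lemma's hypothesis.
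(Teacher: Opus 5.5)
Your proposal is correct and follows essentially the same backward-iteration argument as the paper. The shared steps are the bound $\mathfrak D_p(B_{n+1},B_n)\le r^n\mathfrak D_p\{\Phi_{X_{-n}}(z_0),z_0\}$, a causal $L^p$ limit, uniqueness from $\mathfrak D_p(Z_0^\circ,\widetilde Z_0^\circ)\le r^n\mathfrak D_p(Z_{-n}^\circ,\widetilde Z_{-n}^\circ)$ plus stationarity, and forward iteration; your one-step-contraction identification of the recursion fills in what the paper compresses into ``shift covariance.'' The law-determinedness bookkeeping you flag can be sidestepped by the assumed equivalence with the componentwise $L^p$ norm, which already keeps $\mathfrak D_p\{\Phi_{X_{-n}}(z_0),z_0\}$ and $\mathfrak D_p(Z_{-n}^\circ,\widetilde Z_{-n}^\circ)$ bounded uniformly in $n$.
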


\begin{proof}
Let
\[
 B_n(z_0)=\Phi_{X_0}\circ\Phi_{X_{-1}}\circ\cdots\circ
 \Phi_{X_{-n+1}}(z_0).
\]
The first $n$ maps in $B_{n+1}$ and $B_n$ are common, so \eqref{eq:abstract-contract} gives
\[
 \mathfrak D_p\{B_{n+1}(z_0),B_n(z_0)\}
 \leq r^n\mathfrak D_p\{\Phi_{X_{-n}}(z_0),z_0\}.
\]
The right-hand side has constant finite expectation and is summable in $n$.  Hence $B_n(z_0)$ is Cauchy in $L^p$ and converges to a causal limit $Z_0^\circ$.  Shift covariance gives a stationary sequence satisfying the recursion.  Starting the backward composition from any other deterministic point changes the result by at most $r^n$ times the initial distance, so the limit is anchor-independent.  If $Z^\circ$ and $\widetilde Z^\circ$ are two stationary solutions under common innovations, then
\[
 \mathfrak D_p(Z_0^\circ,\widetilde Z_0^\circ)
 \leq r^n\mathfrak D_p(Z_{-n}^\circ,\widetilde Z_{-n}^\circ)
 =r^n\mathfrak D_p(Z_0^\circ,\widetilde Z_0^\circ),
\]
forcing equality to zero.  The forward bound follows by iterating \eqref{eq:abstract-contract}.
\end{proof}

\section{Algebraic verification of the all-\texorpdfstring{$\beta$}{beta} energy}
\label{app:allbeta}

For reference, we collect the exact transformations behind Theorem~\ref{thm:allbeta}.  From \eqref{eq:diff-recursion},
\begin{align*}
 u_{k+1}
 &=x_{k+1}+\frac{\beta}{d_\beta}s_{k+1}\\
 &=x_k+\beta s_k-\gamma h_{k+1}
 +\frac{\beta}{d_\beta}(\beta s_k-\gamma h_{k+1})\\
 &=x_k+\frac{\beta}{d_\beta}s_k
 -\frac{\gamma}{d_\beta}h_{k+1}.
\end{align*}
Also
\[
 u_k=x_k+\frac{\beta}{d_\beta}s_k
 =y_k+\frac{\beta^2}{d_\beta}s_k.
\]
For arbitrary $a>0$,
\begin{align*}
 V_{k+1}(a)-V_k(a)
 &=-\frac{2\gamma}{d_\beta}\ip{u_k}{h_{k+1}}
 -2a\beta\gamma\ip{s_k}{h_{k+1}}\\
 &\quad+\gamma^2(d_\beta^{-2}+a)\abs{h_{k+1}}^2
 -a(1-\beta^2)\abs{s_k}^2.
\end{align*}
All coefficients in Theorem~\ref{thm:allbeta} and Proposition~\ref{prop:optimal-weight} follow by taking conditional expectations, inserting $u=y+\beta^2s/d_\beta$, and using \eqref{eq:mono-y}--\eqref{eq:h2}.

\section{Further quadratic formulas}
\label{app:quadratic}

For the scalar model, the lag-one correlation is
\begin{equation}
 \frac{\Cov(e_k,e_{k-1})}{\Var(e_k)}
 =\frac{(1+\beta)(1-\gamma h)}
 {1+\beta(1-\gamma h)}.
 \label{eq:lagone}
\end{equation}
This follows from $r_1=a v/(1-c)$ in the proof of Corollary~\ref{cor:variance}.  The spectral density is
\begin{equation}
 f_e(\omega)
 =\frac{\gamma^2\sigma^2}{2\pi}
 \left|1-(1+\beta)(1-\gamma h)e^{-i\omega}
 +\beta(1-\gamma h)e^{-2i\omega}\right|^{-2}.
 \label{eq:spectral}
\end{equation}
At frequency zero, $2\pi f_e(0)=\sigma^2/h^2$, in agreement with Theorem~\ref{thm:lrcov-quadratic}.  At nonzero frequencies, both $\gamma$ and $\beta$ alter the spectrum; the invariance is specifically a zero-frequency statement relevant to full iterate averaging.

\section{A positivity fact for the covariance operator}
\label{app:cone}

\begin{lemma}[Spectral radius of a positive covariance operator]
\label{lem:cone-perron}
Let $\mathcal T$ be a linear map on the real symmetric $m\times m$ matrices such that $C\succeq0$ implies $\mathcal T(C)\succeq0$.  Its spectral radius is a nonnegative real eigenvalue with a nonzero positive-semidefinite eigenmatrix.
\end{lemma}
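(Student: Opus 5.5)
The plan is to reduce to the classical Krein--Rutman theorem for a proper cone in finite dimensions, proved by hand via a resolvent or a compactness argument. Let $\mathcal S$ be the space of real symmetric $m\times m$ matrices and $\mathcal K=\{C\succeq0\}$. This cone is closed, convex and pointed, and it has nonempty interior (it contains $I_m$). Moreover $\mathcal K-\mathcal K=\mathcal S$, and $\mathcal T(\mathcal K)\subseteq\mathcal K$. Put $\rho=\rho(\mathcal T)$. If $\rho=0$, then $\mathcal T$ is nilpotent. Take the largest $j$ with $\mathcal T^j\ne0$; then some $C\succeq0$ has $\mathcal T^j(C)\ne0$, because $\mathcal K$ spans $\mathcal S$. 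The matrix $E=\mathcal T^j(C)\succeq0$ is nonzero and satisfies $\mathcal T(E)=0$, which is the required eigenmatrix for the eigenvalue $0$. So assume $\rho>0$, and rescale so that $\rho=1$.

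The key step is the resolvent. For real $\lambda>1$, write $R_\lambda=(\lambda-\mathcal T)^{-1}=\sum_{j\ge0}\lambda^{-j-1}\mathcal T^j$. The Neumann series converges because $\lambda>\rho$, and each term maps $\mathcal K$ into $\mathcal K$, so $R_\lambda$ is cone-preserving. Next show that $\|R_\lambda\|\to\infty$ as $\lambda\downarrow1$. Some eigenvalue $\zeta$ of $\mathcal T$ (on the complexification) has $|\zeta|=1$, and $\|(\lambda-\mathcal T)^{-1}\|\ge 1/\operatorname{dist}(\lambda,\operatorname{spec}\mathcal T)\ge 1/(\lambda-1)$. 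This estimate needs care, since $\operatorname{dist}(\lambda,\zeta)\geq\lambda-1$ goes the wrong way for a lower bound.

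I expect this to be the main obstacle, and I would avoid it with a growth argument instead. Because $\rho=1$ means $\|\mathcal T^j\|^{1/j}\to1$, the series $\sum_j\lambda^{-j-1}\|\mathcal T^j\|$ diverges at $\lambda=1$ whenever $\|\mathcal T^j\|\not\to0$. More robustly, use Pringsheim's theorem for power series with operator coefficients that are positive on $\mathcal K$. The radius of convergence of $\sum_j z^{j}\mathcal T^j$ is exactly $1$, and the positivity of the coefficients, measured through the linear functional $C\mapsto\tr C$, forces a singularity at the real point $z=1$.

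Concretely, I would use the fact that for $C\succeq0$ one has $\|C\|\le\tr C$. This gives $\|\mathcal T^j\|\le c\,\tr\mathcal T^j(I_m)$, because every symmetric $A$ with $\|A\|\le1$ satisfies $-I\preceq A\preceq I$, and positivity of $\mathcal T^j$ then yields $-\mathcal T^j(I)\preceq\mathcal T^j(A)\preceq\mathcal T^j(I)$. Hence the scalar series $\sum_j\lambda^{-j-1}\tr\mathcal T^j(I_m)$ has radius of convergence in $\lambda$ equal to exactly $\rho=1$. Since its coefficients are nonnegative, it diverges as $\lambda\downarrow1$, so $\tr R_\lambda(I_m)\to\infty$.

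Now set $E_\lambda=R_\lambda(I_m)/\tr R_\lambda(I_m)$. This lies in the compact set $\{C\succeq0:\tr C=1\}$, and it satisfies
\[
 \lambda E_\lambda-\mathcal T(E_\lambda)=I_m/\tr R_\lambda(I_m)\to0 .
\]
Extract a subsequence $\lambda_n\downarrow1$ with $E_{\lambda_n}\to E$. Then $E\succeq0$, $\tr E=1$ (so $E\ne0$), and by continuity $\mathcal T(E)=E$. Undoing the rescaling gives $\mathcal T(E)=\rho E$, so $\rho$ is an eigenvalue with a nonzero positive-semidefinite eigenmatrix, which completes the argument.
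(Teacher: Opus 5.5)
Your route is correct and differs from the paper's, but one inference needs repairing.

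\textbf{The step to fix.} You conclude that $\sum_{j\geq0}\lambda^{-j-1}\tr\mathcal T^j(I_m)$ diverges as $\lambda\downarrow1$ because its radius of convergence is exactly $1$ and its coefficients are nonnegative. That does not follow. The coefficients $a_j=j^{-2}$ have both properties, yet the sum stays bounded. Pringsheim's theorem likewise only puts a singularity at the real boundary point; it does not give a blow-up.

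What is missing is a lower bound on the coefficients, and your order-unit estimate already supplies it. Use the operator norm on $\mathcal S$ induced by the spectral norm. Then $\tr\mathcal T^j(I_m)\geq\|\mathcal T^j(I_m)\|=\|\mathcal T^j\|\geq\rho(\mathcal T^j)=\rho^j=1$. Hence $\tr R_\lambda(I_m)\geq\sum_{j\geq0}\lambda^{-j-1}=(\lambda-1)^{-1}\to\infty$.

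Insert this line and delete the abandoned distance-to-spectrum paragraph. The rest of your argument is then sound:
\begin{itemize}
\item positivity of $R_\lambda$;
\item the normalization $E_\lambda$ and the identity $\lambda E_\lambda-\mathcal T(E_\lambda)=I_m/\tr R_\lambda(I_m)$;
\item compactness of the trace-one slice;
\item the separate nilpotent case $\rho=0$.
\end{itemize}

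\textbf{Comparison with the paper.} The paper perturbs to $\mathcal T_\epsilon(C)=\mathcal T(C)+\epsilon\tr(C)I_m$. It gets a positive-definite eigenmatrix $P_\epsilon$ from Brouwer's fixed-point theorem on $\{C\succeq0:\tr C=1\}$. It then uses the sandwich $-a_CP_\epsilon\preceq C\preceq a_CP_\epsilon$ to show that the eigenvalue $\lambda_\epsilon$ equals $\rho(\mathcal T_\epsilon)$. Finally it lets $\epsilon\downarrow0$, using compactness and continuity of the spectral radius.

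Your resolvent proof is the classical Krein--Rutman/Birkhoff-type argument, and it avoids both the fixed-point theorem and eigenvalue continuity. The blow-up of $R_\lambda(I_m)$ at $\lambda=\rho$ pins the limiting eigenvalue to $\rho$ directly, and your approximate eigenmatrices $E_\lambda$ are explicit. The paper's perturbation, in turn, handles $\rho=0$ uniformly, because $\lambda_\epsilon\geq\epsilon m>0$; you need the nilpotent case separately. Both proofs rest on the same order-unit estimate, with $I_m$ in your argument playing the role of $P_\epsilon$ in the paper's.
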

\begin{proof}
For $\epsilon>0$, put $\mathcal T_\epsilon(C)=\mathcal T(C)+\epsilon\tr(C)I_m$.  This maps every nonzero positive-semidefinite matrix to a positive-definite matrix.  On the compact convex set $\{C\succeq0:\tr C=1\}$, the continuous map
\[
 C\longmapsto\frac{\mathcal T_\epsilon(C)}{\tr\{\mathcal T_\epsilon(C)\}}
\]
has a fixed point by the finite-dimensional fixed-point theorem.  Thus there are $P_\epsilon\succ0$, $\tr P_\epsilon=1$, and $\lambda_\epsilon>0$ with $\mathcal T_\epsilon(P_\epsilon)=\lambda_\epsilon P_\epsilon$.  For every symmetric $C$ there is a finite $a_C$ such that $-a_CP_\epsilon\preceq C\preceq a_CP_\epsilon$.  Positivity and iteration give
\[
 -a_C\lambda_\epsilon^nP_\epsilon\preceq\mathcal T_\epsilon^n(C)
 \preceq a_C\lambda_\epsilon^nP_\epsilon.
\]
Applying this to a finite basis bounds the operator norm of $\mathcal T_\epsilon^n$ by a fixed constant times $\lambda_\epsilon^n$.  Hence $\rho(\mathcal T_\epsilon)\leq\lambda_\epsilon$, and the reverse inequality follows from the eigenmatrix.  Along a sequence $\epsilon\downarrow0$, compactness gives a limit $P\succeq0$ with $\tr P=1$.  Continuity of the spectral radius for finite-dimensional matrices and of the map gives $\mathcal T(P)=\rho(\mathcal T)P$.  This proves the claim, including the case of zero spectral radius.
\end{proof}

\bibliographystyle{plainnat}
\bibliography{Nesterov_GMC_References_ChenWu2016}

\end{document}